\title{Safety Aware Changepoint Detection for Piecewise i.i.d. Bandits}
\author[1]{\href{mailto:<smukherjee27@wisc.edu>?Subject=Your UAI 2022 paper}{Subhojyoti Mukherjee}{}}
\affil[1]{%
    Electrical \& Computer Engineering Department\\
    UW-Madison\\
    Madison, Wisconsin, USA
}
\begin{document}
\maketitle

\begin{abstract}
In this paper, we consider the setting of piecewise i.i.d. bandits under a safety constraint. In this piecewise i.i.d. setting, there exists a finite number of changepoints where the mean of some or all arms change simultaneously. We introduce the safety constraint studied in \citet{wu2016conservative} to this setting such that at any round the cumulative reward is above a constant factor of the default action reward. We propose two actively adaptive algorithms for this setting that satisfy the safety constraint, detect changepoints, and restart without the knowledge of the number of changepoints or their locations. We provide regret bounds for our algorithms and show that the bounds are comparable to their counterparts from the safe bandit and piecewise i.i.d. bandit literature. We also provide the first matching lower bounds for this setting.  Empirically, we show that our safety-aware algorithms perform similarly to the state-of-the-art actively adaptive algorithms that do not satisfy the safety constraint.
\end{abstract}

\section{Introduction}
Consider a startup XYZ that wants to maximize revenue collection from ad placements when users land on their webpage. Revenue is generated when users click on the ads. The user preferences change over time but not quickly enough so that XYZ can focus on maximizing the revenue collection for some time before changing its strategy again. To do this XYZ must detect the user's new preferences and modify its suggestions. Due to budget constraints XYZ must make sure that the aggregate revenue collection must not fall below a certain threshold. The difficulty is that XYZ does not know placing which ads will surely result in revenue above the threshold. This constrains XYZ from randomly placing different ads on their landing webpage. On consultation with the industry experts XYZ comes up with a default action that is known historically to be a highly favored by users. Hence, XYZ comes up with a new safety constraint such that when their algorithm is unsure of which ad to place for some user it can fall back on this default action. The learning algorithm now has to balance between exploration under safety constraints and exploitation in this slowly changing environment.



The dilemma faced by XYZ can be modeled as a sequential decision making problem in the piecewise i.i.d. bandit setting under safety constraints. In the piecewise i.i.d. bandit setting the learner is provided with a set of arms $i\in \{0, 1,2,\ldots,K\}$ where we index the default arm (baseline) as $i=0$, and there exists a finite number of changepoints where the mean $\mu_i$ of one or more arms may change simultaneously. At every round $s\in \{1,2,\ldots,T\}$ the learner selects an action $I_s\in \{0, 1,2,\ldots,K\}$ and observes the feedback $X_{I_s}(s)$ where $\E[X_{I_s}(s)] =: \mu_{I_s}(s)$. Define $i^*$ as the optimal arm such that $\mu_{i^*}(s) > \mu_i(s)$ for all $i$. The goal of the learner is to maximize reward by quickly finding the optimal arm $i^*$ under the following safety constraint 
\begin{align}
    \sum_{s=1}^{t} X_{I_{s}}(s) \geq(1-\alpha) \sum_{s=1}^{t} X_{0}(s) \label{eq:constraint1}
\end{align}
for all $t \in\{1, \ldots, T\}$, where $\alpha\in (0,1]$ is the risk parameter, and $T$ is the horizon. The constraint in \cref{eq:constraint1} represents how much the learner is allowed to risk in conducting the exploration. For example, if $\alpha \rightarrow 0$ the learner is expected to sample arms that are at least better than the baseline arm $0$.  
The baseline arm represents expert's belief over the current user preferences and may change over time. Similar to the setting of \citet{wu2016conservative} we assume that the mean of the baseline arm is not known to the learner. Note that \citet{wu2016conservative} is not suited for the piecewise i.i.d setting. By change of belief of the expert we mean that only the value of the baseline arm changes and not the index.

The challenge in our setting is three-fold: \textbf{1)} Ensure that the safety constraint \eqref{eq:constraint1} on cumulative reward is satisfied. 
Consider the scenario where the risk parameter $\alpha = 1$. In this case satisfying the \cref{eq:constraint1} is easy as choosing any arm satisfies the constraint. However as $\alpha \rightarrow 0$ maintaining the safety constraint becomes difficult as exploration becomes limited or it will violate the safety constraints. \textbf{2)} Adapt to the piecewise i.i.d. nature of the environment. Observe that as the means of arms change abruptly at changepoints the algorithm must adapt or the safety constraint will be violated. Further, to detect changepoints the algorithm must conduct additional exploration without violating the \cref{eq:constraint1}. Note that \citet{wu2016conservative} do not consider any such piecewise i.i.d. setting. \textbf{3)} Finally, minimize the cumulative regret by quickly finding the optimal arm for each of the time segment between two changepoints. Our contributions are as follows:

1) We formulate the novel piecewise i.i.d. bandit setting under safety constraints. We show that the current state-of-the-art conservative algorithms \citep{wu2016conservative} as well as the changepoint detection algorithms are not equipped to handle the safety constraint in \cref{eq:constraint1} in this setting. 

2) 
We propose two actively adaptive algorithms that detect changepoints and restart by erasing past history of interactions. Simultaneously these algorithms ensure that the safety constraint is satisfied. The current changepoint detection algorithms \citep{besson2019generalized, mukherjee2019distribution, besson2020efficient} do not take into account the safety constraint and hence are not suited for our setting.

3) We provide theoretical guarantees for both of our algorithms and uncover new problem dependent terms that depends on the optimality gaps, changepoint gaps, the gap of the baseline arm, and the risk parameter $\alpha$. We also provide the first matching lower bounds for this setting. Empirically we show that our proposed methods perform comparably against safety oblivious changepoint detection algorithms.



\vspace*{-1em}
\section{Related Works}
\vspace*{-1em}
Our work lies at the intersection of two interesting areas: $1)$ Changepoint detection in piecewise i.i.d bandits, and $2)$ Safe Sequential Decision Making. In piecewise i.i.d bandits it is assumed that the change of mean (drift) of an arm are well separated and significant enough to be detected. The previous works in this setting are broadly classified into two groups, viz. passively adaptive and actively adaptive algorithms. Passively adaptive algorithms such as Discounted UCB (\ducb) \citep{kocsis2006discounted}, Sliding Window UCB (\swucb) \citep{garivier2011upper}, and Discounted Thompson Sampling (\dts) \citep{raj2017taming} do not try to detect the changepoints and only focus on minimizing the regret over a short window of the time horizon. On the contrary, actively adaptive algorithms such as \expr \citep{allesiardo2017non}, \cducb \citep{liu2017change}, \cusum \citep{liu2017change}, \mucb \citep{cao2018nearly}, \glrucb \citep{besson2019generalized, besson2020efficient}, \adswitch \citep{auer2019achieving}, and \ucbcpd \citep{mukherjee2019distribution}  try to detect the changepoints and restart by erasing all the past history of interactions. Actively adaptive algorithm like \glrucb\!\!, \mucb\!\!, \ucbcpd has several advantages over passively adaptive algorithms. In environments where the changepoint gaps are large and well-separated the passively adaptive algorithms perform poorly (see \citep{besson2019generalized}). The \expr (an adaptive version of EXP3.S \citep{auer2002nonstochastic}) is more pessimistic than other actively adaptive algorithms like \ucbcpd\!\!, \glrucb as it uses the conservative exponential weighting algorithm EXP3 for changepoint detection, and hence, performs poorly in practice. \glrucb\!\! uses the Bernoulli generalized likelihood ratio test involving Kullback Leibler (KL) based divergence function as changepoint detector. The KL divergence function of \glrucb\!\! better exploits the geometry of (sub-)Bernoulli distributions and so it outperforms \mucb\!\!, \adswitch\!. Note that none of the above algorithms are safety aware. 

The safe sequential decision making setup has recently garnered a lot of attention in machine learning \citep{amodei2016concrete, TurchettaB019}. 
Closer to our setting are the works that study regret minimization in bandits under safety constraints such as \citet{wu2016conservative, KazerouniGAR17, AmaniAT19, GarcelonGLP20}. These works encode their safety requirements in the form of constraints on the cumulative rewards observed by the learner. This  setup also called conservative bandits as the exploration is constrained by the constraints on the cumulative reward. Note that while \citet{wu2016conservative} studies the unstructured stochastic and adversarial bandit setting, the \citet{KazerouniGAR17, AmaniAT19, GarcelonGLP20} study the linear bandit (structured) setting. Another line of related work \citep{MoradipariTA20, Pacchiano2006, khezeli2020safe} focuses on the idea of stagewise safety constraint where at every stage (round) the reward should be higher than a predetermined safety  threshold with high probability. Note that our setting of safety constraints on cumulative rewards cannot be directly applied to the stagewise setting. None of the above works deal with the setting of piecewise i.i.d bandits with slow drift (change of means). Note that while \citet{wu2016conservative} studies an adversarial setting, they do not take any assumption on the reward distributions and hence their exploration scheme is highly conservative for piecewise i.i.d. bandits. Similarly, conservative bandits studying the contextual (structured) bandit setting \citep{KazerouniGAR17} do not use the known  information about slow drift of means. Finally note that our setting is different than the thresholding bandit problem \citep{locatelli2016optimal, MukherjeeNSR17} where the goal is to find all the arms above a fixed threshold $B$ under the fixed budget setting.


\textbf{Notations:} Denote $[n] \coloneqq \{1,2,\ldots,n\}$. Define the set of arms as $[K]$ indexed from $i=1,2,\ldots,K$. The baseline arm is denoted by the index $0$. Note that the learner knows the index of the baseline arm but it does know the mean of the baseline arm. This is similar to the setting in \citet{wu2016conservative}. We define the set $[K]^+ \coloneqq [K]\cup\{0\}$ to indicate that baseline arm $0$ is included as well. We define the mean of the arm $i$ at round $s$ as $\mu_i(s)$, and the empirical mean of the arm till round $t$ as $\wmu_i(t)$. We denote the optimal arm as $i^*$ and the mean of the optimal arm at round $s$ as $\mu_{i^*}(s)$. The reward of the arm $i$ sampled at round $s$ is denoted by $X_i(s)$. We assume that the rewards are coming from a bounded distribution supported on $[0,1]$. We further denote the distribution of the $i$-th arm with mean $\mu_i(t)$ as $\nu(\mu_i(t))$. We denote the horizon (total rounds) as $T$. The safety threshold is denoted by $B$ and the risk parameter is denoted by $\alpha\in [0,1]$. 
For brevity we denote the sequence of rounds between $s$ to $t$ as $s:t$. For clarity of presentation we overload the notation $\mu_i(\cdot)$ to denote either the mean at round $s$ as $\mu_i(s)$ or the mean over the rounds $1:t$ as $\mu_i(1:t)$. Similarly, $N(1:t)$ denotes that number of pulls of $i$ from $1:t$.
Define the empirical mean $\wmu_i(1:t) \coloneqq \frac{\sum_{s=1}^t X_{I_s}\indic{I_s = i}}{\sum_{s=1}^t\indic{I_s = i}} = \frac{\sum_{s=1}^t X_{I_s}\indic{I_s = i}}{N_i(1:t)}$ where $I_s$ denotes the arm pulled at round $s$. 



\vspace*{-1em}
\section{\!\!Global Changepoint Detection}
We now define the setup for the Global Changepoint Setting (\gcs\!\!). Let the total number of changepoints till round $T$ be denoted by $G_T$ such that
\begin{align}
    G_T \!\coloneqq\! \#\left\{1 \leq s \leq T \mid \exists i\in [K]^+: \mu_{i}(s-1) \!\neq\! \mu_{i}(s)\right\}. \label{eq:G-T}
\end{align}
We define the global changepoints $t_{c_{0}} \!<\! t_{c_1} \!<\! t_{c_2} \!<\! \ldots \!<\! t_{c_{G_T}}$ such that the $g$-th  global changepoint is defined as:
\begin{align*}
    t_{c_g} \coloneqq \inf\{s > t_{c_{g-1}}: \forall i\in [K]^+, \mu_i(s-1) \neq \mu_i(s)\}.
\end{align*}
Hence at a global changepoint $t_{c_g}$ the mean of all the arms $i\in[K]^+$ change simultaneously. Let $t_{c_{0}} = 1$ by convention. Note, that the baseline mean changes at changepoints to signify the new belief of the experts based on updated user preferences. 
We define the changepoint segment between $t_{c_{g}}$ to $t_{c_{g+1}}-1$ as $\rho_{g}$. 
Note that the optimal arm for each changepoint segment $\rho_g$ may or may not be same. 
We further define a few notations. Let the confidence width of arm $i$ for rounds $1:t$ be defined as
\begin{align}
    \beta_i(1:t,\delta) = \sqrt{ \frac{2\log(4\log_2(t+1)/\delta)}{N_i(1:t)}} \label{eq:beta-delta}
\end{align}
with the standard condition that if $N_i(1:t) = 0$ then $\beta_i(1:t, \delta) = \infty$. In our case it  suffices to take the leading constant of $\beta_i(1:t,\delta)$ as $2$, though tighter bounds are known and can be used in practice, e.g. \citet{balsubramani2014sharp, tanczos2017kl, howard2021time}. These type of anytime bounds constructed with $\beta_i(1:t,\delta)$ are known to be tight in the sense that $\mathbb{P}\left(\bigcup_{t=1}^{\infty}\left\{\left|\widehat{\mu}_{i}(1: t)-\mu_{i}(1:t)\right| \geq \beta_i(1:t,\delta)\right\}\right) \leq \delta$ and that there exists an absolute constant $C \in(0,1)$ such that $\mathbb{P}\left(\left\{\left|\widehat{\mu}_{i}(1:t)-\mu_{i}(1:t)\right| \geq C \beta_i(1:t,\delta)\right.\right.$ for infinitely many $\left.\left.t \in \mathbb{N}\right\}\right)=1$ by the Law of the Iterated Logarithm \citep{hartman1941law}. Next we define the upper confidence bound for $i$ as 
\begin{align}
    U_i(1:t) \coloneqq \wmu_i(1:t) + \beta_i(1:t, \delta)
\end{align}
and the lower confidence bound from $1:t$ as
\begin{align}
    L_i(1:t) \coloneqq \wmu_i(1:t) - \beta_i(1:t,\delta).
\end{align}
We define the UCB arm $u_t$ at round $t$ as 
\begin{align}
    u_t \coloneqq \argmax_{i\in[K]} U_i(1:t) \label{eq:ucb-arm}
\end{align}
which is the arm with the highest uncertainty and needs to be explored more to get a better estimate of its true mean \citep{agrawal1995sample,auer2002finite}. Finally, we define the empirical safety budget as 
\begin{align}
    \hspace*{-2em}\wZ(1\!:\!t) \!\!\coloneqq\!\! 
    & \sum_{s=1}^{t-1}\! \! L_{I_s}(1\!:\!s) \!+\!  L_{u_t}(1\!:\!t) 
    \!-\!  (1\!-\!\alpha)\!\sum_{s=1}^t\!\! U_{0}(1\!:\!t)\!  \hspace*{-1.5em}\label{eq:safe-budget}
\end{align}
which quantifies by how much the safety constraint is being violated. Also recall that the baseline arm is indexed as $0$, and the learner does not know the mean of the baseline arm. This is similar to the second setting in \citet{wu2016conservative}.

\vspace*{-1.1em}
\subsection{Safe Global Restart Algorithm}
\vspace*{-0.6em}
In this section we introduce the Safe Global Restart (\glb\!\!) algorithm which is a safety aware global changepoint detection algorithm. \glb is an actively adaptive algorithm and so it restarts by erasing the history of interactions once it detects a changepoint. We define the parameter $r_i$ for the $i$-th arm as the last restart round when a changepoint was detected and the arm $i$ history was erased. We define the last restart vector
\begin{align*}
    \br \coloneqq \{r_1, r_2, \ldots, r_K\} \cup \{r_0\}.
\end{align*}
The safety budget for the \gcs is $\wZ(1:t)$ 
\begin{align}
    & \coloneqq \!\sum_{s=1}^{t-1}\! L_{I_s}(1:s)\! + \! U_{u_t}(1:t) 
    \!-\!  (1-\alpha)\sum_{s=1}^t\! U_{0}(1:t)  \nonumber\\
    & \overset{(a)}{=}\! \sum_{s=1}^{t-1}\! L_{I_s}(r_{I_s}:s) \!+\!  U_{u_t}(r_{u_t}:t) 
    \!- \! (1-\alpha)\sum_{s=1}^t\! U_{0}(r_{0}:t)\!
    \label{eq:safe-budget-gcs}
\end{align}
where, $(a)$ follows because when a changepoint is detected the history is erased for that arm $i \in [K^+]$.

We now state the main aspects of \glb.
\glb is initialized by sampling each arm once. Then at every round \glb decides to pull the UCB arm $u_t$ if $\wZ(1:t) \geq 0$ or the baseline arm $0$ if $\wZ(1:t) < 0$.
Then \glb samples the next arm, observes the reward $X_{I_{t}}(t)$ and updates the problem parameters. Finally \glb calls the \cpd changepoint detector sub-routine to detect any changepoint. If a changepoint is detected at round $t$ by \cpd then it erases the history of interactions for all arms (including baseline arm) and sets the restarting time for all arms $i\in[K]^+$ as $r_i = t$. We state the pseudo-code of the policy \glb in Algorithm \ref{alg:glb} and the key idea behind \cpd in the following \Cref{section:changepoint-detection}. 

\renewcommand{\algorithmiccomment}[1]{\hfill$\triangleright$\textit{#1}}

\begin{algorithm}[!th]
\caption{Safe Global Restart (\glb\!\!)}
\label{alg:glb}
\begin{algorithmic}[1]
\State \textbf{Input: } Risk parameter $\alpha\in [0,1)$
\State Set $r_i \!=\! 1, \forall i \in [K]^+$. Pull each arm once.
\For{$t= K^+ +1, K^+ +2,\ldots$}
\If{$\widehat{Z}(t) \geq 0$} 
\State Set $I_{t} = u_t$ from \cref{eq:ucb-arm} \Comment{Pull UCB arm}
\ElsIf{$\widehat{Z}(t) < 0$}
\State Set $I_{t} = 0$ \Comment{Baseline arm}
\EndIf
\State Pull $I_{t}$ and observe $X_{I_{t}}(t)$. 
\State Update $\wmu_{I_{t}}(r_{I_{t}}:t), N_{I_{t}}(r_{I_{t}}:t)$, and $\wZ(r_{I_{t}}:t)$ in \cref{eq:safe-budget}.
\State Call \cpd$(\br, t, \text{global})$ \Comment{Call \cpd}
\EndFor
\end{algorithmic} 
\end{algorithm}
\vspace*{-2em}

\subsection{Changepoint Detection}
\label{section:changepoint-detection}

The sequential changepoint detection has a long history in the statistical community \citep{basseville1993detection, wu2007inference}. We explain the sequential changepoint detection through the following example: Consider a single arm $i$. Let at some round $t$ we have a collection of i.i.d. samples $X_i(1), X_i(2), \ldots, X_i(t)$ from a bounded distribution that is supported on $[0,1]$. The goal of changepoint detection is to find out whether all the $t$ samples have come from the same distribution with mean $\mu_i(1:t)$ or there exist a changepoint $\tau_{c_g}\in\mathbb{N}$ such that $X_i(1), X_i(2), \ldots, X_i(\tau_{c_g}-1)$ have mean $\mu_i(1:\tau_{c_g}-1)$ while $X_i({\tau_{c_g}}), X_i({\tau_{c_g}+1}), \ldots, X_i({t})$ have a different mean $\mu_i(\tau_{c_g}:t) \neq \mu_i(1:\tau_{c_g}-1)$. For notational convenience let us denote $\mu_i(1:\tau_{c_g})$ as $\mu'$ and $\mu_i(\tau_{c_g}+1:t)$ as $\mu^{''}$ respectively.  Hence, a sequential changepoint detector is defined as a stopping time ${\tau}^{chg} < \infty$ that rejects the null hypothesis $\mathcal{H}_0: (\exists \mu': \forall i \in \mathbb{N}, \E[X_i] = \mu')$ in favor of the alternate hypothesis $\mathcal{H}_1: (\exists \mu^{''}\neq \mu', \tau_{c_g} \in \mathbb{N}: X_i(1), X_i(2), \ldots, X_i({\tau_{c_g}}) \sim \nu(\mu'), \text{ and }  X_i({\tau_{c_g} \!+\! 1}), X_i({\tau_{c_g} \!+\! 2}),\! \ldots,\! X_i({t})\! \sim\! \nu(\mu^{''}))$. Previous works have studied the Generalized Likelihood Ratio Test (GLRT) to detect the changepoints using this hypothesis testing idea. Many of the previous works \citep{wilks1938large, siegmund1995using,maillard2019sequential,besson2019generalized, besson2020efficient} that have studied this setting used Generalized Likelihood Ratio Test (GLRT) to detect changepoints. The GLRT test works as follows: We first calculate the GLRT statistic defined by 
\begin{align*}
    \log \dfrac{\sup_{\mu', \mu^{''}, \tau_{c_g}<t}  L\left(X_i({1}), \ldots, X_i({t}) ; \mu', \mu^{''}, \tau_{c_g}\right)}{ \sup_{\mu'} L\left(X_i({1}), \ldots, X_i({t}) ; \mu'\right) }
\end{align*}
where we denote the term $L(X_i({1}), \ldots, X_i({t}) ; \mu')$ and $L(X_i({1}), \ldots, X_i({t}) ; \mu', \mu^{''}, \tau)$ as the likelihood of the first $t$ observations under hypothesis $\mathcal{H}_{0}$ and $\mathcal{H}_{1}$ respectively. Now if the GLRT statistic crosses a threshold $\widetilde{\beta}(t,\delta)$ then it indicates that there exists a changepoint and the null hypothesis $\mathcal{H}_0$ is rejected. A similar type of test, called the CUSUM test \citep{page1954continuous, liu2017change} has also been studied where the distributions $\nu(\mu_1)$ and $\nu(\mu_2)$ are completely known. Note that GLRT works in the case when both distributions are unknown but they come from the same canonical exponential family. A detailed discussion on this can be found in \citet{maillard2019sequential}.

\textbf{Confidence-based scan statistic:} An alternative to the GLRT based scan statistics is the confidence-based scan statistic that have been studied in \citet{mukherjee2019distribution}. In the confidence based scan statistic the total number of samples of arm $i$ from $1:t$ is divided into slices, and for each slice $s$ a confidence interval is built of the form
\begin{align}
    \hspace{-1em}\wmu_i(1: s) \pm \beta_i(1:s, \delta) \text { and }\wmu_i(s+1: t) \pm \beta_i(s+1:t, \delta) \label{eq:confidence-scan}
\end{align}
where  $\beta_i(1:s, \delta)$ is from \cref{eq:beta-delta}. Now if there exists some $s$ at which the confidence intervals do \emph{not} overlap such that
\begin{align}
    \tau_{c_g} \!\!\coloneqq\!\! \inf &\bigg\{t \in \mathbb{N}: \!\exists i\! \in [K]^+, \!\exists s \in[1, t],\left|\wmu_i(1\!:\! s)\!-\! \wmu_i(s+1\!:\! t)\right| \nonumber\\
    &>\beta_i(1:s, \delta)+\beta_i(s+1:t, \delta)\bigg\} \label{eq:confidence-chng-scan}
\end{align}
then report a changepoint at $s$. \citet{besson2020efficient} show that GLRT outperforms the confidence-based scan statistic as it better exploits the geometry of the Bernoulli distributions. In our work we use the confidence-based scan statistic as our goal is not to compete in the vanilla changepoint detection setting but to derive novel bounds for the safety aware piecewise i.i.d. setup proposed in this work.  

Finally, we propose the \cpd changepoint detector sub-routine which is similar to the \ucbcpd algorithm in \citet{mukherjee2019distribution}. The \cpd takes input the restart vector $\br$, current time $t$, and the type $\in\{\text{global}, \text{local}\}$ indicating whether it is a global or a local changepoint setting. In this section we only discuss the global setting while the local setting is discussed in Section 5. The \cpd divides the total rounds $r_i:t$ into $(t - r_i)$ slices for each arm $i\in[K]^+$ and then proceeds to conduct the confidence-based scan statistics as discussed in \eqref{eq:confidence-chng-scan}. If there is a disjoint slice $s$ then \cpd reports a changepoint at $s$, then erases the history of interactions (including the baseline arm) and resets the restarting round counter $r_i, \forall i \in [K]^+$ to the current time $t$. We also reset the safe budget $\wZ(1:t)$ to $0$. Ideally in practice we can still continue the accrued safe budget to the next changepoint section from $\tau_{c_{g+1}} +1$ without setting it to $0$, but this makes our theoretical analysis more tedious.



\begin{algorithm}[!th]
\caption{\cpd$(\br, t,\text{type})$}
\label{alg:ucbcpd}
\begin{algorithmic}[1]
\For{$i = 1,2,\ldots,K^+$}
\For{$t'= r_i, r_i + 1,\ldots, t$}
\If{$L_i(r_i:t') < U_i(t'+1:t)$ or $U_i(r_i:t') > \indent \indent L_i(t'+1:t)$}
\If{type $=$ global}
\State $\{\!\wmu_j(r_i\!:\!s), N_j(r_i\!:\!s)\}_{s = r_j}^{t} \!\!=\!\! \{0, 0\}_{s=r_j}^t$, \indent\indent\indent $\!\!\forall j\in\! [K^+]$. Set $ r_j \!=\! t, \!\forall\! j\in [K^+]$. Set \indent\indent\indent$\wZ(1:t) = 0$.
\Else
\State $\{\wmu_i(r_i\!\!:\!\!s), N_i(r_i\!\!:\!\!s)\}_{s = r_i}^{t} \!\!=\!\! \{0, 0\}_{s=r_i}^t$, \indent\indent\indent $r_i \!=\! t$. Set $\wZ(1:t) = 0$.
\EndIf
\EndIf
\EndFor
\EndFor
\end{algorithmic} 
\end{algorithm}
\vspace*{-2em}

\vspace*{-1em}
\subsection{Regret Analysis for SGR}
\vspace*{-1em}
\label{section:glbchange}
We denote the time interval segment $\rho_g \coloneqq [t_{c_{g}}, t_{c_{g+1}}-1]$ so that the segment $\rho_g$ starts at round $t_{c_g}$ and ends at $t_{c_{g+1}-1}$. 
Let $\mu_{i,g}$ denote the mean of arm $i$ for the segment $\rho_g$. 
Let the changepoint gap between the segments $\rho_{g}$ and $\rho_{g + 1}$ be $\Delta^{chg}_{i,g}\coloneqq |\mu_{i,g} -\mu_{i,g+1}|$. We redefine the optimality gap for the segment $\rho_g$ as $\Delta^{opt}_{i,g} \coloneqq \mu_{i^*,g} - \mu_{i,g}$. 
Let $\tau_{c_{g}}$ denote the first round when the changepoint $t_{c_g}$ is detected and \glb is restarted. Then we define the quantity $N^{chg}_{0,g}$ as the number of times the baseline arm is sampled from rounds $t_{c_g}$ till the detection of changepoint at $\tau_{c_g}$. Finally, we define the delay of detection of the $g$-th changepoint as 
\begin{align}
    \hspace*{-0.8em} d_{g} \!\!\coloneqq\!\! \left\lceil\! K \!+\!\!\! \left(\!\! \max\limits_{i\in[K]}\frac{B(T,\! \delta)}{(\Delta^{chg}_{i,g})^{2}} \!+\!  \frac{B(T,\! \delta)}{(\Delta^{chg}_{0,g})^{2}} + N^{bse}_{0,g}\!\right)\!4K \!\right\rceil\!\! \label{eq:global-delay0}
\end{align}
such that \glb detects the change at $t_{c_g}$ within $t_{c_g}+1$ till $t_{c_g} + d_{g}$ rounds with probability greater than $1-\delta$. We define the quantity $B(T,\delta) = 16\log(4\log_2(T/\delta))$. The quantity $N^{chg}_{0,g}$ denotes the number of samples of the baseline arms after the changepoint $t_{c_g}$ has occurred but not detected and is defined by 
\begin{align*}
    N^{bse}_{0,g} &\coloneqq \dfrac{1}{\alpha\mu_{0,g}}\sum_{i\in[K]}\dfrac{B(T,\delta)}{\max\{\Delta^{opt}_{i,g}, \Delta^{opt}_{0,g} - \Delta^{opt}_{i,g}\}}.
\end{align*}
Intuitively, $N^{chg}_{0,g}$ is the number of samples required after $t_{c_g}$ has occurred and the safe budget $\wZ(1:t)$ falls below $0$. In $N^{chg}_{0,g}$ if $\alpha$ is very small, we can still explore other arms as long as the baseline arm $0$ is close to the optimal arm $\mu_{i,g}^{*}$ (so that $\Delta^{opt}_{0,g}$ is small) while the other arms are clearly sub-optimal (i.e. the $\Delta^{opt}_{i,g}$ are large). If this happens then the sub-optimal arms are quickly discarded, while the $\wZ(1:t)$ stays positive and the regret penalty is small. 
We now define a mild assumption on separation of changepoints which is  standard in changepoint detection settings (see \citet{besson2019generalized, besson2020efficient}). Without this assumption the changepoints can be too frequent and cannot be detected before the next change happens. We require this assumption for our theoretical guarantees. Note that in the experiments we show that even when this assumption does not hold our proposed algorithms performs well. 
\begin{assumption}\textbf{(Separation of changepoints for \gcs\!)}
\label{assm:sgr} We assume that the for all $g\in \{0,1,2, \ldots, G_T\}$ two consecutive changepoints $t_{c_g}$ and $t_{c_{g+1}}$ are separated as
    $t_{c_{g+1}} - t_{c_{g}} \geq 2\max\{d_{g}, d_{g+1}\}$, \text{ where $d_g$ is stated in $\eqref{eq:global-delay0}$}.
\end{assumption}
The \Cref{assm:sgr} assumes that two consecutive changepoints are separated enough to be detected by the changepoint detector. Note that our detection delay $d_g$ is larger than \citet{besson2020efficient} because between $t_{c_g}:\tau_{c_g}$ the budget $\wZ(1:t)$ may fall below $0$ and \glb may need to sample the baseline arm from the next segment $\rho_{g+1}$. We denote an event by $\xi$ and its complement by $\overline{\xi}$. Define the good event $\xi^{del}_g$ that all changepoints $g'\leq g$ have been detected with delay at most $d_{g'}$. Let the safe budget time set $\Qs(1: t) \coloneqq \left\{s\in [1:t]: \widehat{Z}(1:s) \geq 0 \right\}$ be the set of all rounds $1: t$ when $\widehat{Z}(1: s) \geq 0$. 
We can decompose the expected regret as
\begin{align}
    & \sum_{g=1}^{G_T}\bigg[\underbrace{\sum_{i=1}^K\sum_{\substack{s\in\Qs(\tau_{c_{g-1}}:t_{c_{g}}-1)}} \Delta^{opt}_i(s)\E[N_i(s)|\xi_g^{del}(s)]\Pb\left(\xi_g^{del}(s)\right)}_{\textbf{Part (A), UCB arm pulled, Safe budget $\wZ(\tau_{c_{g-1}}:s) \geq 0$}} \nonumber\\
    &+ \underbrace{\sum_{\substack{s\in\overline{\Qs}(\tau_{c_{g-1}}:t_{c_{g}}-1)}} \Delta^{opt}_0(s)\E[N_0(s)|\xi_g^{del}(s)]\Pb\left(\xi_g^{del}(s)\right)}_{\textbf{Part (B), Baseline arm pulled, Safe budget $\wZ(\tau_{c_{g-1}}:s) < 0$}} \nonumber\\
    & + \underbrace{\sum_{i=1}^K\sum_{s\in\Qs(t_{c_g}:\tau_{c_g}-1)}^{}\Delta^{opt}_i(s) \E[N_i(s)|\xi_g^{del}(s)]\Pb\left(\xi_g^{del}(s)\right)}_{\textbf{Part (C), Changepoint Pulls, Safe budget $\wZ(\tau_{c_{g-1}}:s) \geq 0$}}\nonumber\\
    &+ \underbrace{\sum_{s\in\overline{\Qs}(t_{c_g}:\tau_{c_g}-1)}^{}\Delta^{opt}_0(s) \E[N_0(s)|\xi_g^{del}(s)]\Pb\left(\xi_g^{del}(s)\right)}_{\textbf{Part (D), Changepoint Baseline Pulls, Safe budget $\wZ(\tau_{c_{g-1}}:s) < 0$}}\nonumber\\
    &+ \sum_{s=\tau_{c_{g-1}}}^{T}\underbrace{\Pb(\overline{\xi_g^{del}(s)})}_{\textbf{Part (E), Total Detection Delay Error}}\bigg], \label{eq:g-regret-decomp-glb-chg0}
\end{align}
which follows by dividing the total rounds till $T$ into $G_T$ segments when the changepoint $t_{c_g}$ is detected at $\tau_{c_{g}}$. We then further subdivide it into two parts $\tau_{c_{g-1}}:t_{c_g}-1$ (rounds before $t_{c_g}$) and $t_{c_g}:\tau_{c_{g}}-1$ (rounds before detection of $t_{c_{g}}$. The four parts (A)-(D) further divides the two time segments $\tau_{c_{g-1}}:t_{c_g}-1$ and $t_{c_g}:\tau_{c_{g}}-1$ based on the available safe budget and using the definition of $\Qs(1:t)$. 
Now using \Cref{assm:sgr} we can show that two consecutive changepoints are separated enough to correctly control the pulls of the baseline arm, and detect the optimal arm given that $\xi^{del}_g$ holds. The main difference from previous changepoint detection works like \citet{besson2019generalized, mukherjee2019distribution} lies in controlling the detection delay and false alarm under the safety budget constraint. 
Using the \Cref{assm:sgr} and changepoint detection \Cref{lemma:conc-chg} we can show that the detection delay is bounded by $2\max\{d_g,d_{g+1}\}$ with high probability. We now define a few problem dependent parameters which is key to analyze the regret of \glb\!. We define the quantity $H^{(1)}_{i,g} \coloneqq \max\left\{\frac{1}{\Delta^{opt}_{i,g-1}}, \frac{\Delta^{opt}_{i,g-1}}{\left(\Delta^{chg}_{i,g-1}\right)^2}\right\}$ as the hardness of discarding the sub-optimal arm $i$ and avoiding false detection, the quantity $H^{(2)}_{i,g} \coloneqq \max_{j\in[K]^+}\frac{\Delta^{opt}_{i, g}}{\left(\Delta^{chg}_{j,g}\right)^2}$ as the hardness for detecting the changepoint $t_{c_g}$ due to $i$ after the changepoint has happened. Finally, the quantity $H^{(3)}_{i,g} \coloneqq \frac{\Delta^{opt}_{\max, g}}{\max\{\Delta^{opt}_{i,g}, \Delta^{opt}_{0,g} - \Delta^{opt}_{i,g}\}}$ captures the trade-off of selecting the baseline arm $0$ once the changepoint $t_{c_g}$ occurred. The regret of \glb is shown below.
\begin{customtheorem}{2}\label{thm:glb-change}
Let $H^{(1)}_{i,g}, H^{(2)}_{i,g}, H^{(3)}_{i,g}$ is defined above for the segment $\rho_{g}$. 
Then the expected regret of \glb is upper bounded by
\begin{align}
    \!\!\!\E[R_T] \!&\leq\! O\!\bigg(\!\!\bigg(\!\sum_{g=1}^{G_T}\sum_{i=1}^{K^+}\left(H^{(1)}_{i,g-1} 
    \!+\! H^{(2)}_{i,g}\right) \!\!+\!\! \sum_{g=1}^{G_T}\!\dfrac{1}{\alpha\mu_{0,g-1}}\!\!\sum_{i=1}^K \!H^{(3)}_{i, g-1} \nonumber\\
    & + K\sum_{g=1}^{G_T}\dfrac{1}{\alpha\mu_{0,g}}\sum_{i=1}^K H^{(3)}_{i,g}\bigg)\log\left(\dfrac{\log_2T}{\delta}\!\right)\!\!\bigg). \label{eq:glb-regret-result0}
\end{align}
\end{customtheorem}
In the result of \eqref{eq:glb-regret-result0} the first term is the optimality regret suffered before discarding the arm $i$ when the safety budget $\wZ(1:t) \geq 0$. The second term denotes the regret suffered for the changepoint detection due to arm $i$. The third term is the regret suffered for the section $\rho_{g-1}$ when the safety budget $\wZ(1:t) < 0$. Finally, the fourth term is the regret suffered due to the changepoint $t_{c_g}$ and safety budget $\wZ(1:t) < 0$.
\glb conducts no forced exploration which results in a fully gap-dependent bound. This result is different than the gap-dependent bound in \citet{mukherjee2019distribution} which does not contain the third and fourth terms in \eqref{eq:glb-regret-result0}. The bound in \eqref{eq:glb-regret-result0} is more informative than \Cref{corollary:glb-loc-change} as it correctly captures the dependence with respect to gaps for each segment $\rho_g$.




\vspace*{-1em}
\section{Local Changepoint Detection}
In the Local Changepoint Setting (\lcs\!\!) at any changepoint at least one arm has a change of mean. Recall $G_T$ from \eqref{eq:G-T}. 
We then define the local changepoints $t_{c_0} < t_{c_1} < \ldots < t_{c_{G_T}}$ such that the $g$-th  local changepoint is defined as 
\begin{align*}
    t_{c_g} \coloneqq \inf\{s > t_{c_{g-1}}: \exists i\in [K], \mu_i(s-1) \neq \mu_i(s)\}.
\end{align*}
So at a local changepoint the mean of one or more arms may change simultaneously. Let
    $G^i_T \coloneqq \#\left\{1 \leq s \leq T \mid \mu_{i}(s-1) \neq \mu_{i}(s)\right\}$
denote the number of changepoints only for the $i$-th arm. It follows that $G^i_T \leq G_T$ but for some arms there could be arbitrary difference between these two quantities. Note that $J_T \coloneqq \sum_{i=1}^K G^i_T \leq K G_T$. Define $t^{i}_{c_g} \coloneqq \inf\{s > t^i_{c_{g-1}}: \mu_i(s-1) \neq \mu_i(s)\}$ as the $g$-th changepoint for the $i$-th arm. Again $t^i_{c_0} \!=\! 1$ for all arms $i\in [K]^+$ by convention. We denote the segment between rounds $t^i_{c_g}$ and $t^i_{c_{g+1}}\!-\!1$ as $\rho^i_{g}$. 

Consider the scenario that a learner has figured out the best arm $i$ in a segment $\rho^i_g$ but then at a local changepoint $t^j_{c_g}$, an arm $j\neq i$ becomes the new optimal arm (but arm $i$ does not change). So it will not be able to detect $t^j_{c_g}$ and will continue sampling arm $i$. Hence the leaner need to conduct forced exploration of all arms to have a good estimate of all arms. This idea is shown in \Cref{alg:loc} where at every round \loc first checks that the safety budget is positive and then either conducts forced exploration of arms with exploration parameters $\gamma$ (to be defined later) or samples the UCB arm $u_t$. If the safety budget is negative then \loc samples the baseline arm so that the budget becomes positive and \loc can explore again. 
Finally \loc calls the \cpd sub-routine with type as "local" to  detect local changepoints for an arm. Once the changepoint is detected it restarts only that arm as this is the local changepoint setting. The crucial thing to note is that we conduct forced exploration only when safety budget is available (positive).

\begin{algorithm}[!th]
\caption{Safe Local Restart (\loc\!)}
\label{alg:loc}
\begin{algorithmic}[1]
\State \textbf{Input: } Risk parameter $\alpha$, exploration factor $\gamma$
\State Set $r_i \!=\! 1, \forall i \in [K^+]$. Pull each arm once.
\For{$t= K^+ + 1, K^+ + 2,\ldots$}
\If{$\widehat{Z}(t) \geq 0$} 
\If{$t \!\mod\! \lfloor \frac{K}{\gamma}\rfloor \!\notin\! [K]$} 
\State  Set $I_{t} = u_t$ from \cref{eq:ucb-arm} \Comment{Pull UCB arm}
\ElsIf{ $t \!\mod\! \lfloor \frac{K}{\gamma}\rfloor \!\in\! [K]$}
\State Set $I_{t} = t\mod\lfloor \frac{K}{\gamma}\rfloor$ \Comment{Forced Exploration}
\EndIf
\ElsIf{$\widehat{Z}(t) < 0$}
\State Set $I_{t} = 0$ \Comment{Baseline arm}
\EndIf
\State Pull $I_{t}$ and observe $X_{I_{t}}(t)$. 
\State Update $\wmu_{I_{t}}(r_{I_{t}}:t), N_{I_{t}}(r_{I_{t}}:t)$,and  $\wZ(r_{I_{t}}:t)$ in \cref{eq:safe-budget}.
\State Call \cpd$(\br, t, \text{local})$ \Comment{Call \cpd}
\EndFor
\end{algorithmic} 
\end{algorithm}

\subsection{Regret Analysis for SLR}

We can extend the analysis of \glb to also bound the regret for \loc\!. The key difference between the two analysis is that \loc needs to bound the regret for each segment $\rho^i_{g}$ for all arms $i\in K^+$. To this effect we first redefine changepoint gap for any arm $i$ between the segments $\rho^i_{g}$ and $\rho^i_{g + 1}$ as $\Delta^{chg}_{i,g}\coloneqq |\mu_{i,g} -\mu_{i,g+1}|$, and the optimality gap as $\Delta^{opt}_{i,g} \coloneqq \mu_{i^*,g} - \mu_{i,g}$.
Let $\tau^i_{c_{g}}$ denote the first round when the changepoint $t^i_{c_g}$ is detected and \loc is restarted for the arm $i$. Then we define detection delay for the changepoint at $t^i_{c_g}$ as 
\begin{align}
\hspace{-1em} d_{i,g} \!\coloneqq\! \left\lceil\! \dfrac{K}{\gamma} \!+\! \dfrac{4}{\gamma}\left( \frac{B(T, \delta)}{(\Delta^{chg}_{i,g})^{2}} \!+\!  \frac{B(T, \delta)}{(\Delta^{chg}_{0,g})^{2}} +N^{bse}_{0,g}\right) \!\right\rceil\!\!.\label{eq:local-delay0}
\end{align}
such that $t^i_{c_g}$ is detected within $t^i_{c_g}+1:t^i_{c_g} + d_{i,g}$ rounds and $\gamma$ is the exploration rate of \loc. Again we denote $B(T,\delta) = 16\log(4\log_2(T/\delta))$. Note that the delay $d_{i,g}$ scales with the exploration rate $\gamma$ so that \loc while conducting forced exploration can detect $t^i_{c_g}$. Similar assumption has also been taken in \citet{besson2019generalized, besson2020efficient}. We then define the following assumption for the separation of changepoints between $t^i_{c_g}$ and $t^i_{c_{g+1}}$. Again note that this assumption is only required for theoretical guarantees. Empirically we show that \loc performs well even when the \Cref{assm:lgr} is violated.
\begin{assumption}\textbf{(Separation of changepoints for \lcs)}
\label{assm:lgr} We assume that the for all $g\in \{0,1,2, \ldots, G^i_T\}$ two consecutive changepoints $t^i_{c_g}$ and $t^i_{c_{g+1}}$ are separated as
    $t^i_{c_{g+1}} - t^i_{c_{g}} \geq 2\max\{d_{i,g}, d_{i,g+1}\}$, where $d_{i,g}$ is defined in \eqref{eq:local-delay0}.
\end{assumption}
Next we introduce the quantity $\overline{H^{(2)}_{i,g}} \coloneqq \frac{\Delta^{opt}_{i, g}}{\left(\Delta^{chg}_{i,g}\right)^2}$ as the hardness for detecting the $g$-th changepoint for the arm $i$. Note that in the \lcs setting the \loc algorithm is restarted only for the arm $i$ and so in the hardness we do not see the $\max$ over all arms like the \glb setting. 
Finally using the \Cref{assm:lgr} and the same analysis as in \Cref{thm:glb-change} but for each segment $\rho^i_g$ and each arm $i\in[K^+]$ we bound the regret for \loc in \Cref{thm:loc-change}. 

\begin{customtheorem}{3}\label{thm:loc-change}
Let $H^{(1)}_{i,g}, \overline{H^{(2)}_{i,g}}, H^{(3)}_{i,g}$ is defined above for the segment $\rho_{g}$. 
Then the expected regret of \loc is bounded by
\begin{align}
    \!\!\E[&R_T] \!\leq\! O\!\bigg(\!\!\bigg(\!\sum_{i=1}^{K^+}\!\sum_{g=1}^{G^i_T}\!\!\left(\!H^{(1)}_{i,g-1} 
    \!\!+\!\! \overline{H^{(2)}_{i,g}}\right) \!\!+\!\! \sum_{i=1}^K\!\!\sum_{g=1}^{G^i_T}\!\dfrac{1}{\alpha\mu_{0,g-1}}\!\!\sum_{i=1}^K\!\!H^{(3)}_{i, g-1} \nonumber\\
    & + K\sum_{g=1}^{G_T}\dfrac{1}{\alpha\mu_{0,g}}\sum_{i=1}^K H^{(3)}_{i,g}\bigg)\log\left(\dfrac{\log_2T}{\delta}\right)\bigg) \!\!+\! \gamma T. \hspace*{-1em}\label{eq:loc-regret-result}
\end{align}
\end{customtheorem}
The result in \eqref{eq:loc-regret-result} has a similar interpretation to \eqref{eq:glb-regret-result0} (but with respect to each arm segment $\rho^i_g$ instead of global segment $\rho_g$) except the gap-independent term of $\gamma T$ which results from the forced exploration of arms. We state the following corollary to summarize the result of \glb and \loc in the "easy" case when all the gaps are same.
\begin{customcorollary}{1}\label{corollary:glb-loc-change}\textbf{(Gap independent bound)}
Setting $\Delta^{opt}_{i,g} = \Delta^{chg}_{i,g} = \sqrt{\frac{K\log T}{T}}$ for all $i \in [K]^+$ and exploration rate  $\gamma=\sqrt{\frac{\log T}{T}}$ we obtain the gap independent regret upper bound of \glb and \loc as
\begin{align*}
    \E[R_T] &\!\leq\! O\!\left(G_TK\sqrt{KT\log T}\! +\! \dfrac{G_T \log T}{\alpha \mu_{0,\min}}\!\right), \textbf{(\glb)}\\
    \E[R_T] &\!\leq\! O\!\left(G_T\sqrt{KT\log T}\! +\! \dfrac{G_T \log T}{\alpha \mu_{0,\min}}\!\right),\textbf{(\loc\!\!)}
\end{align*}
where $\alpha$ is the risk parameter.
\end{customcorollary}
Comparing the above result with \glrucb (see \Cref{prop:changepoint}) we see that \glb (or \loc\!) picks up an additional factor of $1/(\mu_{0,min}\alpha)$ per changepoint which signifies the hardness of finding the safe set of actions for  maintaining the safety constraint \eqref{eq:constraint1}. 
Further note that \glb suffers an extra factor of $K$ in its bound compared to \loc\!. This is because in the \gcs setting the algorithm restarts by erasing the history of interactions for all arms. Hence, our result mirrors a similar observation in \citet{besson2020efficient}. 
Moreover as $\alpha \rightarrow 0$ (risky setting) the regret increases proportionally. This is similar to the gap-independent bound in \citet{wu2016conservative} shown in \Cref{prop:conservative-upper} which holds for the stochastic setting without any changepoints. The key takeaway from this result is that the piecewise i.i.d. setting under safety constraints is no harder than the conservative stochastic setting of \citet{wu2016conservative} and piecewise i.i.d. setting given the changepoints are sufficiently separated. Finally we state the lower bound in the safe \gcs setting.
\begin{customtheorem}{3}\textbf{(Lower Bound)}
\label{thm:lower-bound}
Let $\mathcal{E}$, $\overline{\mathcal{E}}$ be two bandit environment and there exits a global changepoint at $t_{c_1} = T/2$. Let $\alpha>0$ be the safety parameter and $\mu_{0,\min}$ be the mean of the minimum safety mean over the changepoint segments. Then the lower bound is given by
    $\E_{\mathcal{E}, \overline{\mathcal{E}}} [R_{T}] \geq \left\{\frac{K}{(16 e+8) \alpha \mu_{0, \min}} + \dfrac{\log T}{\alpha \mu_{0,\min}}, \frac{\sqrt{K T}}{\sqrt{32 e + 16}} + \dfrac{\log T}{\alpha \mu_{0,\min}}\right\}$.
\end{customtheorem}
The proof is given in \Cref{app:lower-bound} and follows from the change of measure argument. Additionally, we use the lower bound results from safe bandit setting of \citet{wu2016conservative} and changepoint detection setting of \citet{gopalan2021bandit} to arrive at the final result. Note that both of these works do not take into account the safe \gcs setting. Finally, comparing the results of \Cref{thm:lower-bound} and \Cref{corollary:glb-loc-change} we see that \glb matches the lower bound when $G_T=1$ except a factor of $O(K\sqrt{\log T})$. Similarly, since \gcs is a special case of \lcs, we see that \loc also matches the lower bound except a factor of $O(K\sqrt{\log T})$.



\vspace*{-1em}
\section{Experiments}
\vspace*{-1em}
In this section we test \glb and \loc against safety oblivious actively adaptive algorithms \glrucb, \ucbcpd as well as passive  algorithm \ducb, and safety aware algorithms \cucb, and \umoss.  
A detailed discussion on the algorithms, hyper-parameter tuning, and time complexity of the algorithms is given in \Cref{app:addl-expt}. 
One further experiment showing the performance of \glb, \loc under different values of $\alpha$ is shown in \Cref{app:addl-expt}. All codes are provided in supplementary material. 

\textbf{Global Changepoint:} In this setting all the arms (including baseline) change at every changepoint. The environment consist of $6$ arms (including baseline) and the evolution of means with respect to rounds is shown in \Cref{fig:exptenv} (Left). The three global changepoints are at $t=2000, 4000$ and $6000$. We set risk parameter $\alpha = 0.7$. The performance of all the algorithms is shown in \Cref{fig:expt} (Left). The adaptive algorithms like \ucbcpd, \glrucb perform well as they detecting the changepoints and restart but they do not satisfy the safety constraints. Note that \glb performs similar to \glrucb, \ucbcpd as it also detects the changepoints and restarts as well as satisfy the safety constraint. It outperforms passive algorithm \ducb, and safety aware algorithm \cucb. The safety aware algorithm \cucb is not suited for the safety constraint \eqref{eq:constraint1} under piecewise i.i.d. setting as it always chooses the baseline arm and fail to achieve sub-linear regret. 

\begin{figure}[!ht]
\centering
\begin{tabular}{cc}
\label{fig:global_env}\hspace{-1.2em}\includegraphics[scale = 0.30]{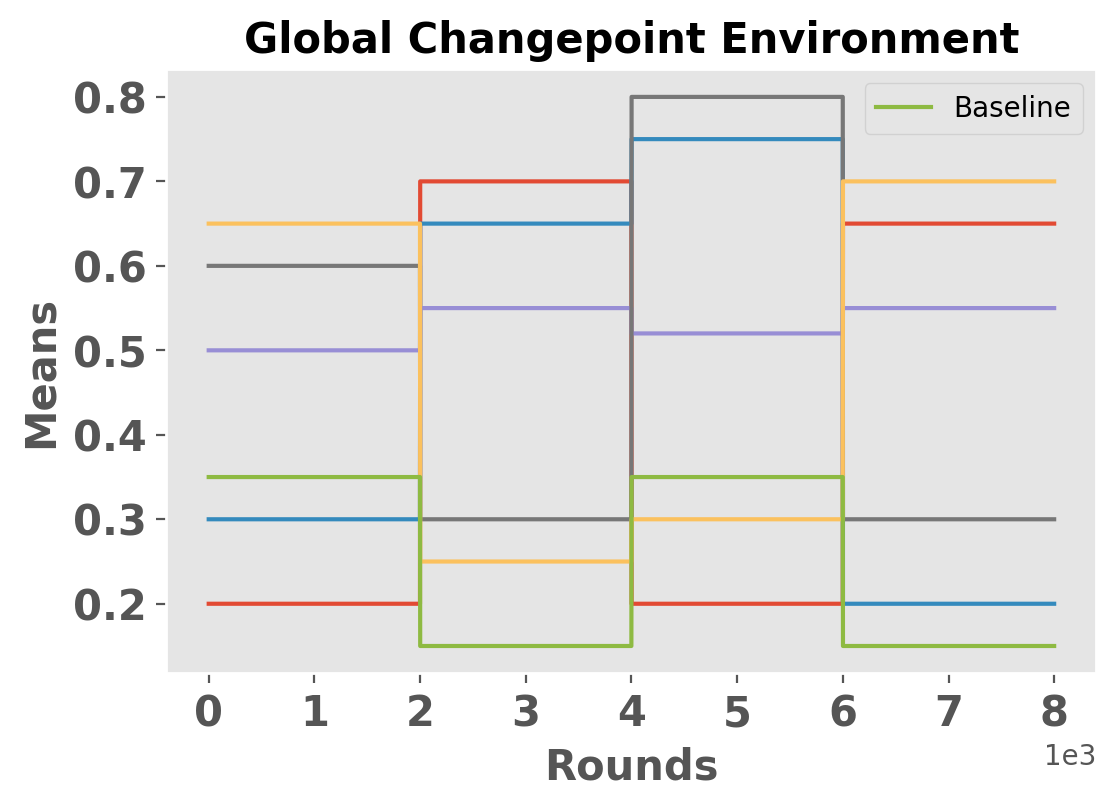} &
\label{fig:local_env}\hspace{-1.2em}\includegraphics[scale = 0.30]{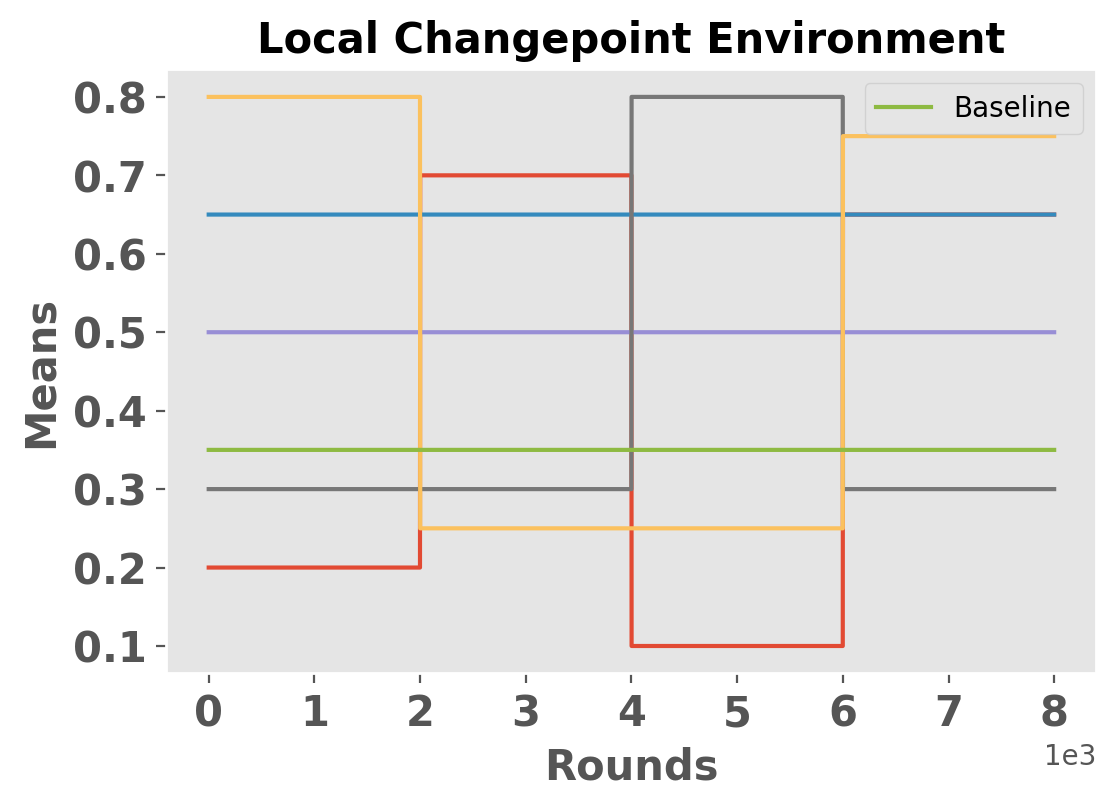}  
\end{tabular}
\caption{(Left) Global changepoint environment with $T=8000$, $K^+ = 6$ and changepoints at $t = 2000, 4000$ and $6000$. (Middle) Local changepoint environment with $T=8000$, $K^+ = 6$ and changepoints at $t = 2000, 4000$ and $6000$. Note that some arms do not change at these changepoints.}
\label{fig:exptenv}
\end{figure}

\begin{figure}[!ht]
\centering
\begin{tabular}{cc}
\label{fig:global}\hspace*{-1.2em}\includegraphics[scale = 0.31]{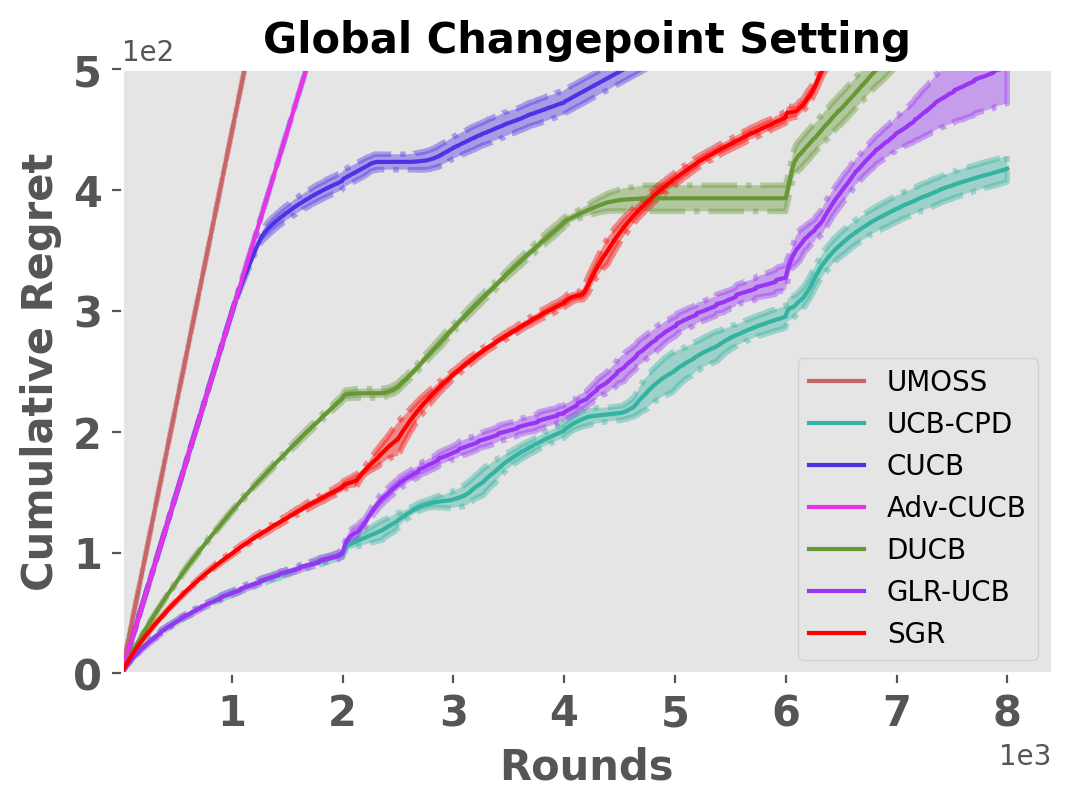} &
\label{fig:local}\hspace*{-1.2em}\includegraphics[scale = 0.31]{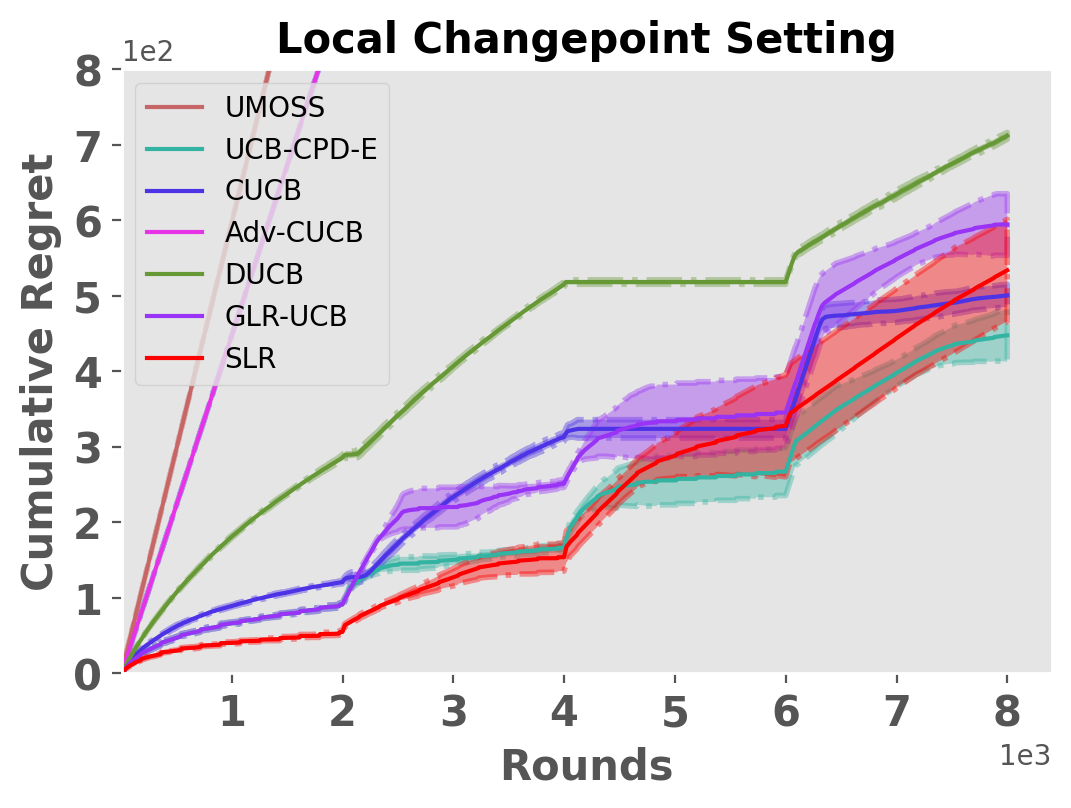} 
\end{tabular}
\caption{(Left) \gcs setting with $3$ changepoints and $6$ arms. (Right) \lcs setting with $3$ changepoints and $6$ arms.}
\label{fig:expt}
\vspace{-1.0em}
\end{figure}

\textbf{Local Changepoint:} In this setting at least one arm changes at every changepoint. 
We show that the environment in \Cref{fig:exptenv} (Right) and the performance of all the algorithms in \Cref{fig:expt} (Right). The three local changepoints are at $t=2000, 4000$ and $6000$. We set a constant baseline $\mu_0 = 0.35$, and risk parameter $\alpha =0.7$. 
Again we see that the safety aware algorithm \cucb fail to achieve sub-linear regret as it always chooses the baseline arm. On the other hand adaptive algorithms like \ucbcpde, \glrucb performs well in detecting the changepoints but they do not satisfy the safety constraints. Note that \glb performs similar to \glrucb, \ucbcpd as it also detects the changepoints and restarts as well as satisfy the safety budget. It outperforms passive algorithm \ducb, and safety aware algorithm \cucb. 

\textbf{Real Setting:} We show a real world experiment on the Movielens Dataset. In this experiment none of our modeling assumptions hold. We experiment with the Movielens dataset from February 2003 \citep{harper2016movielens}, where there are 6k users who give 1M ratings to 4k movies. We obtain a rank-$4$ approximation of the dataset over $128$ users and $128$ movies such that all  users prefer either movies $7$, $13$, $16$, or $20$ ($4$ user groups). The movies are the arms and we choose $30$ movies that have been rated by all the users. Hence, this testbed consists of $30$ arms and is run over $T = 8000$. The changepoints are at $t = 2000$, $t = 4000$, and $t = 6000$. Note that at each changepoint the means of some arms may or may not change so this is \lcs. For every changepoint segment, we uniform randomly sample an user from different user groups to simulate the piecewise i.i.d environment such that there is a change in the optimal arm. In this environment each arm has has a Gaussian distribution associated with it, where its mean evolve as shown in Figure \ref{fig:expt3} (Left). The baseline arm is set as $0.35$. As shown in Figure \ref{fig:expt3} (Right), in this environment \loc outperforms all the other algorithms including \cucb and \acucb. This is because the means of the arms are close to each other and the baseline arm mean is close to them.
\begin{figure}[!ht]
\centering
\begin{tabular}{cc}
\label{fig:movielens}\hspace*{-1.2em}\includegraphics[scale = 0.31]{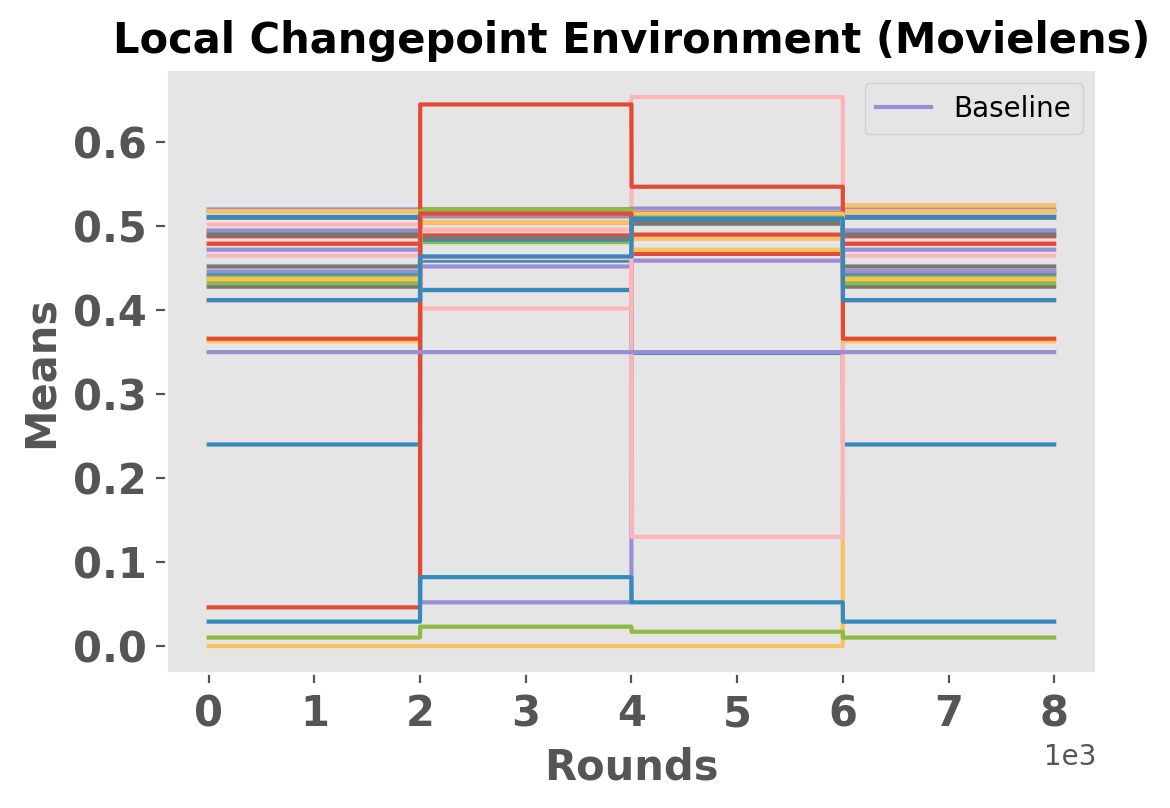} &
\label{fig:loc-movielens}\hspace*{-1.2em}\includegraphics[scale = 0.31]{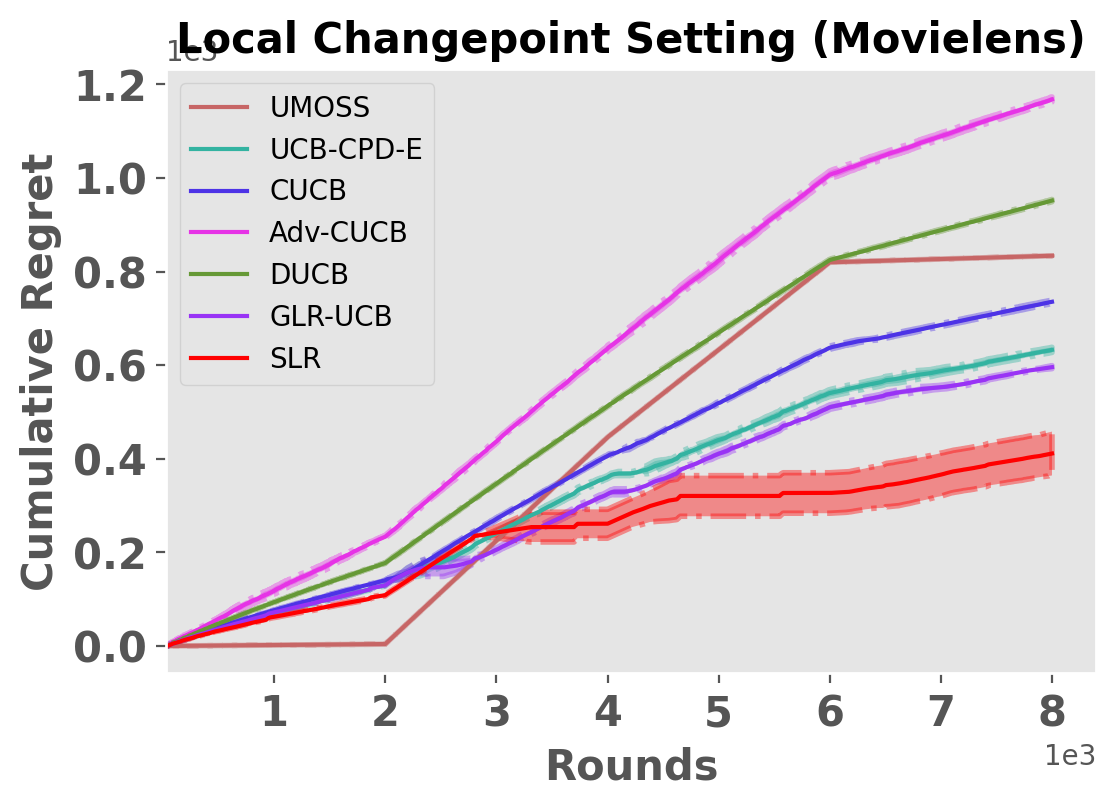} 
\end{tabular}
\vspace*{-1em}
\caption{(Left) \lcs setting with $3$ changepoints and $30$ arms. (Right) Regret in Movielens dataset.}
\label{fig:expt30}
\vspace{-1.0em}
\end{figure}
\vspace*{-1.0em}

\vspace*{-1em}
\section{Conclusion and Future Works} 
\vspace*{-1em}
In this paper we studied the safety aware piecewise i.i.d. bandits under a new safety constraint. We proposed two actively adaptive algorithms \glb and \loc which satisfy the safety constraints as well as detect changepoints and restart. We provided regret bounds on our algorithms and showed how the bounds compare with respect to safety aware bandits as well as adaptive algorithms. We also provided the first matching lower bounds for this setting. Future works include extending our setting to the rested and sleeping bandit setting under safety constraints. We also intend to explore experimental design approaches to piecewise i.i.d settings as in \citet{pukelsheim2006optimal, mason2021nearly, mukherjee2022chernoff}. Finally, incorporating variance aware techniques \citep{audibert2009exploration, mukherjee2018efficient} may further improve the the performance of our proposed algorithms.

\newpage
\bibliography{biblio}

\newpage
\onecolumn
\appendix
\section{Appendix}
\subsection{Probability Tools, Previous Theoretical Results and GLRT Discussion}
\label{app:prob-tools}
\begin{proposition}\label{prop:doob}\textbf{(Restatement of Theorem 9.2 in \citet{lattimore2020bandit})}
\label{prop:Doob}
Let \(X_{1}, X_{2}, \ldots, X_{T}\) be a sequence of independent \(\sigma\)-subgaussian
random variables with $\E[X_1] = \E[X_2] = \ldots = \E[X_T] = 0$ and \(M_{t}=\sum\limits_{s=1}^{t} X_{s} .\) Then for any \(\varepsilon>0\),
\begin{align*}
    \mathbb{P}\left(\text {exists } t \leq T: M_{t} \geq \varepsilon\right) \leq \exp \left(-\frac{\varepsilon^{2}}{2 T \sigma^{2}}\right).
\end{align*}
\end{proposition}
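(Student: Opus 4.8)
The plan is to combine a Chernoff-style exponential transform with Doob's maximal inequality for submartingales, and then optimize the free parameter. First I would fix $\lambda > 0$ and consider the process $Y_t \coloneqq \exp(\lambda M_t)$ with respect to the natural filtration $\mathcal{F}_t = \sigma(X_1,\dots,X_t)$. Since the $X_s$ are independent with mean zero, $M_t$ is a martingale, so by the tower property and Jensen's inequality applied to the convex map $x \mapsto e^{\lambda x}$, the nonnegative process $Y_t$ is a submartingale. Doob's submartingale maximal inequality then gives
\[
\mathbb{P}\!\left(\max_{t \le T} Y_t \ge e^{\lambda \varepsilon}\right) \le \frac{\E[Y_T]}{e^{\lambda \varepsilon}}.
\]
Because $x \mapsto e^{\lambda x}$ is increasing, the event $\{\exists\, t \le T : M_t \ge \varepsilon\}$ coincides with $\{\max_{t \le T} Y_t \ge e^{\lambda \varepsilon}\}$, so the left-hand side is precisely the probability to be bounded.

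Next I would control $\E[Y_T]$. By independence, $\E[\exp(\lambda M_T)] = \prod_{s=1}^T \E[\exp(\lambda X_s)]$, and the $\sigma$-subgaussian hypothesis gives $\E[\exp(\lambda X_s)] \le \exp(\lambda^2 \sigma^2 / 2)$ for every $s$, hence $\E[Y_T] \le \exp(T \lambda^2 \sigma^2 / 2)$. Substituting into the maximal inequality yields
\[
\mathbb{P}\!\left(\exists\, t \le T : M_t \ge \varepsilon\right) \le \exp\!\left(\frac{T \lambda^2 \sigma^2}{2} - \lambda \varepsilon\right).
\]
Finally I would minimize the exponent over $\lambda > 0$; it is a quadratic in $\lambda$ minimized at $\lambda^\star = \varepsilon / (T \sigma^2)$, which delivers exactly $\exp(-\varepsilon^2 / (2 T \sigma^2))$, the claimed bound.

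I do not expect a genuine obstacle here, since this is the standard textbook argument; the only points requiring a little care are (i) verifying that $Y_t$ is indeed a submartingale, which needs only $\E[X_{t+1}\mid\mathcal{F}_t] = 0$ together with Jensen, and (ii) confirming that $\lambda^\star$ is admissible, which holds whenever $\varepsilon, T, \sigma^2 > 0$ (the degenerate cases $\varepsilon \le 0$ or $\sigma = 0$ are trivial). An alternative route would replace Doob's inequality by the optional stopping theorem at $\tau = \inf\{t \le T : M_t \ge \varepsilon\} \wedge T$, but invoking the submartingale maximal inequality is the cleanest option and is exactly what the cited reference provides.
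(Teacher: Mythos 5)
Your proof is correct: the paper gives no proof of its own for this proposition (it is imported verbatim as Theorem 9.2 of \citet{lattimore2020bandit}), and your Chernoff-transform-plus-Doob-submartingale argument, with the optimization $\lambda^\star = \varepsilon/(T\sigma^2)$, is exactly the standard proof of that theorem in the cited reference. No gaps.
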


\begin{proposition}\label{prop:conservative-upper}\textbf{(Restatement of Theorem 2 in \citet{wu2016conservative})}
In any stochastic environment where the mean of the arms $\mu_{i} \in[0,1]$ with 1-subgaussian noise, then Conservative UCB (\cucb\!\!) satisfies the following with probability at least $1-\delta$ 
\begin{align*}
 \sum_{s=1}^{t} \mu_{I_{s}} &\geq(1-\alpha) \mu_{0} t \quad\text { for all } t \in\{1, \ldots, T\}, \quad \textbf{(Safety Constraint)}\\
 \E[R_T] &\leq\!\! O\bigg(\sum_{i=1}^K\dfrac{\log T}{\Delta^{opt}_i}
    \!+\! \dfrac{1}{\alpha\mu_0}\!\!\sum_{i=1}^K\dfrac{\log T}{\max\{\Delta^{opt}_{i}, \Delta^{opt}_0 \!-\! \Delta^{opt}_{i}\}}\!\bigg), \quad \textbf{(Gap-dependent bound)}
    \\
\E[{R}_{T}] &\leq O\left(\sqrt{T K \log T}+\frac{K \log T}{\alpha \mu_{0}}\right). \quad \textbf{(Gap-independent bound)}
\end{align*}
\end{proposition}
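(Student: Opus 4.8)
The plan is to establish the three claims in order: first the safety guarantee, then the gap-dependent regret bound, and finally the gap-independent bound by a standard worst-case conversion. I would begin by defining the good event $\xi$ on which, for every arm $i\in[K]^+$ and every round $t$, the empirical mean concentrates, $|\wmu_i(1:t)-\mu_i| \le \beta_i(1:t,\delta)$. By the anytime concentration property stated around \cref{eq:beta-delta} together with a union bound over the $K+1$ arms, $\Pb(\xi)\ge 1-\delta$. On $\xi$ the true mean of each arm lies between its lower and upper confidence bounds $L_i$ and $U_i$, which is the only probabilistic input the rest of the argument needs.

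For the \textbf{safety constraint}, I would exploit the structure of \cucb: a non-baseline arm is pulled only when the empirical safety budget satisfies $\wZ(1:t)\ge 0$, where $\wZ$ replaces the realized rewards of previously pulled arms by their lower bounds $L_{I_s}$ and the baseline reward by its upper bound $U_0$, as in \cref{eq:safe-budget}. On $\xi$ the realized rewards of played arms exceed their lower bounds and $\mu_0$ lies below $U_0$, so $\wZ(1:t)\ge 0$ implies $\sum_{s=1}^t \mu_{I_s} - (1-\alpha)\mu_0 t \ge 0$, i.e. the true safety constraint holds. When instead $\wZ(1:t)<0$ the algorithm plays the baseline arm, which can only increase the left-hand side of the constraint relative to the previous round, so the inequality is preserved. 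Thus on $\xi$ the safety constraint holds for all $t$ simultaneously.

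Next the \textbf{regret}. I would split each round into exploratory rounds (where $\wZ(1:t)\ge 0$ and the UCB arm $u_t$ is played) and conservative rounds (where $\wZ(1:t)<0$ and the baseline is played). For the exploratory rounds I would apply the standard optimism argument: on $\xi$, a suboptimal arm $i$ can be selected as $u_t$ only while $N_i(1:t) \le c\,\log T/(\Delta^{opt}_i)^2$ for an absolute constant $c$, so these rounds contribute $O\!\big(\sum_i \log T/\Delta^{opt}_i\big)$, giving the first term of the gap-dependent bound. The harder part is bounding the conservative plays, and I expect this to be the main obstacle. Each such pull occurs because the budget is negative; but pulling the baseline, and pulling near-optimal exploratory arms, steadily increases the true budget at a rate governed by $\alpha\mu_0$. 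The argument is to show that after a bounded number of baseline pulls the accumulated surplus, of order $\alpha\mu_0$ times the number of baseline pulls, dominates the total confidence-width deficit of order $\sum_i \beta_i$, after which the budget stays nonnegative and no further conservative pulls are needed. Equating these and translating $\beta_i$ into $\log T$ terms yields the bound $\frac{1}{\alpha\mu_0}\sum_i \frac{\log T}{\max\{\Delta^{opt}_i, \Delta^{opt}_0-\Delta^{opt}_i\}}$; the $\max$ in the denominator captures the trade-off that a baseline arm close to optimal (so $\Delta^{opt}_0$ small) replenishes budget slowly but the played arm is then also near-optimal, whereas a clearly suboptimal played arm is discarded quickly.

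Summing the exploratory and conservative contributions gives the gap-dependent bound. Finally, the \textbf{gap-independent bound} follows by the usual split of arms according to whether $\Delta^{opt}_i \le \sqrt{K\log T/T}$: small-gap arms contribute at most $O(\sqrt{TK\log T})$ in aggregate over the horizon, while large-gap arms are controlled by the gap-dependent terms, producing $O(\sqrt{TK\log T} + \frac{K\log T}{\alpha\mu_0})$. The only delicate point throughout is the conservative-play count, since it is where the $1/(\alpha\mu_0)$ factor and the $\max$-gap denominator originate; everything else reduces to the standard UCB toolbox applied on the event $\xi$.
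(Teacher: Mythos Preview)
The paper does not prove this proposition: it is listed in the appendix as a \emph{restatement} of Theorem~2 in \citet{wu2016conservative} and is quoted without proof, serving only as a reference point for comparison. So there is no ``paper's own proof'' to compare against here.

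That said, your sketch is essentially the argument of \citet{wu2016conservative}, and it is sound. The decomposition into exploratory rounds (bounded by the standard UCB pull-count) and conservative baseline rounds (bounded via a budget-accounting argument yielding the $1/(\alpha\mu_0)$ factor) is exactly right. The paper in fact reproduces the core of your ``harder part'' --- the bound on the number of baseline pulls --- as \Cref{lemma:cric-sample-bse}, where the same manipulation of the empirical budget $\wZ$ at the last baseline round $\tau$ produces the $\frac{1}{\alpha\mu_0}\sum_i \frac{\log T}{\max\{\Delta^{opt}_i,\Delta^{opt}_0-\Delta^{opt}_i\}}$ term. Your intuition for the $\max$ in the denominator is correct and matches the case split on the sign of $c_i = (1-\alpha)(\mu_0 + \Delta^{opt}_0/2) - \mu_i$ in that lemma's proof. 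The gap-independent conversion is standard, as you say.
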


\begin{proposition}
\label{prop:conservative-lower}\textbf{(Restatement of Theorem 9 in \citet{wu2016conservative})}
Let $\mu_{i} \in[0,1]$ for all $i\in[K]$ and $\mu_{0}$ satisfies the following
\begin{align*}
\min \left\{\mu_{0}, 1-\mu_{0}\right\} \geq \max \{1 / 2 \sqrt{\alpha}, \sqrt{e+1 / 2}\} \sqrt{K / T}.
\end{align*}
Then any algorithm satisfies the safety constraint $\E_{\mu} \sum\limits_{s=1}^{T} X_{s, I_{s}} \geq(1-\alpha) \mu_{0} T$. Moreover there is some $\mu \in[0,1]^{K}$ such that the expected regret of the algorithm satisfies 
\begin{align*}
    \E_{\mu}[R_{T}] \geq \max \left\{\frac{K}{(16 e+8) \alpha \mu_{0}}, \frac{\sqrt{K T}}{\sqrt{16 e+8}}\right\}.
\end{align*}
\end{proposition}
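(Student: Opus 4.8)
The plan is to split the claimed bound into two additive pieces proved by two independent reductions: the $\max\{\cdot,\cdot\}$ term comes from the stationary safe-bandit lower bound applied to a single segment, while the additive $\frac{\log T}{\alpha\mu_{0,\min}}$ term comes from the cost of detecting the changepoint at $t_{c_1}=T/2$ under the safety constraint. I build a two-segment instance in which $\mathcal{E}$ governs rounds $1:T/2$ and $\overline{\mathcal{E}}$ governs rounds $T/2+1:T$, choosing the per-segment arm means to coincide with the hard instance of \Cref{prop:conservative-lower} and fixing the baselines so that $\min\{\mu_0(\mathcal{E}),\mu_0(\overline{\mathcal{E}})\}=\mu_{0,\min}$.

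For the $\max$ term I would use that cumulative regret is additive across rounds and nonnegative, so $\E_{\mathcal{E},\overline{\mathcal{E}}}[R_T]$ is at least the regret incurred within whichever segment carries the baseline $\mu_{0,\min}$. That segment is a stationary conservative bandit with horizon $T/2$ whose means satisfy the precondition $\min\{\mu_0,1-\mu_0\}\geq\max\{1/2\sqrt{\alpha},\sqrt{e+1/2}\}\sqrt{K/T}$ of \Cref{prop:conservative-lower} (which forces the factor-of-two bookkeeping in the horizon), so the proposition gives $\E[R_T]\geq\max\{\frac{K}{(16e+8)\alpha\mu_{0,\min}},\frac{\sqrt{K(T/2)}}{\sqrt{16e+8}}\}$. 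Since $\sqrt{K(T/2)}/\sqrt{16e+8}=\sqrt{KT}/\sqrt{32e+16}$, this reproduces the stated $\max$ exactly.

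For the additive detection term I would run a two-point change-of-measure argument in the style of \citet{gopalan2021bandit}: pair the instance with an alternate that has no change (or an oppositely signed change) at $T/2$, so that any algorithm must distinguish the two laws, and bound the KL divergence between the induced observation processes by the number of informative pulls the learner spends near the changepoint. The new ingredient over the vanilla detection bound is that the safety constraint \eqref{eq:constraint1} throttles exploration: whenever the empirical safe budget is exhausted the learner is forced onto the baseline arm, so the rate at which discriminating samples can be acquired is suppressed by $\alpha\mu_{0,\min}$; carrying this suppression through the transportation inequality turns the usual $\Omega(\log T)$ detection regret into $\Omega(\frac{\log T}{\alpha\mu_{0,\min}})$.

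The last step is to certify that the two contributions add rather than overlap, since the regret charged to the stationary safe-bandit trade-off within a segment must be disjoint from the regret charged to discriminating the segments across $t_{c_1}$. I expect this separation to be the main obstacle: one has to pick the means of $\mathcal{E}$ and $\overline{\mathcal{E}}$ so that the hard-instance parameters demanded by \Cref{prop:conservative-lower} in each half are simultaneously compatible with a changepoint small enough in KL to be costly to detect, while ensuring the safety-throttled exploration argument still contributes the full $\frac{1}{\alpha\mu_{0,\min}}$ amplification without being absorbed into the stationary bound. Combining the two then yields $\E_{\mathcal{E},\overline{\mathcal{E}}}[R_T]\geq\max\{\frac{K}{(16e+8)\alpha\mu_{0,\min}},\frac{\sqrt{KT}}{\sqrt{32e+16}}\}+\frac{\log T}{\alpha\mu_{0,\min}}$.
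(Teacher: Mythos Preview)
Your proposal targets the wrong statement. The proposition you were asked to prove (\Cref{prop:conservative-lower}) is the \emph{stationary} conservative-bandit lower bound of \citet{wu2016conservative}: a single environment with no changepoint, baseline $\mu_0$, and conclusion $\E_\mu[R_T]\geq\max\{K/((16e+8)\alpha\mu_0),\sqrt{KT}/\sqrt{16e+8}\}$. The paper does not prove this proposition at all --- it is listed in the appendix among ``Previous Theoretical Results'' as a black-box restatement of Theorem~9 in \citet{wu2016conservative}, and is then \emph{invoked} as an ingredient in the proof of \Cref{thm:lower-bound}.

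What you have written is instead a proof sketch for \Cref{thm:lower-bound}, the paper's own lower bound for the safe global-changepoint setting: the two-segment construction with $t_{c_1}=T/2$, the $\sqrt{32e+16}$ denominator arising from horizon $T/2$, the additive $\log T/(\alpha\mu_{0,\min})$ detection term, and the notation $\mu_{0,\min}$ all belong to that theorem, not to \Cref{prop:conservative-lower}. For what it is worth, your sketch for \Cref{thm:lower-bound} is broadly aligned with the paper's three-step argument (apply \Cref{prop:conservative-lower} on each segment, apply the detection lower bound of \citet{gopalan2021bandit}, then add), though the paper obtains the $1/(\alpha\mu_{0,\min})$ factor in the detection term simply by \emph{setting} the changepoint gap $\underline{\Delta}=\alpha\mu_{0,\min}$ in the \citet{gopalan2021bandit} bound, not via the safety-throttling mechanism you describe.
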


\begin{proposition}\label{prop:changepoint}\textbf{(Restatement of Corollary 6 and Corollary 9 in \citet{besson2020efficient})}
The gap-independent bound of \glrucb when the number of changepoints is unknown, exploration parameter $\gamma = \sqrt{\frac{\log T}{T}}$, and confidence $\delta = \frac{1}{\sqrt{T}}$ is given by
\begin{align*}
    \E[R_T] \leq O\left(KG_T\sqrt{KT\log T}\right), \quad\textbf{(Global Setting)}\\
    \E[R_T] \leq O\left(G_T\sqrt{KT\log T}\right). \quad\textbf{(Local Setting)}
\end{align*}

\begin{proposition}\label{prop:lower-bound-changepoint}\textbf{(Restatement of Theorem 2 in \citet{gopalan2021bandit})}
Let $0<\delta \leq \frac{1}{10}$ and $m \geq 1$ be a priori fixed time such that the probability of an admissible change detector stopping before time $m$ under the null hypothesis (no change) is at most $\delta \in(0,1)$. Let $\tau$ be the stopping time for an algorithm. Let \(\mathcal{E} \in \mathbb{R}^K\) and \(\overline{\mathcal{E}}\in \mathbb{R}^K\) be two bandit environments consisting of $K$ arms. Define $\mathcal{E}^i$ as the $i$-th component of $\mathcal{E}$. Let \(\Pb_{\mathcal{E}}\) and \(\Pb_{\overline{\mathcal{E}}}\) be two probability measures induced by some $T$-round interaction of the algorithm with \(\mathcal{E}\) and \(\overline{\mathcal{E}}\) respectively. Then for any bandit changepoint algorithm satisfying $\mathbb{P}^{}[\tau<m] \leq \delta$, we have
\begin{align*}
    \mathbb{E}^{}[\tau] \geq \min \left\{\frac{\frac{1}{20} \log \frac{1}{\delta}}{\max _{i \in [K]}\KL\left(\Pb_{\mathcal{E}^i}\| \Pb_{\overline{\mathcal{E}}^i}\right)}, \frac{m}{2}\right\}.
\end{align*}
\end{proposition}

\end{proposition}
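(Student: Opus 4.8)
The plan is to establish this detection-delay lower bound by the standard change-of-measure argument for sequential change detection, combining Wald's identity with the data-processing inequality and a two-case split on whether the detector reliably fires before time $m$ under the alternative environment $\overline{\mathcal{E}}$. Throughout I read $\Pb[\tau<m]\le\delta$ as the false-alarm control under the null $\mathcal{E}$, and $\E[\tau]$ as the detection delay under the alternative $\overline{\mathcal{E}}$.

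First I would introduce the log-likelihood-ratio process between the two environments. Since both $\Pb_{\mathcal{E}}$ and $\Pb_{\overline{\mathcal{E}}}$ are induced by the same adaptive detector interacting with the respective environment, the two measures differ only through the per-arm reward laws. Writing $\mathcal{F}_t$ for the natural filtration and $N_i(\tau)$ for the number of pulls of arm $i$ up to the stopping time $\tau$, Wald's identity applied to the optional-stopping/martingale form of the accumulated log-likelihood ratio gives
\begin{align*}
\KL\!\big(\Pb_{\overline{\mathcal{E}}}^{\tau}\,\big\|\,\Pb_{\mathcal{E}}^{\tau}\big) = \E_{\overline{\mathcal{E}}}\Big[\textstyle\sum_{i=1}^K N_i(\tau)\,\KL\big(\Pb_{\overline{\mathcal{E}}^i}\,\big\|\,\Pb_{\mathcal{E}^i}\big)\Big] \le \E_{\overline{\mathcal{E}}}[\tau]\cdot\max_{i\in[K]}\KL\big(\Pb_{\mathcal{E}^i}\,\big\|\,\Pb_{\overline{\mathcal{E}}^i}\big),
\end{align*}
where $\Pb^{\tau}$ denotes the restriction to $\mathcal{F}_\tau$; the inequality uses $\sum_i N_i(\tau)=\tau$ and bounds every per-arm divergence by the worst arm. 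Because the rewards here are supported on $[0,1]$, the likelihood ratios are bounded, so the two KL directions agree up to a universal constant, which I absorb into the final constant to match the direction $\KL(\Pb_{\mathcal{E}^i}\|\Pb_{\overline{\mathcal{E}}^i})$ appearing in the statement.

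Next I would apply the data-processing inequality to the $\mathcal{F}_\tau$-measurable indicator $\indic{\tau<m}$, obtaining
\begin{align*}
\KL\!\big(\Pb_{\overline{\mathcal{E}}}^{\tau}\,\big\|\,\Pb_{\mathcal{E}}^{\tau}\big) \ge d\big(\Pb_{\overline{\mathcal{E}}}[\tau<m],\,\Pb_{\mathcal{E}}[\tau<m]\big),
\end{align*}
where $d(p,q)$ is the binary relative entropy. The admissibility hypothesis gives $\Pb_{\mathcal{E}}[\tau<m]\le\delta$. I then split into two cases. If $\Pb_{\overline{\mathcal{E}}}[\tau<m]<\tfrac12$, then $\Pb_{\overline{\mathcal{E}}}[\tau\ge m]\ge\tfrac12$, and Markov's inequality yields $\E_{\overline{\mathcal{E}}}[\tau]\ge m/2$, the second term in the minimum. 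Otherwise $\Pb_{\overline{\mathcal{E}}}[\tau<m]\ge\tfrac12$, and the elementary bound $d(p,q)\ge p\log\tfrac{1}{q}-\log 2\ge\tfrac12\log\tfrac1\delta-\log 2$ combined with the Wald chain above gives $\E_{\overline{\mathcal{E}}}[\tau]\cdot\max_i\KL\ge\tfrac{1}{20}\log\tfrac1\delta$, after using $\delta\le\tfrac{1}{10}$ to absorb the $-\log2$ slack into the constant; rearranging yields the first term in the minimum.

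The main obstacle is the adaptive, data-dependent arm selection: the likelihood-ratio increments are not i.i.d., so Wald's identity must be invoked through the martingale form of the log-likelihood ratio and the per-arm decomposition $\sum_i N_i(\tau)\,\KL$, rather than a naive product of i.i.d. densities. The remaining effort is purely constant bookkeeping, namely reconciling the two KL directions through the bounded-likelihood constant and tracking the $-\log 2$ term under $\delta\le\tfrac{1}{10}$ to certify the explicit factor $\tfrac{1}{20}$; this is routine and is where the computation concentrates.
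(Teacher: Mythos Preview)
The paper does not prove this proposition. It is explicitly labeled as a restatement of Theorem~2 in \citet{gopalan2021bandit} and sits in the appendix section ``Probability Tools, Previous Theoretical Results and GLRT Discussion'' alongside other cited results (Propositions~\ref{prop:doob}--\ref{prop:changepoint}), all stated without proof. The paper only \emph{invokes} this bound as a black box in Step~2 of its own lower-bound proof (Theorem~\ref{thm:lower-bound}). So there is no in-paper proof to compare your proposal against.

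That said, your sketch follows the standard route for sequential change-detection lower bounds --- Wald/chain-rule decomposition of the trajectory KL, data-processing onto the event $\{\tau<m\}$, and a dichotomy on $\Pb_{\overline{\mathcal{E}}}[\tau<m]\gtrless\tfrac12$ --- and this is essentially how such results are proved in the cited literature. One point you should tighten: your Wald identity naturally produces $\KL(\Pb_{\overline{\mathcal{E}}^i}\|\Pb_{\mathcal{E}^i})$ (expectation under the alternative), yet you then bound by $\max_i\KL(\Pb_{\mathcal{E}^i}\|\Pb_{\overline{\mathcal{E}}^i})$ to match the statement, justifying the swap by ``bounded likelihood ratios imply the two KL directions agree up to a universal constant.'' That claim is false in general even for distributions supported on $[0,1]$: the ratio $\KL(P\|Q)/\KL(Q\|P)$ can be arbitrarily large. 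Either keep the KL direction consistent with the Wald step throughout (and note the statement's direction may be a typo or a convention inherited from the cited source), or add an explicit hypothesis that the per-arm densities are bounded away from zero so that both directions are comparable. Without that, the direction swap is a genuine gap, not just constant bookkeeping.
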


\subsection{Proof of Regret bound for Safety Aware Global Restart}
\label{app:regret-glb-change}

\begin{customtheorem}{2}\textbf{(Restatement)} 
Let $H^{(1)}_{i,g}, H^{(2)}_{i,g}, H^{(3)}_{i,g}$ is defined above for the segment $\rho_{g}$. 
Then the expected regret of \glb is upper bounded by
\begin{align}
    \E[R_T] \!&\leq\! O\bigg(\bigg(\sum_{g=1}^{G_T}\sum_{i=1}^{K^+}\left(H^{(1)}_{i,g-1} 
    + H^{(2)}_{i,g}\right) \!\!+\!\! \sum_{g=1}^{G_T}\dfrac{1}{\alpha\mu_{0,g-1}}\sum_{i=1}^KH^{(3)}_{i, g-1} 
     + K\sum_{g=1}^{G_T}\dfrac{1}{\alpha\mu_{0,g}}\sum_{i=1}^K H^{(3)}_{i,g}\bigg)\log\left(\dfrac{\log_2T}{\delta}\right)\bigg). \label{eq:glb-regret-result}
\end{align}
\end{customtheorem}

\begin{proof}
\textbf{Step 1 (Regret Decomposition) :} First recall that the safe budget time set $\Qs(1: t) \coloneqq \left\{s\in [1:t]: \widehat{Z}(1:s) \geq 0 \right\}$ is the set of all rounds $1: t$ when $\widehat{Z}(1: s) \geq 0$. Also recall that the first round the global changepoint $t_{c_g}$ is detected is denoted by $\tau_{c_g}$ defined as follows:
\begin{align*}
    \tau^{}_{c_g}:=\inf \{t \in \mathbb{N}: \exists s \in[1, t], \exists i\in [K],|\widehat{\mu}_i(1: s)-\widehat{\mu}_i(s+1: t)|>\beta_i(1: s, \delta)+\beta_i(s+1: t, \delta)\}.
\end{align*}
Let $\xi^{del}_{g}(s)$ (to be defined later) denote the good event that all changepoints $g'\leq g$ has been successfully detected before the round $s$. We then define the expected regret till round $T$ as follows:
\begin{align}
    \E[R_T] &= \sum_{s=1}^T\left(\mu_{i^*}(s) - \E[X_{I_s}(s)]\right) \nonumber\\
    &\overset{(a)}{\leq} \sum_{g=1}^{G_T}\left[\!\sum_{s=\tau_{c_{g-1}}}^{t_{c_g}-1}\left(\!\mu_{i^*}(s) \!-\! \E[\mu_{I_s}(s)|\xi_g^{del}(s)]\right)\Pb\left(\xi_g^{del}(s)\right) \!+\! \sum_{s=t_{c_g} }^{\tau_{c_{g}}-1}\left(\mu_{i^*}(s) \!-\! \E[\mu_{I_s}(s)|\xi_g^{del}(s)]\right)\Pb\left(\xi_g^{del}(s)\right) +\!\! \sum_{s=\tau_{c_{g-1}}}^{T}\Pb(\overline{\xi_g^{del}(s)}) \!\!\right] \nonumber\\
    &\overset{(b)}{=} \sum_{g=1}^{G_T}\bigg[\sum_{s\in\Qs(\tau_{c_{g-1}}:t_{c_{g}}-1)} \left(\mu_{i^*}(s) - \E[\mu_{I_s}(s)|\xi_g^{del}(s)]\right)\Pb\left(\xi_g^{del}(s)\right) + \sum_{s\in\overline{\Qs}(\tau_{c_{g-1}}:t_{c_{g}}-1)} \left(\mu_{i^*}(s) - \E[\mu_{I_s}(s)|\xi_g^{del}(s)]\right)\Pb\left(\xi_g^{del}(s)\right)\nonumber\\
    &\qquad + \sum_{s=t^k_{c_g}}^{\tau_{c_{g}} - 1}\left(\mu_{i^*}(s) - \E[ \mu_{I_s}(s)|\xi_g^{del}(s)]\right)\Pb\left(\xi_g^{del}(s)\right)] + \sum_{s=\tau_{c_{g-1}}}^{T}\Pb(\overline{\xi_g^{del}(s)})\bigg]\nonumber\\
    &\overset{(c)}{=} \sum_{g=1}^{G_T}\bigg[\underbrace{\sum_{i=1}^K\sum_{\substack{s\in\Qs(\tau_{c_{g-1}}:t_{c_{g}}-1)}} \Delta^{opt}_i(s)\E[N_i(s)|\xi_g^{del}(s)]\Pb\left(\xi_g^{del}(s)\right)}_{\textbf{Part (A), UCB arm pulled, Safe budget $\wZ(\tau_{c_{g-1}}:s) \geq 0$}} 
    + \underbrace{\sum_{\substack{s\in\overline{\Qs}(\tau_{c_{g-1}}:t_{c_{g}}-1)}} \Delta^{opt}_0(s)\E[N_0(s)|\xi_g^{del}(s)]\Pb\left(\xi_g^{del}(s)\right)}_{\textbf{Part (B), Baseline arm pulled, Safe budget $\wZ(\tau_{c_{g-1}}:s) < 0$}} \nonumber\\
    &\qquad + \underbrace{\sum_{i=1}^K\sum_{s\in\Qs(t_{c_g}:\tau_{c_g}-1)}^{}\Delta^{opt}_i(s) \E[N_i(s)|\xi_g^{del}(s)]\Pb\left(\xi_g^{del}(s)\right)}_{\textbf{Part (C), Changepoint Pulls, Safe budget $\wZ(\tau_{c_{g-1}}:s) \geq 0$}} + \underbrace{\sum_{s\in\overline{\Qs}(t_{c_g}:\tau_{c_g}-1)}^{}\Delta^{opt}_0(s) \E[N_0(s)|\xi_g^{del}(s)]\Pb\left(\xi_g^{del}(s)\right)}_{\textbf{Part (D), Changepoint Baseline Pulls, Safe budget $\wZ(\tau_{c_{g-1}}:s) < 0$}}\nonumber\\
    &\qquad + \sum_{s=\tau_{c_{g-1}}}^{T}\underbrace{\Pb(\overline{\xi_g^{del}(s)})}_{\textbf{Part (E), Total Detection Delay Error}}\bigg],
    \label{eq:g-regret-decomp-glb-chg}
\end{align}
where $(a)$ follows by introducing the good changepoint detection event $\xi_g^{del}(s)$, $(b)$ follows by introducing the safe budget time set  $\Qs(\tau_{c_{g-1}}:t_{c_g} - 1)$, and $(c)$ follows by taking into account the safe budget time after the changepoint $t_{c_g}$ has occurred.

\textbf{Step 2 (Bounding UCB pulls of sub-optimal arm $i$ in part A): } In this step we bound the total number of samples of sub-optimal arm pulled by using the maximum UCB index $u_t$.
\begin{align*}
\E\left[\sum_{s\in\Qs(\tau_{c_{g-1}}:t_{c_{g}}-1)}N_{i}(s)\bigg |\xi^{del}_g(s)\right] &= \E\left[\sum_{s\in\Qs(\tau_{c_{g-1}}:t_{c_{g}}-1)} \I{I_s = i, N_i(s) \leq \max\{N^{opt}_{i,g-1}, N^{chg}_{i,g-1}\}, \tau_g^{chg}\in [t_{c_g} + 1,t_{c_g} + d_g]}\right]\\
&\qquad+ \E\left[\sum_{s\in\Qs(\tau_{c_{g-1}}:t_{c_{g}}-1)} \I{I_s = i, N_i(s) > \max\{N^{opt}_{i,g-1}, N^{chg}_{i,g-1}\}, \tau_g^{chg}\in [t_{c_g} + 1,t_{c_g} + d_g]}\right]\\
&\overset{(a)}{\leq} \max\{N^{opt}_{i,g-1}, N^{chg}_{i,g-1}\} + d_g + \Pb\left(\overline{\xi^{opt}_{i,g-1}(s)}\bigcap \overline{\xi^{del}_{g}(s)}\right)\\
&\overset{(b)}{\leq} \max\{N^{opt}_{i,g-1}, N^{chg}_{i,g-1}\} + d_g + \Pb\left(\overline{\xi^{opt}_{i,g-1}(s)}\right).
\end{align*}
where, in $(a)$ the $N^{opt}_{i,g-1}$ is the maximum number of samples before a sub-optimal arm $i$ is discarded in favor of the optimal arm and $N^{chg}_{i,g-1}$ is maximum number of samples before the changepoint is detected due to arm $i$ (with high probability). Next the term $d_g$ is the maximum delay for detecting the changepoint $t_{c_g}$ due to some other arm $i$. Note that we have assumed in \Cref{assm:sgr} that each changepoint $g\in [G_T]$ are separated by $2\max\{d_g, d_{g+1}\}$. The inequality in $(b)$ follows by dropping the event $\overline{\xi^{del}_{g}(s)}$.

\textbf{Step 3 (Bounding Baseline pulls in part B): } In this step we bound the pulls of the baseline arm when the safety budget $\wZ(1:t) < 0$. The breakdown of the total pulls follows the same way as in the previous step.
\begin{align*}
\E&\left[\sum_{s\in\overline{\Qs}(\tau_{c_{g-1}}:t_{c_{g}}-1)}N_{0}(s)|\xi^{del}_{g}(s)\right] = \E\left[\sum_{s\in\overline{\Qs}(\tau_{c_{g-1}}:t_{c_{g}}-1)} \I{I_s = 0, N_0(s) \leq \max\{N^{bse}_{0,g-1}, N^{chg}_{0,g-1}\}, \tau_g^{chg}\in [t_{c_g} + 1,t_{c_g} + d_g]}\right]\\
&\qquad+ \E\left[\sum_{s\in\overline{\Qs}(\tau_{c_{g-1}}:t_{c_{g}}-1)} \I{I_s = 0, N_0(s) > \max\{N^{bse}_{0,g-1}, N^{chg}_{0,g-1}\}, \tau_g^{chg}\in [t_{c_g} + 1,t_{c_g} + d_g]}\right]\\
&\overset{(a)}{\leq} \E\left[\sum_{s\in\overline{\Qs}(\tau_{c_{g-1}}:t_{c_{g}}-1)} \I{I_s = 0, N_0(s) \leq \max\{N^{bse}_{0,g-1}, N^{opt}_{0,g-1} N^{chg}_{0,g-1}\}, \tau_g^{chg}\in [t_{c_g} + 1,t_{c_g} + d_g]}\right]\\
&\qquad+ \E\left[\sum_{s\in\overline{\Qs}(\tau_{c_{g-1}}:t_{c_{g}}-1)} \I{I_s = 0, N_0(s) > \max\{N^{bse}_{0,g-1}, N^{opt}_{0,g-1}, N^{chg}_{0,g-1}\}, \tau_g^{chg}\in [t_{c_g} + 1,t_{c_g} + d_g]}\right]\\
&\overset{(b)}{\leq} \max\{N^{opt}_{0,g-1}, N^{chg}_{0,g-1}, N^{chg}_{0,g-1}\} + d_g + \Pb\left(\overline{\xi^{opt}_{0,g-1}(s)}\bigcap\overline{\xi^{del}_{g}(s)}\right)\\
&\overset{(c)}{\leq} \max\{N^{opt}_{0,g-1}, N^{chg}_{0,g-1}, N^{bse}_{0,g-1}\} + d_g + \Pb\left(\overline{\xi^{opt}_{0,g-1}(s)}\right)
 \leq \max\{N^{opt}_{0,g-1}, N^{chg}_{0,g-1}\} +  N^{bse}_{0,g-1} + d_g + \Pb\left(\overline{\xi^{opt}_{0,g-1}(s)}\right)
\end{align*}
where, in $(a)$ we introduce the optimality pulls of the baseline arm, $(b)$ follows as when the good event $\xi^{opt}_{0,g-1}(s)$ holds then the baseline arm cannot be sampled more than $\max\{N^{opt}_{0,g-1}, N^{bse}_{0,g-1}\}$, and $(c)$ follows by dropping one event from the intersection.

\textbf{Step 4 (Bounding Part (C)): } Note that under \Cref{assm:sgr} the changepoints are separated enough such that the detection delay for the $g$-th changepoint is defined as
\begin{align*}
    \hspace{-1em} d_{g} \!\coloneqq\! \left\lceil\! K \!+\! 4\left( K\max\limits_{i\in[K]}\frac{B(T, \delta)}{(\Delta^{chg}_{i,g})^{2}} \!+\!  \frac{B(T, \delta)}{(\Delta^{chg}_{0,g})^{2}} +N^{bse}_{0,g}\right) \!\right\rceil\!\!. 
\end{align*}
where $B(T,\delta) = 16\log(4\log_2(T/\delta))$ 
and two consecutive global changepoints are separated by
    $t_{c_{g+1}} - t_{c_{g}} \geq 2\max\{d_{g}, d_{g+1}\}$.
Then we can bound the total pulls of a sub-optimal arm $i$ in \textbf{Part (C)} as follows:
\begin{align*}
    \sum_{s\in\Qs(t_{c_g}:\tau_{c_g}-1)}&\E\left[N_i(s)|\xi^{del}_g\right]\Pb\left(\xi_g^{del}\right)
    \leq \E\left[\sum_{s\in\Qs(t_{c_g}:\tau_{c_g}-1)} \I{I_s = i, N_i(s) < N^{chg}_{i, g}, \tau_g^{chg}\in [t_{c_g} + 1,t_{c_g} + d_g]}\right]\\
    &\qquad + \E\left[\sum_{s\in\Qs(t_{c_g}:\tau_{c_g}-1)}\I{I_s = i, N_i(s) \geq  N^{chg}_{i, g}, \tau_g^{chg}\in [t_{c_g} + 1,t_{c_g} + d_g]}\right] \\
    &= \E\left[\sum_{s\in\Qs(t_{c_g}:\tau_{c_g}-1)} \I{I_s = i, N_i(s) < N^{chg}_{i, g}, \tau_{c_{g}} = \tau^i_{c_{g}}, \tau_g^{chg}\in [t_{c_g} + 1,t_{c_g} + d_g]  }\right]\\
    &\qquad + \E\left[\sum_{s\in\Qs(t_{c_g}:\tau_{c_g}-1)} \I{I_s = i, N_i(s) < N^{chg}_{i, g}, \tau_{c_{g}} > \tau^i_{c_{g}}, \tau_g^{chg}\in [t_{c_g} + 1,t_{c_g} + d_g]  }\right]\\
    &\qquad + \E\left[\sum_{s\in\Qs(t_{c_g}:\tau_{c_g}-1)}\I{I_s = i, N_i(s) \geq  N^{chg}_{i, g}, \tau_{c_{g}} = \tau^i_{c_{g}}, \tau_g^{chg}\in [t_{c_g} + 1,t_{c_g} + d_g]}\right]\\
\end{align*}
\begin{align*}
    &\qquad+ \sum_{i=1}^K\E\left[\sum_{s\in\Qs(t_{c_g}:\tau_{c_g}-1)}\I{I_s = i, N_i(s) \geq  N^{chg}_{i, g}, \tau_{c_{g}} > \tau^i_{c_{g}}, \tau_g^{chg}\in [t_{c_g} + 1,t_{c_g} + d_g]}\right]  \\
    &\overset{(a)}{\leq} N^{chg}_{i, g} + \max_{j\in[K]}N^{chg}_{j, g} + \sum_{s\in\Qs(t_{c_g}:\tau_{c_g}-1)}\Pb\left(\overline{\xi^{chg}_{i, g}(s)}\right) + \sum_{j=1}^K\sum_{s\in\Qs(t_{c_g}:\tau_{c_g}-1)}\Pb\left(\overline{\xi^{chg}_{j, g}(s)}\right)  + 4d_g + \sum_{s\in\overline{\Qs}(t_{c_g}:\tau_{c_g}-1)}\Pb(\overline{\xi^{del}_{g}(s)})\\
    &\leq N^{chg}_{i, g} + \max_{j\in[K]}N^{chg}_{j, g} + \sum_{s=1}^{T}\Pb\left(\overline{\xi^{chg}_{i, g}(s)}\right) + \sum_{j=1}^K\sum_{s=1}^{T}\Pb\left(\overline{\xi^{chg}_{j, g}(s)}\right)  + 4d_g + \sum_{s=1}^T\Pb(\overline{\xi^{del}_{g}(s)})
\end{align*}
where in $(a)$ the $\max_{j\in[K]}N^{chg}_{j, g}$ is the maximum number of samples required to detect the $g$-th changepoint due to some other arm than $i$.

\textbf{Step 5 (Bounding Part (D)): } Again, note that under \Cref{assm:sgr} the changepoints are separated enough such that the detection delay for the $g$-th changepoint is defined as
\begin{align*}
    \hspace{-1em} d_{g} \!\coloneqq\! \left\lceil\! K \!+\! 4\left( K\max\limits_{i\in[K]}\frac{B(T, \delta)}{(\Delta^{chg}_{i,g})^{2}} \!+\!  \frac{B(T \delta)}{(\Delta^{chg}_{0,g})^{2}} +N^{bse}_{0,g}\right) \!\right\rceil\!\!. 
\end{align*}
where, $B(T,\delta) = 16\log(4\log_2(T/\delta))$ and two consecutive global changepoints are separated by
\begin{align*}
    t_{c_{g+1}} - t_{c_{g}} \geq 2\max\{d_{g}, d_{g+1}\}.
\end{align*}
Then we can bound the total pulls of a sub-optimal arm $i$ in \textbf{Part (D)} as follows:
\begin{align*}
    &\sum_{s\in\overline{\Qs}(t_{c_g}:\tau_{c_g}-1)}\E\left[N_0(s)|\xi^{del}_g\right]\Pb\left(\xi_g^{del}\right)
    \leq \E\left[\sum_{s\in\overline{\Qs}(t_{c_g}:\tau_{c_g}-1)} \I{I_s = 0, N_0(s) < N^{chg}_{0, g}, \tau_g^{chg}\in [t_{c_g} + 1,t_{c_g} + d_g]}\right]\\
    &\qquad + \E\left[\sum_{s\in\overline{\Qs}(t_{c_g}:\tau_{c_g}-1)}\I{I_s = 0, N_0(s) \geq  N^{chg}_{0, g}, \tau_g^{chg}\in [t_{c_g} + 1,t_{c_g} + d_g]}\right] \\
    &= \E\left[\sum_{s\in\overline{\Qs}(t_{c_g}:\tau_{c_g}-1)} \I{I_s = 0, N_0(s) < N^{chg}_{0, g}, \tau_{c_{g}} = \tau^0_{c_{g}}, \tau_g^{chg}\in [t_{c_g} + 1,t_{c_g} + d_g]  }\right]\\
    &\qquad + \E\left[\sum_{s\in\overline{\Qs}(t_{c_g}:\tau_{c_g}-1)} \I{I_s = 0, N_0(s) < N^{chg}_{0, g}, \tau_{c_{g}} > \tau^0_{c_{g}}, \tau_g^{chg}\in [t_{c_g} + 1,t_{c_g} + d_g]  }\right]\\
    &\qquad + \E\left[\sum_{s\in\overline{\Qs}(t_{c_g}:\tau_{c_g}-1)}\I{I_s = 0, N_0(s) \geq  N^{chg}_{0, g}, \tau_{c_{g}} = \tau^0_{c_{g}}, \tau_g^{chg}\in [t_{c_g} + 1,t_{c_g} + d_g]}\right]\\
    &\qquad+ \sum_{i=1}^K\E\left[\sum_{s\in\overline{\Qs}(t_{c_g}:\tau_{c_g}-1)}\I{I_s = 0, N_0(s) \geq  N^{chg}_{i, g}, \tau_{c_{g}} > \tau^0_{c_{g}}, \tau_g^{chg}\in [t_{c_g} + 1,t_{c_g} + d_g]}\right]  \\
    &\leq N^{chg}_{0, g} + \max_{j\in[K]}N^{chg}_{j, g} + \sum_{s\in\overline{\Qs}(t_{c_g}:\tau_{c_g}-1)}\Pb\left(\overline{\xi^{chg}_{0, g}(s)}\right) + \sum_{j=1}^K\sum_{s\in\overline{\Qs}(t_{c_g}:\tau_{c_g}-1)}\Pb\left(\overline{\xi^{chg}_{j, g}(s)}\right)  + 4d_g + \sum_{s\in\overline{\Qs}(t_{c_g}:\tau_{c_g}-1)}\Pb(\overline{\xi^{del}_{g}(s)})\\
    &\leq N^{chg}_{0, g} + \max_{j\in[K]}N^{chg}_{j, g} + \sum_{s=1}^{T}\Pb\left(\overline{\xi^{chg}_{0, g}(s)}\right) + \sum_{j=1}^K\sum_{s=1}^{T}\Pb\left(\overline{\xi^{chg}_{j, g}(s)}\right)  + 4d_g + \sum_{s=1}^T\Pb(\overline{\xi^{del}_{g}(s)})
\end{align*}


\textbf{Step 6 (Controlling the total delay, Part E): } We first define the total detection delay good event as follows:
\begin{align*}
    \xi^{del} \coloneqq\left\{\forall i \in\{1, \ldots, K\}, \forall g \in\left\{0, 1,2, \ldots, G_T\right\}, \tau^{chg}_{c_g} \in\left[t_{c_g} + 1, t_{c_g} + d_g\right]\right\}
\end{align*}
such that the learner detects all the changepoint $g\leq G_T$ with a detection delay of at most $d_g$ where
\begin{align}
    \hspace{-1em} d_{g} \!\coloneqq\! \left\lceil\! K \!+\! 4\left( K\max\limits_{i\in[K]}\frac{\beta_i(\!1\!:\!T,\! \delta)}{(\Delta^{chg}_{i,g})^{2}} \!+\!  \frac{\beta_0(\!1\!:\!T,\! \delta)}{(\Delta^{chg}_{0,g})^{2}} +N^{bse}_{0,g}\right) \!\right\rceil\!\!.
\end{align}
We define a slightly stronger event than $\xi^{del}$ as follows:
\begin{align*}
    \xi^{del}_{g} \coloneqq\left\{\forall g' \leq g , \tau^{chg}_{c_{g'}} \in\left[t_{c_{g'}} + 1, t_{c_{g'}} + d_{g'}\right]\right\}
\end{align*}
which signifies that all changepoints before $g$ has been successfully detected. It follows that $\xi^{del} \subseteq \xi^{del}_{g}$ and $\xi^{del}_{g}$ is $\F_{\tau^{chg}_{c_g}} - 1$-measurable. Then we can show that
\begin{align*}
     \Pb(\overline{\xi^{del}_{g}(s)}) &\leq \sum_{g': g' \leq g}\E\left[\I{\overline{\xi^{del}_{g'}(s)}} \sum_{s=\tau^{}_{c_{g'}}}^{t_{c_{g'+1}}} \I{\left(\cup_{i\in[K] }\overline{\xi^{chg}_{i,g'}}(\tau_{c_{g'}}:s)\right)} \mid \F_{\tau^{}_{c_g}}\right] \\
    &\overset{(a)}{\leq} \sum_{g': g' \leq g}\I{\overline{\xi^{del}_{g'}(s)}} \E\left[\sum_{s=\tau^{}_{c_{g'}}}^{t_{c_{g'+1}}} \I{\left(\cup_{i\in[K] }\overline{\xi^{chg}_{i,g'}}(\tau_{c_{g'}}:s)\right)} \mid \F_{\tau^{}_{c_g}}\right]\\
    &\leq \sum_{g': g' \leq g}\I{\overline{\xi^{del}_{g'}(s)}} \left[\sum_{s=\tau^{}_{c_{g'}}}^{t_{c_{g'+1}}} \Pb{\left(\cup_{i\in[K] }\overline{\xi^{chg}_{i,g'}}(\tau_{c_{g'}}:s)\right)} \right]
\overset{(b)}{\leq} G_T K\sum_{t=1}^{T} \frac{1}{t \ln (t)} \leq G_T K\log (\log (T))
\end{align*}
where, $(a)$ follows from the definition of $\xi^{chg}_{i,g}(\tau_{c_{g}}:t_{c_g}:s)$ in \Cref{lemma:conc-chg}, and $(b)$ follows from \Cref{lemma:conc-chg} and setting $\delta = 1/t$.

\textbf{Step 7 (Combining everything): } Combining everything and plugging it in \eqref{eq:g-regret-decomp-glb-chg} we can show that
\begin{align*}
    \E[R_T] & \leq \sum_{g=1}^{G_T}\bigg[\underbrace{\sum_{i=1}^K\max\{N^{opt}_{i,g-1}, N^{chg}_{i,g-1}\}\Delta^{opt}_{i, g-1} + d_g\Delta^{opt}_{i, g-1} + \sum_{s=\tau_{c_{g-1}} }^{t_{c_g}-1}\Pb\left(\overline{\xi^{opt}_{i,g-1}(s)}\right)\Delta^{opt}_{i, g-1}}_{\textbf{Part (A)}} \\
    &\qquad + \underbrace{\max\{N^{opt}_{0,g-1}, N^{chg}_{0,g-1}\}\Delta^{opt}_{0, g-1} + N^{bse}_{0,g-1}\Delta^{opt}_{0, g-1} + d_g\Delta^{opt}_{0, g-1} +  \sum_{s=\tau_{c_{g-1}} }^{t_{c_g}-1}\Pb\left(\overline{\xi^{opt}_{0,g-1}(s)}\right)\Delta^{opt}_{0, g-1}}_{\textbf{Part (B)}} \\
    &\qquad +  \underbrace{\sum_{i=1}^K\bigg[N^{chg}_{i, g} + \max_{j\in[K]}N^{chg}_{j, g} + 4d_g +  \sum_{s=t_{c_g} }^{\tau_{c_{g}}-1}\Pb\left(\xi^{chg}_{i, g}(s)\right) + \sum_{j=1}^K\sum_{s=t_{c_g} }^{\tau_{c_{g}}-1}\Pb\left(\xi^{chg}_{j, g}(s)\right)\bigg]\Delta^{opt}_{i, g}}_{\textbf{Part (C)}}\\
    &\qquad + \underbrace{\bigg[N^{chg}_{0, g} + \max_{j\in[K]}N^{chg}_{j, g} + 4d_g +  \sum_{s=t_{c_g}}^{\tau_{c_{g}}-1}\Pb\left(\xi^{chg}_{0, g}\right) + \sum_{j=1}^K\sum_{s=t_{c_g} }^{\tau_{c_{g}}-1}\Pb\left(\xi^{chg}_{j, g}(s)\right)\bigg]\Delta^{opt}_{0, g}}_{\textbf{Part (D)}}\\
    &\qquad +\underbrace{\sum_{i=1}^K\left(d_g\Delta^{opt}_{i, g} + \sum_{s=t_{c_g} }^{\tau_{c_{g}}-1}\Pb(\overline{\xi^{del}_{g}(s)})\Delta^{opt}_{i, g}\right)}_{\textbf{Part (E)}}\bigg] 
\end{align*}
\begin{align*}
    &\overset{(a)}{\leq} \sum_{g=1}^{G_T}\bigg[\sum_{i=1}^K\max\left\{\dfrac{1}{\Delta^{opt}_{i,g-1}}, \dfrac{\Delta^{opt}_{i, g-1}}{\left(\Delta^{chg}_{i,g-1}\right)^2}\right\}8\log\left(\dfrac{4\log_2(T+1)}{\delta}\right) + d_g K \Delta^{opt}_{\max,g-1} + K\Delta^{opt}_{\max, g-1}\sum_{t=1}^T\delta \\
    &\qquad + \max\left\{\dfrac{1}{\Delta^{opt}_{0,g-1}}, \dfrac{\Delta^{opt}_{0, g-1}}{\left(\Delta^{chg}_{0,g-1}\right)^2}\right\}8\log\left(\dfrac{4\log_2(T+1)}{\delta}\right) + N^{bse}_{0,g-1}\Delta^{opt}_{\max, g-1} + d_g\Delta^{opt}_{0,g-1} + K\Delta^{opt}_{\max, g-1}\sum_{t=1}^T\delta\\
    &\qquad + \left(\sum_{i=1}^K \left(\dfrac{\Delta^{opt}_{i, g}}{\left(\Delta^{chg}_{i,g}\right)^2} + \max_{j\in[K]}\dfrac{\Delta^{opt}_{i, g}}{\left(\Delta^{chg}_{j,g}\right)^2}\right)\right)8\log(\dfrac{4\log_2(T+1)}{\delta}) +  \left(\dfrac{\Delta^{opt}_{0, g}}{\left(\Delta^{chg}_{0,g}\right)^2} + K\max_{j\in[K]}\dfrac{\Delta^{opt}_{j, g}}{\left(\Delta^{chg}_{j,g}\right)^2}\right)8\log(\dfrac{4\log_2(t+1)}{\delta})\\
    & \qquad + 8d_g\Delta^{opt}_{\max, g} + 2\left(K\sum_{t=1}^T\delta + K^2\sum_{t=1}^T\delta  + d_g + 6 G_T K\log (\log (T))\right)\Delta^{opt}_{\max, g}\bigg]\\
    & \leq O\bigg(\sum_{g=1}^{G_T}\left(\sum_{i=1}^{K^+}\max\left\{\dfrac{1}{\Delta^{opt}_{i,g-1}}, \dfrac{\Delta^{opt}_{i,g-1}}{\left(\Delta^{chg}_{i,g-1}\right)^2}\right\} 
    + \sum_{i=1}^{K^+}\max_{j\in[K]^+}\dfrac{\Delta^{opt}_{i, g}}{\left(\Delta^{chg}_{j,g}\right)^2} \right)\log\left(\dfrac{\log_2T}{\delta}\right)\\
    &\qquad + N^{bse}_{0,g-1}\Delta^{opt}_{\max, g-1} + K\sum_{g=1}^{G_T}\Delta^{opt}_{\max, g}d_g + KG_T\Delta^{opt}_{\max, g}\sum_{t=1}^T\delta\bigg)\\
    & \overset{(b)}{=} O\bigg(\sum_{g=1}^{G_T}\sum_{i=1}^{K^+}\left(H^{(1)}_{i,g-1} 
    + H^{(2)}_{i,g} \right)\log\left(\dfrac{\log_2T}{\delta}\right)
    + \sum_{g=1}^{G_T}\Delta^{opt}_{\max, g-1}\dfrac{1}{\alpha\mu_{0,g-1}}\sum_{i\in[K]}\dfrac{\log(\log_2T/\delta)}{\max\{\Delta^{opt}_{i,g-1}, \Delta^{opt}_{0,g-1} - \Delta^{opt}_{i,g-1}\}}\\
    &\qquad+ \sum_{g=1}^{G_T}\Delta^{opt}_{\max, g}\dfrac{1}{\alpha\mu_{0,g}}\sum_{i\in[K]}\dfrac{\log(\log_2T/\delta)}{\max\{\Delta^{opt}_{i,g}, \Delta^{opt}_{0,g} - \Delta^{opt}_{i,g}\}} + KG_T\Delta^{opt}_{\max, g}\sum_{t=1}^T\delta\bigg)\\
    & \overset{(c)}{=} O\bigg(\sum_{g=1}^{G_T}\sum_{i=1}^{K^+}\left(H^{(1)}_{i,g-1} 
    + H^{(2)}_{i,g} \right)\log\left(\dfrac{\log_2T}{\delta}\right)
    + K\sum_{g=1}^{G_T}\dfrac{1}{\alpha\mu_{0,g-1}}\sum_{i=1}^K H^{(3)}_{i,g-1} \log\left(\dfrac{\log_2T}{\delta}\right) + K\sum_{g=1}^{G_T}\dfrac{1}{\alpha\mu_{0,g}}\sum_{i=1}^K H^{(3)}_{i,g} \log\left(\dfrac{\log_2T}{\delta}\right)
    \bigg)\\
    & = O\bigg(\left(\sum_{g=1}^{G_T}\sum_{i=1}^{K^+}\left(H^{(1)}_{i,g-1} 
    + H^{(2)}_{i,g} \right) + K\sum_{g=1}^{G_T}\dfrac{1}{\alpha\mu_{0,g-1}}\sum_{i=1}^K H^{(3)}_{i,g-1} + K\sum_{g=1}^{G_T}\dfrac{1}{\alpha\mu_{0,g}}\sum_{i=1}^K H^{(3)}_{i,g}\right)\log\left(\dfrac{\log_2T}{\delta}\right)
    \bigg)
\end{align*}
where, $(a)$ follows from combining all the steps above, $(b)$ follows by using the definition of 
\begin{align*}
    H^{(1)}_{i,g-1} \coloneqq \max\left\{\dfrac{1}{\Delta^{opt}_{i,g-1}}, \dfrac{\Delta^{opt}_{i,g-1}}{\left(\Delta^{chg}_{i,g-1}\right)^2}\right\}, \qquad
    H^{(2)}_{i,g} \coloneqq \max_{j\in[K]^+}\dfrac{\Delta^{opt}_{i, g}}{\left(\Delta^{chg}_{j,g}\right)^2},
\end{align*}
and substituting the value of $d_g$. Finally, $(c)$ follows by using the definition of 
\begin{align*}
   H^{(3)}_{i,g} \coloneqq \dfrac{\Delta^{opt}_{\max, g}}{\max\{\Delta^{opt}_{i,g}, \Delta^{opt}_{0,g} - \Delta^{opt}_{i,g}\}}.
\end{align*}
and dropping the low probability event term as it is dominated by other terms. The claim of the theorem follows.
\end{proof}

\subsection{Proof of Regret bound for Safety Aware Local Restart}
\label{app:regret-loc-change}

\begin{customtheorem}{3}\textbf{(Restatement)} 
Let $H^{(1)}_{i,g}, \overline{H^{(2)}_{i,g}}, H^{(3)}_{i,g}$ is defined above for the segment $\rho_{g}$. 
Then the expected regret of \loc is bounded by
\begin{align*}
    \E[R_T] &\!\leq\! O\bigg(\left(\sum_{i=1}^{K^+}\sum_{g=1}^{G^i_T}\left(H^{(1)}_{i,g-1} 
    + \overline{H^{(2)}_{i,g}} \right) + \sum_{i=1}^K\sum_{g=1}^{G^i_T}\dfrac{1}{\alpha\mu_{0,g-1}}\sum_{i=1}^K H^{(3)}_{i,g-1} + \sum_{i=1}^K\sum_{g=1}^{G^i_T}\dfrac{1}{\alpha\mu_{0,g}}\sum_{i=1}^K H^{(3)}_{i,g}\right)\log\left(\dfrac{\log_2T}{\delta}\right)
    \bigg) + \gamma T
\end{align*}
\end{customtheorem}

\begin{proof}
\textbf{Step 1 (Regret Decomposition):} Recall that $G^i_T$ is the total number of changepoints for the arm $i$. 
We define the expected regret till round $T$ as follows:
\begin{align}
    &\E[R_T] = \sum_{s=1}^T\left(\mu_{i^*}(s) - \E[X_{I_s}(s)]\right) \nonumber\\
    &\leq \sum_{g=1}^{G^i_T}\left[\sum_{s=\tau_{c^i_{g-1}}}^{t^i_{c_g}-1}\left(\mu_{i^*}(s) - \E[\mu_{I_s}(s)|\xi_{i,g}^{del}(s)]\right)\Pb(\xi_{i,g}^{del}(s)) + \sum_{s=t^i_{c_g} }^{\tau^i_{c_{g}}-1}\left(\mu_{i^*}(s) - \E[\mu_{I_s}(s)|\xi_{i,g}^{del}(s)]\right)\Pb(\xi_{i,g}^{del}(s)) + \Pb(\overline{\xi_{i,g}^{del}(s)}) \right] + \gamma T \nonumber\\
    &\overset{(a)}{=} \sum_{g=1}^{G^i_T}\bigg[\sum_{s\in\Qs(\tau^i_{c_{g-1}}:t^i_{c_{g}}-1)} \left(\mu_{i^*}(s) - \E[\mu_{I_s}(s)|\xi_{i,g}^{del}(s)]\right)\Pb\left(\xi_{i,g}^{del}(s)\right) + \sum_{s\in\overline{\Qs}(\tau^i_{c_{g-1}}:t^i_{c_{g}}-1)} \left(\mu_{i^*}(s) - \E[\mu_{I_s}(s)|\xi_{i,g}^{del}(s)]\right)\Pb\left(\xi_{i,g}^{del}(s)\right) \nonumber\\
    &\qquad + \sum_{s=t^k_{c_g}}^{\tau_{c_{g}} - 1}\left(\mu_{i^*}(s) - \E[\mu_{I_s}(s)|\xi_g^{del}(s)]\right)\Pb\left(\xi_{i,g}^{del}(s)\right) + \Pb(\overline{\xi_{i,g}^{del}(s)})\bigg] + \gamma T\nonumber\\
    &\overset{(b)}{=} \underbrace{\sum_{i=1}^K\sum_{g=1}^{G^i_T}\sum_{\substack{s\in\Qs(\tau^i_{c_{g-1}}:t^i_{c_{g}}-1)}} \Delta^{opt}_i(s)\E[N_i(s)|\xi_{i,g}^{del}(s)]\Pb\left(\xi_{i,g}^{del}(s)\right)}_{\textbf{Part (A), UCB arm pulled, Safe budget $\wZ(\tau_{c_{g-1}}:s) \geq 0$}} 
    + \underbrace{\sum_{g=1}^{G^0_T}\sum_{\substack{s\in\overline{\Qs}(\tau^0_{c_{g-1}}:t^0_{c_{g}}-1)}} \Delta^{opt}_0(s)\E[N_0(s)|\xi_{0,g}^{del}(s)]\Pb\left(\xi_{i,g}^{del}(s)\right)}_{\textbf{Part (B), Baseline arm pulled, Safe budget $\wZ(\tau_{c_{g-1}}:s) < 0$}} \nonumber\\
    & + \underbrace{\sum_{i=1}^K\sum_{g=1}^{G^i_T}\sum_{s\in\Qs(t_{c_g}:\tau^i_{c_g}-1)}^{}\Delta^{opt}_i(s) \E[N_i(s)|\xi_{i,g}^{del}]\Pb\left(\xi_{i,g}^{del}(s)\right)}_{\textbf{Part (C), Changepoint Pulls, Safe budget $\wZ(\tau_{c_{g-1}}:s) \geq 0$}} + \underbrace{\sum_{g=1}^{G^0_T}\sum_{s\in\overline{\Qs}(t_{c_g}:\tau^0_{c_g}-1)}^{}\Delta^{opt}_0(s) \E[N_0(s)|\xi_{0,g}^{del}]\Pb\left(\xi_{0,g}^{del}(s)\right)}_{\textbf{Part (D), Changepoint Baseline Pulls, Safe budget $\wZ(\tau_{c_{g-1}}:s) < 0$}}\nonumber\\
    & + \underbrace{\sum_{i=1}^K\sum_{g=1}^{G^i_T}\Pb(\overline{\xi_{i,g}^{del}(s)})}_{\textbf{Part (E), Total Detection Delay Error}}
    + \underbrace{\gamma T}_{\textbf{Part (F), Forced Exploration Pulls}}.
    \label{eq:g-regret-decomp-loc-chg}
\end{align}

\textbf{Step 2 (Bounding Part (A) - (D)):} Note that under \Cref{assm:lgr} the changepoints of each arm $i$ are separated enough such that the detection delay for the $g$-th changepoint is defined as
\begin{align}
\hspace{-1em} d_{i,g} \!\coloneqq\! \left\lceil\! \dfrac{K}{\gamma} \!+\! \dfrac{4}{\gamma}\left( \frac{B(T,\delta)}{(\Delta^{chg}_{i,g})^{2}} \!+\!  \frac{B(T,\delta)}{(\Delta^{chg}_{0,g})^{2}} +N^{bse}_{0,g}\right) \!\right\rceil\!\!.\label{eq:local-delay}
\end{align}
where $B(T,\delta) = 16\log(4\log_2(T/\delta))$ and $\gamma$ is the exploration rate of $\loc$ and two consecutive changepoints of $i$ are separated by
\begin{align*}
    t^i_{c_{g+1}} - t^i_{c_{g}} \geq 2\max\{d_{i,g}, d_{i,g+1}\}.
\end{align*}
So when the changepoint $t^i_{c_g}$ is detected and the total detection delay is controlled for the arm $i$ (shown in step 4) we can show using the \eqref{eq:g-regret-decomp-glb-chg} and steps $2$-$4$ that
\begin{align*}
    \textbf{Part (A)} & \leq  \max\{N^{opt}_{i,g-1}, N^{chg}_{i,g-1}\} + d_{i,g} + \Pb\left(\overline{\xi^{opt}_{i,g-1}(s)}\right)\\
    \textbf{Part (B)} &\leq  \max\{N^{bse}_{0,g-1}, N^{chg}_{0,g-1}\} + N^{bse}_{0,g-1} + d_{i,g} + \Pb\left(\overline{\xi^{opt}_{0,g-1}(s)}\right)\\
    \textbf{Part (C)} & \leq  N^{chg}_{i, g} + \sum_{s=1}^{T}\Pb\left(\overline{\xi^{chg}_{i, g}(s)}\right)  + 4d_{i,g} + \Pb(\overline{\xi^{del}_{i,g}(s)})\\
    \textbf{Part (D)} &\leq N^{chg}_{0, g} + \sum_{s=1}^{T}\Pb\left(\overline{\xi^{chg}_{0, g}(s)}\right)  + 4d_{i,g} + \Pb(\overline{\xi^{del}_{i,g}(s)})
\end{align*}

\textbf{Step 3 (Controlling the total delay):} We want to control the total detection delay of individual arms $i$. Again we define the total detection delay good event for the $i$-th arm as follows:
\begin{align*}
    \xi^{del}_i \coloneqq\left\{\forall i \in\{1, \ldots, K\}, \forall g \in\left\{0, 1,2, \ldots, G^i_T\right\}, \tau^{i}_{c_g} \in\left[t^i_{c_g} + 1, t^i_{c_g} + d_{i,g}\right]\right\}
\end{align*}
such that the learner detects all the changepoint $g\leq G^i_T$ with a detection delay of at most $d_{i,g}$ where
\begin{align}
    d_{i,g} \!\coloneqq\! \left\lceil\! \dfrac{K}{\gamma} \!+\! \dfrac{4}{\gamma}\left( \frac{B(T,\delta)}{(\Delta^{chg}_{i,g})^{2}} \!+\!  \frac{B(T,\delta)}{(\Delta^{chg}_{0,g})^{2}} +N^{bse}_{0,g}\right) \!\right\rceil.
\end{align}
where $B(T,\delta) = 16\log(4\log_2(T/\delta))$. 
We define a slightly stronger event than $\xi^{del}_i$ as follows:
\begin{align*}
    \xi^{del}_{i,g} \coloneqq\left\{\forall g' \leq g , \tau^{i}_{c_{g'}} \in\left[t_{c_{g'}} + 1, t_{c_{g'}} + d_{i,g'}\right]\right\}
\end{align*}
which signifies that all changepoints before $g$ has been successfully detected. It follows that $\xi^{del}_i \subseteq \xi^{del}_{i,g}$ and $\xi^{del}_{i,g}$ is $\F_{\tau^{i}_{c_g}} - 1$-measurable. Then we can show that
\begin{align*}
     \Pb(\overline{\xi^{del}_{i,g}}) &\leq \sum_{g': g' \leq g}\E\left[\I{\overline{\xi^{del}_{i,g'}(s)}} \sum_{s=\tau^{i}_{c_{g'}}}^{t^i_{c_{g'+1}}} \I{\left(\cup_{i\in[K] }\overline{\xi^{chg}_{i,g'}}(\tau^i_{c_{g'}}:s)\right)} \mid \F_{\tau^{i}_{c_g}}\right] \\
    &\overset{(a)}{\leq} \sum_{g': g' \leq g}\I{\overline{\xi^{del}_{i,g'}(s)}} \E\left[\sum_{s=\tau^{i}_{c_{g'}}}^{t_{c_{g'+1}}} \I{\left(\cup_{i\in[K] }\overline{\xi^{chg}_{i,g'}}(\tau^{i}_{c_{g'}}:s)\right)} \mid \F_{\tau^{i}_{c_g}}\right]\\
    &\leq \sum_{g': g' \leq g}\I{\overline{\xi^{del}_{i,g'}(s)}} \left[\sum_{s=\tau^{i}_{c_{g'}}}^{t^i_{c_{g'+1}}} \Pb{\left(\cup_{i\in[K] }\overline{\xi^{chg}_{i,g'}}(\tau^{i}_{c_{g'}}:s)\right)} \right]\\
&\overset{(b)}{\leq} G^i_T K\sum_{t=1}^{T} \frac{1}{t \ln (t)} \leq G^i_T K\log (\log (T))
\end{align*}
where, $(a)$ follows from the definition of $\xi^{chg}_{i,g}(\tau^{i}_{c_g}:t^i_{c_g}:s)$ in \Cref{lemma:conc-chg}, and $(b)$ follows from \Cref{lemma:conc-chg} and setting $\delta = 1/t$.

\textbf{Step 4 (Combining everything): } Combining everything and plugging it in \eqref{eq:g-regret-decomp-loc-chg} we can show that
\begin{align*}
    \E[R_T] & \leq \underbrace{\sum_{i=1}^K\sum_{g=1}^{G^i_T}\bigg[\max\{N^{opt}_{i,g-1}, N^{chg}_{i,g-1}\}\Delta^{opt}_{i, g-1} + d_{i,g}\Delta^{opt}_{i, g-1} + \sum_{s=\tau^i_{c_{g-1}} }^{t^i_{c_g}-1}\Pb\left(\overline{\xi^{opt}_{i,g-1}(s)}\right)\Delta^{opt}_{i, g-1}\bigg]}_{\textbf{Part (A)}} \\
    &\qquad + \underbrace{\sum_{g=1}^{G^0_T}\bigg[\max\{N^{opt}_{0,g-1}, N^{chg}_{0,g-1}\}\Delta^{opt}_{0, g-1} + N^{bse}_{0,g-1}\Delta^{opt}_{0, g-1} + d_{0,g}\Delta^{opt}_{0, g-1} +  \sum_{s=\tau^i_{c_{g-1}} }^{t^i_{c_g}-1}\Pb\left(\overline{\xi^{opt}_{0,g-1}(s)}\right)\Delta^{opt}_{0, g-1}\bigg]}_{\textbf{Part (B)}} \\
    &\qquad +  \underbrace{\sum_{i=1}^K\sum_{g=1}^{G^i_T}\bigg[N^{chg}_{i, g} + 4d_{i,g} +  \sum_{s=t^i_{c_g} }^{\tau^i_{c_{g}}-1}\Pb\left(\xi^{chg}_{i, g}(s)\right)
    \bigg]\Delta^{opt}_{i, g} }_{\textbf{Part (C)}}
     + \sum_{g=1}^{G^0_T}\bigg[\underbrace{N^{chg}_{0, g} + 4d_{i,g} +  \sum_{s=t^i_{c_g} }^{\tau^i_{c_{g}}-1}\Pb\left(\xi^{chg}_{0, g}(s)\right) 
    \bigg]\Delta^{opt}_{0, g}}_{\textbf{Part (D)}}\\  
    &\qquad + \underbrace{\sum_{i=1}^K\sum_{g=1}^{G^i_T}\left(d_{i,g}\Delta^{opt}_{i, g} + \sum_{s=t^i_{c_g} }^{\tau^i_{c_{g}}-1}\Pb(\overline{\xi^{del}_{i,g}(s)})\Delta^{opt}_{i, g}\right)}_{\textbf{Part (E)}} + \underbrace{\gamma T}_{\textbf{Part (F)}}
\end{align*}
\begin{align*}
    &\overset{(a)}{\leq} \sum_{i=1}^K\sum_{g=1}^{G^i_T}\max\left\{\dfrac{1}{\Delta^{opt}_{i,g-1}}, \dfrac{\Delta^{opt}_{i, g-1}}{\left(\Delta^{chg}_{i,g-1}\right)^2}\right\}8\log\left(\dfrac{4\log_2(t+1)}{\delta}\right) + \sum_{i=1}^K\sum_{g=1}^{G^i_T}d_{i,g} K \Delta^{opt}_{\max,g-1} + KG_T\Delta^{opt}_{\max, g-1}\sum_{t=1}^T\delta \\
    &\qquad + \sum_{g=1}^{G^0_T}\max\left\{\dfrac{1}{\Delta^{opt}_{0,g-1}}, \dfrac{\Delta^{opt}_{0, g-1}}{\left(\Delta^{chg}_{0,g-1}\right)^2}\right\}8\log\left(\dfrac{4\log_2(t+1)}{\delta}\right) + N^{bse}_{0,g-1}\Delta^{opt}_{\max, g-1} + \sum_{g=1}^{G^0_T}d_{i,g}\Delta^{opt}_{0,g-1} + K\Delta^{opt}_{\max, g-1}\sum_{t=1}^T\delta\\
    &\qquad + \sum_{i=1}^K\sum_{g=1}^{G^i_T}\left( \dfrac{\Delta^{opt}_{i, g}}{\left(\Delta^{chg}_{i,g}\right)^2} 
    \right)8\log\left(\dfrac{4\log_2(t+1)}{\delta}\right)
     +  \sum_{g=1}^{G^0_T}\left(\dfrac{\Delta^{opt}_{0, g}}{\left(\Delta^{chg}_{0,g}\right)^2} 
     \right)8\log\left(\dfrac{4\log_2(t+1)}{\delta}\right)\\
    & \qquad + 8\sum_{i=1}^{K}\sum_{g=1}^{G^i_T}d_{i,g}\Delta^{opt}_{\max, g} + 2\left(\sum_{i=1}^K\sum_{g=1}^{G^i_T}\sum_{t=1}^T\delta 
    + \sum_{i=1}^{K}\sum_{g=1}^{G^i_T}d_{i,g} + 6 \sum_{i=1}^K\sum_{g=1}^{G^i_T}\log (\log (T))\right)\Delta^{opt}_{i, g} + \gamma T\\
    & \leq O\bigg(\sum_{i=1}^{K^+}\sum_{g=1}^{G^i_T}\left(\max\left\{\dfrac{1}{\Delta^{opt}_{i,g-1}}, \dfrac{\Delta^{opt}_{i,g-1}}{\left(\Delta^{chg}_{i,g-1}\right)^2}\right\} 
    + \dfrac{\Delta^{opt}_{i, g}}{\left(\Delta^{chg}_{i,g}\right)^2} \right)\log\left(\dfrac{\log_2T}{\delta}\right)
    + \sum_{i=1}^K\sum_{g=1}^{G^i_T}\Delta^{opt}_{\max, g}d_{i,g}\\
    &\qquad + N^{bse}_{0,g-1}\Delta^{opt}_{0, g-1} + \sum_{i=1}^K\sum_{g=1}^{G^i_T}\Delta^{opt}_{i, g}\sum_{t=1}^T\delta\bigg) + \gamma T\\
    & \overset{(b)}{=} O\bigg(\sum_{i=1}^{K^+}\sum_{g=1}^{G^i_T}\left(H^{(1)}_{i,g-1} 
    + \overline{H^{(2)}_{i,g}} \right)\log\left(\dfrac{\log_2T}{\delta}\right) +  \sum_{i=1}^K\sum_{g=1}^{G^i_T}\Delta^{opt}_{\max, g}\dfrac{1}{\alpha\mu_{0,g-1}}\sum_{i\in[K]}\dfrac{\log(\log_2T/\delta)}{\max\{\Delta^{opt}_{i,g-1}, \Delta^{opt}_{0,g-1} - \Delta^{opt}_{i,g-1}\}}\\
    &\qquad + \sum_{i=1}^K\sum_{g=1}^{G^i_T}\Delta^{opt}_{\max, g}\dfrac{1}{\alpha\mu_{0,g}}\sum_{i\in[K]}\dfrac{\log(\log_2T/\delta)}{\max\{\Delta^{opt}_{i,g}, \Delta^{opt}_{0,g} - \Delta^{opt}_{i,g}\}} + KG_T\Delta^{opt}_{\max}\sum_{t=1}^T\delta\bigg) + \gamma T\\
    & \overset{(c)}{=} O\bigg(\left(\sum_{i=1}^{K^+}\sum_{g=1}^{G^i_T}\left(H^{(1)}_{i,g-1} 
    + \overline{H^{(2)}_{i,g}} \right) + \sum_{i=1}^K\sum_{g=1}^{G^i_T}\dfrac{1}{\alpha\mu_{0,g-1}}\sum_{i=1}^K H^{(3)}_{i,g-1} + \sum_{i=1}^K\sum_{g=1}^{G^i_T}\dfrac{1}{\alpha\mu_{0,g}}\sum_{i=1}^K H^{(3)}_{i,g}\right)\log\left(\dfrac{\log_2T}{\delta}\right)
    \bigg) + \gamma T
\end{align*}
where, $(a)$ follows from combining all the steps above, $(b)$ follows by using the definition of 
\begin{align*}
    H^{(1)}_{i,g-1} \coloneqq \max\left\{\dfrac{1}{\Delta^{opt}_{i,g-1}}, \dfrac{\Delta^{opt}_{i,g-1}}{\left(\Delta^{chg}_{i,g-1}\right)^2}\right\}, \qquad
    \overline{H^{(2)}_{i,g}} \coloneqq \dfrac{\Delta^{opt}_{i, g}}{\left(\Delta^{chg}_{i,g}\right)^2},
\end{align*}
and substituting the value of $d_{i,g}$. Finally, $(c)$ follows by using the definition of 
\begin{align*}
   H^{(3)}_{i,g} \coloneqq \dfrac{\Delta^{opt}_{\max, g}}{\max\{\Delta^{opt}_{i,g}, \Delta^{opt}_{0,g} - \Delta^{opt}_{i,g}\}}.
\end{align*}
The claim of the theorem follows.
\end{proof}

\subsection{Proof of Corollary 1}
\label{app:corollary-glb-loc-change}

\begin{customcorollary}{1}\textbf{(Gap independent bound, Restatement)}
Setting $\Delta^{opt}_{i,g} = \Delta^{chg}_{i,g} = \sqrt{\frac{K\log T}{T}}$ for all $i \in [K]^+$ and exploration rate  $\gamma=\sqrt{\frac{\log T}{T}}$ we obtain the gap independent regret upper bound of \glb and \loc as
\begin{align*}
    \E[R_T] &\!\leq\! O\!\left(G_TK\sqrt{KT\log T}\! +\! \dfrac{G_T \log T}{\alpha \mu_{0,\min}}\!\right), \textbf{(\glb)}\\
    \E[R_T] &\!\leq\! O\!\left(G_T\sqrt{KT\log T}\! +\! \dfrac{G_T \log T}{\alpha \mu_{0,\min}}\!\right),\textbf{(\loc\!\!)}
\end{align*}
where $\alpha$ is the risk parameter.
\end{customcorollary}

\begin{proof}
The proof directly follows from the result of \Cref{thm:glb-change} and \Cref{thm:loc-change}. We first recall the result of \Cref{thm:glb-change} below
\begin{align}
    \E[R_t] \!\leq\! O\bigg(\left(\sum_{g=1}^{G_T}\sum_{i=1}^{K^+}\left(H^{(1)}_{i,g-1}
    + H^{(2)}_{i,g} \right) + K\sum_{g=1}^{G_T}\dfrac{1}{\alpha\mu_{0,g-1}}\sum_{i=1}^K H^{(3)}_{i,g-1} + K\sum_{g=1}^{G_T}\dfrac{1}{\alpha\mu_{0,g}}\sum_{i=1}^K H^{(3)}_{i,g}\right)\log\left(\dfrac{\log_2T}{\delta}\right)
    \bigg). \label{eq:corollary-glb}
\end{align}
Then using the fact that $\Delta^{opt}_{i,g} = \Delta^{chg}_{i,g} = \sqrt{\frac{K\log T}{T}}$ for all $i \in [K]^+$ we get that 
\begin{align*}
    H^{(1)}_{i,g-1} &\coloneqq \max\left\{\dfrac{1}{\Delta^{opt}_{i,g-1}}, \dfrac{\Delta^{opt}_{i,g-1}}{\left(\Delta^{chg}_{i,g-1}\right)^2}\right\} = \sqrt{\frac{T}{K\log T}},\\
    H^{(2)}_{i,g} &\coloneqq \max_{j\in[K]^+}\dfrac{\Delta^{opt}_{i, g}}{\left(\Delta^{chg}_{j,g}\right)^2} = \sqrt{\frac{T}{K\log T}}, \\
    H^{(3)}_{i,g} &\coloneqq \dfrac{\Delta^{opt}_{\max, g}}{\max\{\Delta^{opt}_{i,g}, \Delta^{opt}_{0,g} - \Delta^{opt}_{i,g}\}} = 1.
\end{align*}
Substituting all of the above back in \Cref{eq:corollary-glb} and setting $\delta = \frac{1}{T}$ we get that
\begin{align*}
    \E[R_T] \leq O\left(G_TK\sqrt{KT\log T} + \dfrac{G_T \log T}{\alpha \mu_{0,\min}}\right).
\end{align*}
Next we recall the result of \Cref{thm:loc-change} below
\begin{align}
    \E[R_t] \!\leq\! O\bigg(\left(\sum_{i=1}^{K^+}\sum_{g=1}^{G^i_T}\left(H^{(1)}_{i,g-1} 
    + \overline{H^{(2)}_{i,g}} \right) + \sum_{i=1}^K\sum_{g=1}^{G^i_T}\dfrac{1}{\alpha\mu_{0,g-1}}\sum_{i=1}^K H^{(3)}_{i,g-1} + \sum_{i=1}^K\sum_{g=1}^{G^i_T}\dfrac{1}{\alpha\mu_{0,g}}\sum_{i=1}^K H^{(3)}_{i,g}\right)\log\left(\dfrac{\log_2T}{\delta}\right)
    \bigg) + \gamma T. \label{eq:corollary-loc}
\end{align}
Again using the fact that $\Delta^{opt}_{i,g} = \Delta^{chg}_{i,g} = \sqrt{\frac{K\log T}{T}}$ for all $i \in [K]^+$ it follows that $\overline{H^{(2)}_{i,g}} = \sqrt{T/K\log T}$. 
Substituting this back in \eqref{eq:corollary-loc} we get that
\begin{align*}
    \E[R_T] \leq O\left(G_T\sqrt{KT\log T} + \dfrac{G_T \log T}{\alpha \mu_{0,\min}}\right).
\end{align*}
The claim of the corollary follows.
\end{proof}

\subsection{Support Lemma}

\subsubsection{Concentration of the Optimality Event}
\begin{lemma}
\label{lemma:conc-opt}
Let $X_i(1),X_i(2),\ldots,X_i(t)$ be $t$ samples observed for arm $i$. Define the optimality good event $\xi^{opt}_i(1:t) \coloneqq \{\forall s\in [1:t], \mid\widehat{\mu}_i(1:s) - \mu_i(1:s) \mid \leq \beta_i(1:s,\delta)\}$ where $\beta_i(1:s,\delta) \coloneqq \sqrt{\dfrac{2\log (4\log_2( t + 1) / \delta)}{s}}$. Then the probability of the event $\xi^{opt}_i(1:t)$ is bounded by
\begin{align*}
    \Pb\left(\xi^{opt}_i(1:t)\right) \geq 1 - \delta.
\end{align*}
\end{lemma}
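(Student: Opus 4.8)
The plan is a standard dyadic peeling argument built on Doob's maximal inequality (\Cref{prop:doob}). First I would fix the arm $i$ and observe that since the rewards lie in $[0,1]$, the centered increments $Y_j \coloneqq X_i(j) - \mu_i(j)$ are independent, mean zero, and supported on an interval of length at most $1$, hence $\tfrac12$-subgaussian by Hoeffding's lemma; moreover $\widehat{\mu}_i(1:s) - \mu_i(1:s) = \tfrac1s M_s$ where $M_s \coloneqq \sum_{j=1}^s Y_j$. Writing $c \coloneqq \sqrt{2\log(4\log_2(t+1)/\delta)}$, so that $s\,\beta_i(1:s,\delta) = c\sqrt{s}$, the complement event $\overline{\xi^{opt}_i(1:t)}$ is exactly $\{\exists\, s\in[1:t]:\ |M_s| > c\sqrt{s}\}$, and it suffices to bound its probability by $\delta$.

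Next I would partition $[1:t]$ into the dyadic blocks $B_k \coloneqq \{2^k,\ldots,\min(2^{k+1}-1,\,t)\}$ for $k=0,1,\ldots,\lfloor\log_2 t\rfloor$, of which there are at most $\log_2(t+1)+1 \le 2\log_2(t+1)$ non-empty. On $B_k$ we have $s \ge 2^k$, so $|M_s| > c\sqrt{s}$ forces $|M_s| > c\sqrt{2^k}$; hence the failure restricted to $B_k$ is contained in $\{\exists\, s\le 2^{k+1}:\ M_s > c\sqrt{2^k}\} \cup \{\exists\, s\le 2^{k+1}:\ -M_s > c\sqrt{2^k}\}$. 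Applying \Cref{prop:doob} to $(Y_j)$ and to $(-Y_j)$ with $T = 2^{k+1}$, $\sigma^2 = \tfrac14$, and $\varepsilon = c\sqrt{2^k}$ gives, per block, a bound of $2\exp\!\big(-\varepsilon^2/(2T\sigma^2)\big) = 2\exp(-c^2\,2^k/2^k) = 2\exp(-c^2) = 2\big(\delta/(4\log_2(t+1))\big)^2$, since $2T\sigma^2 = 2^k$.

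Finally, a union bound over the at most $2\log_2(t+1)$ blocks yields
\[
\Pb\big(\overline{\xi^{opt}_i(1:t)}\big) \;\le\; 2\log_2(t+1)\cdot 2\Big(\tfrac{\delta}{4\log_2(t+1)}\Big)^{2} \;=\; \frac{\delta^{2}}{4\log_2(t+1)} \;\le\; \delta ,
\]
using $\delta \le 1$ and $\log_2(t+1)\ge 1$ for $t\ge 1$, which is the claim. The only real subtlety, and the step I expect to be the crux, is the peeling itself: a crude union bound over all $s\in[1:t]$ would require a $\log(t/\delta)$ term inside $\beta_i$, whereas the stated $\beta_i$ only carries a $\log\log$ term, so the geometric blocking — together with exploiting the sharp subgaussian constant $\sigma=\tfrac12$ from $[0,1]$-boundedness, which makes each block's failure probability \emph{quadratically} small in $\delta/\log_2(t+1)$ — is precisely what lets us absorb the union over the $\Theta(\log t)$ blocks.
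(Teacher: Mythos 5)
Your proof is correct and follows essentially the same dyadic-peeling-plus-maximal-inequality argument as the paper's (geometric blocks $[2^k,2^{k+1})$, Doob's inequality per block, union bound over the $O(\log t)$ blocks). The only difference is that you invoke the sharp $\tfrac12$-subgaussian constant for $[0,1]$-bounded increments, which makes each block's failure probability quadratic in $\delta/(4\log_2(t+1))$ and actually renders the bookkeeping cleaner than the paper's version (which uses $\sigma^2=1$ and lands at $\delta/2$ per the two tails combined); both yield the claimed $1-\delta$ guarantee.
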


\begin{proof} We will use the peeling argument to get the desired bound. We first restate the bad event of deviation as
\begin{align}
    \overline{\xi^{opt}_{i}}(1:t) \coloneqq \left\{\exists s\in [1:t], |\widehat{\mu}_i(1:s) - \mu_i(1:s)| >  \sqrt{\dfrac{2\log(4\log_2( t + 1)/\delta)}{s}}\right\}\label{eq:dev-opt-event}.
\end{align}
Next we bound the one side of the inequality as follows:
\begin{align}
    \Pb\!\left(\exists s\geq 1, \widehat{\mu}_i(1:s) - \mu_i(1:s) \geq \sqrt{\dfrac{2\log(4\log_2( t + 1) /\delta)}{s}} \!\right) \!&=\! \Pb\left(\exists s\geq 1, s\underbrace{(\widehat{\mu}_i(1:s) - \mu_i(1:s))}_{\coloneqq Y_s} \geq \sqrt{\dfrac{2s^2 \log(4\log_2( t + 1)/\delta)}{s}} \right)\nonumber\\
    &\overset{(a)}{\leq} \sum_{u=0}^{\lfloor \log_2 t \rfloor}\Pb\left(\exists s\in [2^{u},2^{u+1}], M_s  \geq \sqrt{2s\log(4\log_2( t + 1)/\delta)} \right)\nonumber\\
    &\overset{(b)}{\leq} \sum_{u=0}^{\lfloor \log_2 t \rfloor}\Pb\left(\exists s\leq 2^{u+1}, M_s\geq  \sqrt{2.2^{u}\log(4\log_2( t + 1)/\delta)} \right)\nonumber\\
    &\overset{(c)}{\leq}\sum_{u=0}^{\lfloor \log_2 t \rfloor}\exp\left(-\dfrac{( \sqrt{2.2^{u}\log(4\log_2( t + 1)/\delta)})^2}{2.2^{u}\sigma^2}\right)\nonumber\\
    &\overset{(d)}{\leq}\sum_{u=0}^{\lfloor \log t \rfloor}\exp\left(-\dfrac{ 2^{u}\log(4\log_2( t + 1)/\delta)}{2^{u}}\right) \nonumber\\
    &\overset{(e)}{\leq} (\log_2( t + 1)) \exp\left(-\log(4\log_2( t + 1)/\delta)\right)\leq \delta/4,  \label{eq:conc-bound:1}
\end{align}
where, $(a)$ follows by dividing the rounds till $t$ into geometric grid of size $[2^u, 2^{u+1}]$, $s\leq\lceil \log t\rceil$ and $M_s = \sum_{t'=1}^s (X_i(t') - \E[X_i(t')]) = \sum_{t'=1}^s Y_{t'}$. Further note that $\E[Y_{t'}] = 0$, so we can apply the maximal inequality. Then $(b)$ follows by upper bounding $s$ by $2^{u+1}$, $(c)$ follows by applying \Cref{prop:Doob}, $(d)$ follows as $\sigma^2 = 1$, 
$(e)$ follows as with $s\leq\lceil \log t\rceil$ we can have $(\log t + 1)$ such combinations. Similarly, we can show that,
\begin{align}
   \Pb\left(\exists s\geq 1, \widehat{\mu}_i(1:s) - \mu_i(1:s) \leq -  \sqrt{\dfrac{2\log(4\log_2( t + 1) /\delta)}{s}} \right) \leq \delta/4 \label{eq:conc-bound:2}
\end{align}
Combining the equations \eqref{eq:conc-bound:1} and \eqref{eq:conc-bound:2} we get that,
\begin{align*}
    \Pb\left(\overline{\xi^{opt}_{i}}(1:t)\right) \leq \dfrac{\delta}{2} < \delta.
\end{align*}
It follows then $\Pb\left(\xi^{opt}_i(1:t)\right) \geq 1 - \delta$. Hence the claim of the lemma follows.
\end{proof}

\subsubsection{Critical Samples for Optimality Detection}

\begin{lemma}
\label{lemma:cric-sample-opt}
Define the optimality event $\xi^{opt}_i(s:t)$ as in \Cref{lemma:conc-opt}. Then the expected number of times the sub-optimal arm $i$ is sampled based on the UCB sampling rule from round $1$ till $t$ is given by
\begin{align*}
    \E[N_i(1:t)] \leq N^{opt}_i(1:t) + \sum_{s=N^{opt}_i(1:t) + 1}^t\Pb\left(\overline{\xi^{opt}_i}(s:t)\right)
\end{align*}
where, $N^{opt}_i(1:t) = \dfrac{8\log(4\log_2(t + 1)/\delta)}{(\Delta^{opt}_{i})^2}$.
\end{lemma}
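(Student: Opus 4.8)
The plan is to adapt the classical UCB regret argument to the anytime confidence widths $\beta_i(1:s,\delta)$ of \eqref{eq:beta-delta}, exploiting the uniform (peeling-based) guarantee already proven in \Cref{lemma:conc-opt}. Write $N_i(1:t) = \sum_{s=1}^t \I{I_s = i}$ and split each summand according to whether arm $i$ has already been pulled $N^{opt}_i(1:t)$ times:
\begin{align*}
N_i(1:t) = \sum_{s=1}^t \I{I_s = i,\ N_i(1:s) \le N^{opt}_i(1:t)} + \sum_{s=1}^t \I{I_s = i,\ N_i(1:s) > N^{opt}_i(1:t)}.
\end{align*}
The first sum is at most $N^{opt}_i(1:t)$ deterministically, because the counter $N_i(1:s)$ increases by one at each pull of $i$ and so attains every value at most once. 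This already isolates the leading $N^{opt}_i(1:t)$ term.

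For the second sum I would show that, on the good events $\xi^{opt}_i$ and $\xi^{opt}_{i^*}$ of \Cref{lemma:conc-opt} for arms $i$ and $i^*$, arm $i$ cannot be the UCB arm $u_s$ once $N_i(1:s) > N^{opt}_i(1:t)$. Indeed, by the choice $N^{opt}_i(1:t) = 8\log(4\log_2(t+1)/\delta)/(\Delta^{opt}_i)^2$, whenever $N_i(1:s) > N^{opt}_i(1:t)$ we get $\beta_i(1:s,\delta) < \Delta^{opt}_i/2$; combined with $\wmu_i(1:s) \le \mu_i + \beta_i(1:s,\delta)$ on $\xi^{opt}_i$ this yields
\begin{align*}
U_i(1:s) = \wmu_i(1:s) + \beta_i(1:s,\delta) \le \mu_i + 2\beta_i(1:s,\delta) < \mu_i + \Delta^{opt}_i = \mu_{i^*},
\end{align*}
while on $\xi^{opt}_{i^*}$ we have $U_{i^*}(1:s) \ge \mu_{i^*} > U_i(1:s)$, so $\argmax_{j} U_j(1:s) \ne i$. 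Hence every round counted by the second sum forces either $\overline{\xi^{opt}_i}$ or $\overline{\xi^{opt}_{i^*}}$ to hold; taking expectations, bounding the optimal-arm failure into the same deviation event, and noting that the counter can only exceed $N^{opt}_i(1:t)$ from round $N^{opt}_i(1:t)+1$ onward gives
\begin{align*}
\E\Big[\sum_{s=1}^t \I{I_s = i,\ N_i(1:s) > N^{opt}_i(1:t)}\Big] \le \sum_{s=N^{opt}_i(1:t)+1}^t \Pb\left(\overline{\xi^{opt}_i}(s:t)\right).
\end{align*}
Adding the two bounds proves the lemma.

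The main obstacle is book-keeping around the fact that $\beta_i(1:s,\delta)$ is evaluated at the \emph{random} pull count $N_i(1:s)$ rather than at a deterministic time, so the "good event" must control all possible counter values simultaneously; this is exactly the anytime, self-normalized guarantee packaged in \Cref{lemma:conc-opt}, which is why a single probability $\Pb(\overline{\xi^{opt}_i}(\cdot))$ per round suffices instead of an extra union bound over pull counts. A secondary point to handle carefully is that $i^*$ and the gap $\Delta^{opt}_i$ must be fixed over the whole window $1:t$ (i.e. we are inside a single i.i.d. phase with $r_i = 1$), so that $\mu_i(1:s) = \mu_i$ and the comparison $U_i(1:s) < U_{i^*}(1:s)$ is meaningful; outside this regime the statement would be applied per segment via the restart mechanism.
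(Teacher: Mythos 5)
Your proposal is correct and follows essentially the same route as the paper's proof: a deterministic bound on the first $N^{opt}_i(1:t)$ pulls, followed by the standard three-event UCB argument showing that any further pull of arm $i$ forces a deviation of $\wmu_i$ or $\wmu_{i^*}$ (the paper merely phrases the decomposition via the stopping times $\tau^{opt}_{i,s}(1:t)$ rather than your direct indicator split). The only cosmetic difference is that the paper's final display retains both $\Pb\bigl(\overline{\xi^{opt}_i}(s:t)\bigr)$ and $\Pb\bigl(\overline{\xi^{opt}_{i^*}}(s:t)\bigr)$ terms whereas you (like the lemma statement itself) fold the optimal-arm deviation into a single event, which changes nothing beyond a constant.
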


\begin{proof}
First note that if a sub-optimal arm $i$ is chosen at round $t$ then $U_i(1:s) > U_{i^*}(1:s)$. This is possible under the following three events
\begin{align}
    \{\wmu_i(1:s) \geq \mu_{i}(1:s) + \beta_i(1:s,\delta)\}, \quad \{\wmu_{i^*}(1:s) \leq \mu_{i^*}(1:s) - \beta_i(1:s,\delta)\},  \quad \{\mu_{i^*}(1:s) - \mu_{i}(1:s) < 2\beta_i(1:s,\delta)\}. \label{eq:cric-sample-opt}
\end{align}
Now recall from Lemma \ref{lemma:conc-opt} that at round $s\in[t]$, the event $\xi^{opt}_i(1:t)$ holds with $1-\delta$ probability. We define the optimal stopping time $\tau^{opt}_{i,s}(1:t)$ as follows:
\begin{align*}
    \tau^{opt}_{i,s}(1:t) = \min\{t'\in[1:t]: N_i(1:t') = s\}.
\end{align*}
It follows immediately that $\tau^{opt}_{i,s}(1:t)$ is $\F_{t-1}$ measurable. Also note that that the third event in \eqref{eq:cric-sample-opt} is not possible for
\begin{align*}
    \mu_{i^*}(1:t) - \mu_{i}(1:t) < 2\beta_i(1:t,\delta) \implies \Delta^{opt}_{i} < 2\sqrt{\dfrac{2\log(4\log_2( t + 1)/\delta)}{N^{opt}_{i}(1:t)}} \implies N^{opt}_{i}(1:t) > \dfrac{8\log(4\log_2(t + 1)/\delta)}{(\Delta^{opt}_{i})^2}. 
\end{align*}
Then we can bound the expected number of pulls for any sub-optimal arm $i$ as follows:
\begin{align*}
    \E[N_i(1:t)] &= \E[\sum_{s=1}^{N^{opt}_{i}(1:t)}\I{\tau^{opt}_{i,s}(1:t) \leq N^{opt}_i(1:t)}] + \E[\sum_{s = N^{opt}_i(1:t) + 1}^t \I{\tau^{opt}_{i,s}(1:t) > N^{opt}_i(1:t)}] \\
    &= N^{opt}_i(1:t) + \sum_{s = N^{opt}_i(1:t) + 1}^t \Pb\left(\tau^{opt}_{i,s}(1:t) > N^{opt}_i(1:t)\right) \\
    &\leq  N^{opt}_i(1:t) + \sum_{s = N^{opt}_i(1:t) + 1}^t\Pb\left(\left\{I_s = i, \wmu_i(1:s-1) + \beta_{i}(1:s-1,\delta) > \wmu_{i^*}(1:s-1) + \beta_{i^*}(1:s-1,\delta)\right\}\right)\\
    &\leq  N^{opt}_i(1:t) + \sum_{s = N^{opt}_i(1:s-1) + 1}^t\Pb\bigg(\left\{\wmu_i(1:s-1) \geq \mu_{i}(1:s-1) + \beta_i(1:s-1,\delta)\right\}\\
    &\quad\quad\bigcup\left\{\wmu_{i^*}(1:s-1) \leq \mu_{i^*}(1:s-1) - \beta_{i^*}(1:s-1,\delta)\right\}\bigg)\\
    &\leq N^{opt}_i(1:t) + \sum_{s=N^{opt}_i(1:t) + 1}^t\Pb\left(\overline{\xi^{opt}_i}(s:t)\right) + \sum_{s=N^{opt}_i(1:t) + 1}^t\Pb\left(\overline{\xi^{opt}_{i^*}}(s:t)\right).
\end{align*}
The claim of the lemma follows.
\end{proof}

\subsubsection{Concentration of Changepoint Event}

\begin{lemma}
\label{lemma:conc-chg}
Let $\wmu_{i}(1:s)$ be the empirical mean of $s$ i.i.d. observations with mean $\mu_{i}(1:s)$, and $\wmu_{i}(s+1:t)$ be the empirical mean of $(t - s)$ i.i.d. observations with mean $\mu_{i}(s+1:t)$. Let $t^i_{c_g}$ be the round such that $\mu_{i}(t^i_{c_g}) \neq \mu_{i}(t^i_{c_g}+1)$. Let at round $s$ the policy raises an alarm following the changepoint good event $\xi^{chg}_{i,g}(1:t)$. Define the changepoint good event 
$$
\xi^{chg}_{i,g}(1:t) \coloneqq \{\exists s\in [1:t], \mid\widehat{\mu}_i(1:s) - \widehat{\mu}_i(s+1:t) \mid \geq \beta_i(1:s,\delta) + \beta_i(s+1:t,\delta)\}
$$ 
where $\beta_i(1:s,\delta) \coloneqq \sqrt{\dfrac{2\log (4\log_2( t + 1) / \delta)}{s}}$ and $\beta_i(s+1:t,\delta) \coloneqq \sqrt{\dfrac{2\log (4\log_2( t + 1) / \delta)}{t-s}}$.  Then the probability of the event $\xi^{chg}_i(1:t)$ is bounded by
\begin{align*}
    \Pb\left(\xi^{chg}_{i,g}(1:t)\right) \geq 1 - \delta.
\end{align*}
\end{lemma}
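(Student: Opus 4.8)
The plan is to mirror the peeling argument behind \Cref{lemma:conc-opt}, now applied to the \emph{two} sub-windows that the confidence-based scan statistic compares. Because $\xi^{chg}_{i,g}(1:t)$ is an ``$\exists\, s$'' event, to lower bound its probability it suffices to exhibit a single good split, and the natural one is the split $s$ aligned with the true changepoint $t^i_{c_g}$: then the pre-split pulls all have mean $\mu'\coloneqq\mu_i(1:t^i_{c_g})$, the post-split pulls all have mean $\mu''\coloneqq\mu_i(t^i_{c_g}+1:t)$, and $|\mu'-\mu''|=\Delta^{chg}_{i,g}$. First I would introduce the two martingales $M^{(1)}_s=\sum(X_i-\mu')$ over the pre-change pulls of arm $i$ and $M^{(2)}_s=\sum(X_i-\mu'')$ over the post-change pulls; each increment lies in $[0,1]$ and is therefore $\tfrac{1}{4}$-sub-Gaussian, so \Cref{prop:Doob} applies to both.

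Next I would run the dyadic peeling exactly as in \Cref{lemma:conc-opt} --- partition the (random, online-determined) pull count into geometric blocks $[2^u,2^{u+1}]$, apply Doob's maximal inequality on each block, and union over the $O(\log t)$ blocks --- to conclude, via a union bound over the two windows, that with probability at least $1-\delta$ both $|\wmu_i(1:s)-\mu'|\le\beta_i(1:s,\delta)$ and $|\wmu_i(s+1:t)-\mu''|\le\beta_i(s+1:t,\delta)$ hold. On this event the triangle inequality gives
\[
|\wmu_i(1:s)-\wmu_i(s+1:t)|\;\ge\;\Delta^{chg}_{i,g}-\beta_i(1:s,\delta)-\beta_i(s+1:t,\delta),
\]
so the scan statistic at split $s$ exceeds its threshold --- i.e.\ $\xi^{chg}_{i,g}(1:t)$ holds --- as soon as $\Delta^{chg}_{i,g}\ge 2\bigl(\beta_i(1:s,\delta)+\beta_i(s+1:t,\delta)\bigr)$.

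Certifying that last inequality is where the detection delay $d_g$ of \eqref{eq:global-delay0} (resp.\ $d_{i,g}$ of \eqref{eq:local-delay0}) and the separation \Cref{assm:sgr} (resp.\ \Cref{assm:lgr}) are used: once $d_g$ rounds have elapsed past $t^i_{c_g}$, the UCB-plus-forced-exploration schedule run while $\wZ(1:t)\ge 0$, together with the additional $N^{bse}_{0,g}$ baseline rounds spent when $\wZ(1:t)<0$, guarantees at least $\approx B(T,\delta)/(\Delta^{chg}_{i,g})^2$ post-change pulls of arm $i$ (and comparably many pre-change pulls), which forces each of the two confidence widths below $\Delta^{chg}_{i,g}/4$. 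Taking $\delta=1/t$ and summing the resulting $\sum_t 1/(t\ln t)$ over rounds then yields the $\log\log T$ factor that the delay-control steps in the proofs of \Cref{thm:glb-change} and \Cref{thm:loc-change} invoke. I expect the main obstacle to be precisely this last bookkeeping: the pull counts $N_i(\cdot)$ are random stopping times (which is why the dyadic-block peeling, rather than a one-shot Hoeffding bound, is needed), and one must verify that the safety-constrained sampling rule really does accumulate enough pulls on \emph{both} sides of $t^i_{c_g}$ within the delay window despite the interleaved baseline pulls triggered by the constraint \eqref{eq:constraint1}.
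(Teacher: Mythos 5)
Your proposal matches the paper's proof in its probabilistic core: both windows' empirical means are controlled by exactly the dyadic-peeling plus Doob maximal-inequality argument of \Cref{lemma:conc-opt}, with each of the four one-sided tails bounded by $\delta/4$ and a union bound over the two windows giving the $1-\delta$ guarantee. The only difference is one of scope: the paper's proof of \Cref{lemma:conc-chg} stops at the two-window concentration statement (it effectively redefines $\xi^{chg}_{i,g}(1:t)$ as the event that both $|\wmu_i(1:s)-\mu_i(1:s)|$ and $|\wmu_i(s+1:t)-\mu_i(s+1:t)|$ stay within their confidence widths), deferring the triangle-inequality step and the sample-count condition $\Delta^{chg}_{i,g}\ge 2\left(\beta_i(1:s,\delta)+\beta_i(s+1:t,\delta)\right)$ to \Cref{lemma:cric-sample-chg} and the delay-control steps of \Cref{thm:glb-change} and \Cref{thm:loc-change}, whereas you fold that bookkeeping into this lemma --- which is a reasonable (arguably more faithful) reading of the statement but is not part of the paper's proof of this particular lemma.
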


\begin{proof} Recall that $\wmu_{i}(1:s)$ is the empirical mean of $s$ i.i.d. observations with mean $\mu_{i}(1:s)$, and $\wmu_{i}(s+1:t)$ is the empirical mean of $(t - s)$ i.i.d. observations with mean $\mu_{i}(s+1:t)$. We will again use the peeling argument to get the desired bound. We first restate the good event as follows:
\begin{align}
    \xi^{chg}_{i,g}(1:t) &\coloneqq \bigg\{\exists s\in [1:t], \bigg|\widehat{\mu}_i(1:s) - \widehat{\mu}_i(s+1:t)\bigg| >  \sqrt{\dfrac{2\log(4\log_2( t + 1)/\delta)}{s}} + \sqrt{\dfrac{2\log(4\log_2( t + 1)/\delta)}{t - s}}\bigg\}\nonumber\\
    &= \bigg\{\exists s\in [1:t], \widehat{\mu}_i(1:s) - \sqrt{\dfrac{2\log(4\log_2( t + 1)/\delta)}{s}}  >  \widehat{\mu}_i(s+1:t) + \sqrt{\dfrac{2\log(4\log_2( t + 1)/\delta)}{t - s}}\bigg\} \nonumber\\ 
    & \quad \quad \bigcup\bigg\{\widehat{\mu}_i(1:s) + \sqrt{\dfrac{2\log(4\log_2(t + 1)/\delta)}{s}}  <  \widehat{\mu}_i(s+1:t) - \sqrt{\dfrac{2\log(4\log_2(t + 1)/\delta)}{t - s}} \bigg\}\nonumber
\end{align}
We can then redefine the good event $\xi^{chg}_{i,g}(1:t)$ as follows:
\begin{align}
    \xi^{chg}_{i,g}(1:t) &\coloneqq \bigg\{\forall s\in [1:t], |\widehat{\mu}_i(1:s) - \mu_i(1:s)| \leq  \sqrt{\dfrac{2\log(4\log_2(t + 1)/\delta)}{s}}\bigg\} \nonumber\\ 
    & \quad \quad \bigcup\bigg\{\forall s\in [1:t], |\widehat{\mu}_i(s+1:t) - \mu_i(s+1:t)| \leq  \sqrt{\dfrac{2\log(4\log_2(t + 1)/\delta)}{s}}\bigg\}\label{eq:dev-chg-event}.
\end{align}
We can then define the bad event from \cref{eq:dev-chg-event} as follows:
\begin{align}
    \overline{\xi^{chg}_{i,g}}(1:t^i_{c_g}:t) &\coloneqq \bigg\{\exists s\in [1:t], |\widehat{\mu}_i(1:s) - \mu_i(1:s)| >  \sqrt{\dfrac{2\log(4\log_2(t + 1)/\delta)}{s}}\bigg\} \nonumber\\ 
    & \quad \quad \bigcup\bigg\{\exists s\in [1:t],|\widehat{\mu}_i(s+1:t) - \mu_i(s+1:t)| >  \sqrt{\dfrac{2\log(4\log_2(t + 1)/\delta)}{s}}\bigg\}\label{eq:dev-chg-event1}.
\end{align}
Next we bound the first event of the inequality in \cref{eq:dev-chg-event1} in the same way as \Cref{lemma:conc-opt} as follows:
\begin{align}
    &\Pb\left(\exists s\geq 1, \widehat{\mu}_i(1:s) - \mu_i(1:s) \geq \sqrt{\dfrac{2\log(4\log_2(t + 1) /\delta)}{s}} \right) 
    \leq \dfrac{\delta}{4}\label{eq:conc-bound-chg1}\\
    &\Pb\left(\exists s\geq 1, \widehat{\mu}_i(1:s) - \mu_i(1:s) \leq -  \sqrt{\dfrac{2\log(4\log_2(t + 1) /\delta)}{s}} \right) \leq \dfrac{\delta}{4}.\label{eq:conc-bound-chg2}
\end{align}
Similarly, we can show that the second event of \cref{eq:dev-chg-event1} the inequality,
\begin{align}
   &\Pb\left(\exists s\geq 1, \widehat{\mu}_i(s+1:t) - \mu_i(s+1:t) \geq \sqrt{\dfrac{2\log(4\log_2(t + 1) /\delta)}{t-s}} \right) \leq \dfrac{\delta}{4}\label{eq:conc-bound-chg3}\\
   &\Pb\left(\exists s\geq 1, \widehat{\mu}_i(s+1:t) - \mu_i(s+1:t) \leq -  \sqrt{\dfrac{2\log(4\log_2(t + 1) /\delta)}{t-s}} \right) \leq \dfrac{\delta}{4}.\label{eq:conc-bound-chg4}
\end{align}
Combining the equations \eqref{eq:conc-bound-chg1}, \eqref{eq:conc-bound-chg2}, \eqref{eq:conc-bound-chg3} and \eqref{eq:conc-bound-chg4} we get that,
\begin{align*}
    \Pb\left(\overline{\xi^{chg}_{i,g}}(1:t)\right) \leq \delta.
\end{align*}
It follows then $\Pb\left(\xi^{chg}_{i,g}(1:t)\right) \geq 1 - \delta$. 
The claim of the lemma follows.
\end{proof}

\subsubsection{Critical Number of samples for Changepoint Detection}

\begin{lemma}
\label{lemma:cric-sample-chg}
Let $\wmu_{i}(1:s)$ be the empirical mean of $s$ i.i.d. observations with mean $\mu_{i}(1:s) =: \mu_{i,g-1}$, and $\wmu_{i}(s+1:t)$ be the empirical mean of $(t - s)$ i.i.d. observations with mean $\mu_{i}(s+1:t) =: \mu_{i,g}$. Let $\Delta^{chg}_{i,g}\coloneqq |\mu_{i,g-1} - \mu_{i,g}|$ and the changepoint at $s+1$ round as $t^i_{c_g}$. Define the changepoint event $\xi^{chg}_{i,g}(1:t)$ as in \Cref{lemma:conc-chg}. Then the expected number of times the sub-optimal arm $i$ is sampled before a changepoint $t^i_{c_g}$ is detected is given by
\begin{align*}
    \E[N_i(1:t)] \leq N^{chg}_{i,g}(1:t) + \sum_{s=N^{chg}_{i,g}(1:t) + 1}^t\Pb\left(\overline{\xi^{chg}_{i,g}}(s:t)\right)
\end{align*}
where, $N^{chg}_{i,g}(1:t) = \dfrac{8\log (4\log_2(t+1)/\delta)}{\left(\Delta^{chg}_{i,g}\right)^2}$.
\end{lemma}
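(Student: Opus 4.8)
The plan is to mirror the proof of \Cref{lemma:cric-sample-opt} almost verbatim, with the UCB‑ordering event replaced by the non‑detection event of the confidence‑based scan statistic from \eqref{eq:confidence-chng-scan}. First I would characterize when the changepoint at round $t^i_{c_g}=s+1$ fails to be flagged by round $t$: this forces the two confidence windows built on the slices $1:s$ and $s+1:t$ to overlap, i.e. $|\wmu_i(1:s)-\wmu_i(s+1:t)| < \beta_i(1:s,\delta)+\beta_i(s+1:t,\delta)$. Applying the triangle inequality around the true means $\mu_{i,g-1}$ and $\mu_{i,g}$, such an overlap implies at least one of three events: (i) $|\wmu_i(1:s)-\mu_{i,g-1}| > \beta_i(1:s,\delta)$; (ii) $|\wmu_i(s+1:t)-\mu_{i,g}| > \beta_i(s+1:t,\delta)$; or (iii) $\Delta^{chg}_{i,g} \le 2\bigl(\beta_i(1:s,\delta)+\beta_i(s+1:t,\delta)\bigr)$. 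Events (i) and (ii) are precisely the complement $\overline{\xi^{chg}_{i,g}}$ of the good event of \Cref{lemma:conc-chg}, and this trichotomy is the exact analogue of the three cases appearing around \eqref{eq:cric-sample-opt}.

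Next I would rule out event (iii) once enough samples have accrued on both sides of the change. Substituting $\beta_i(1:s,\delta)=\sqrt{2\log(4\log_2(t+1)/\delta)/s}$ and the corresponding expression for $\beta_i(s+1:t,\delta)$, and using the standard loose bound $\beta_i(1:s,\delta)+\beta_i(s+1:t,\delta)\le 2\sqrt{2\log(4\log_2(t+1)/\delta)/\min\{s,\,t-s\}}$, inequality (iii) forces $\min\{s,\,t-s\} < 8\log(4\log_2(t+1)/\delta)/(\Delta^{chg}_{i,g})^2 = N^{chg}_{i,g}(1:t)$. Hence, on the good event $\xi^{chg}_{i,g}(1:t)$, as soon as arm $i$ has gathered more than $N^{chg}_{i,g}(1:t)$ post‑change samples (and correspondingly many pre‑change samples, which is guaranteed once the separation assumption \Cref{assm:lgr} is in force) the scan statistic must trigger and the arm is restarted.

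Finally I would run the same stopping‑time decomposition as in \Cref{lemma:cric-sample-opt}. Define $\tau^{chg}_{i,s}(1:t)\coloneqq\min\{t'\le t: N_i(t^i_{c_g}:t')=s\}$, the round of the $s$‑th post‑change pull of arm $i$; being $\F_{t'-1}$‑measurable, it lets the anytime deviation bounds of \Cref{lemma:conc-chg} be invoked at the random pull counts. Then $\E[N_i(1:t)] = \E\bigl[\sum_{s=1}^{N^{chg}_{i,g}(1:t)}\I{\tau^{chg}_{i,s}(1:t)\le t}\bigr] + \E\bigl[\sum_{s=N^{chg}_{i,g}(1:t)+1}^{t}\I{\tau^{chg}_{i,s}(1:t)\le t}\bigr]$; the first sum is at most $N^{chg}_{i,g}(1:t)$, while for the second sum a pull after the $N^{chg}_{i,g}(1:t)$‑th post‑change pull can occur only while the scan is still silent despite more than $N^{chg}_{i,g}(1:t)$ post‑change samples, which by the preceding step forces $\overline{\xi^{chg}_{i,g}}(s:t)$; bounding each term by $\Pb(\overline{\xi^{chg}_{i,g}}(s:t))$ and summing gives the claimed inequality.

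The step I expect to be the main obstacle is the bookkeeping in the last paragraph: cleanly turning ``arm $i$ was pulled again even though it already held more than $N^{chg}_{i,g}(1:t)$ fresh samples'' into the deviation event $\overline{\xi^{chg}_{i,g}}(s:t)$ at the correct index, while accounting for the fact that the confidence scan ranges over \emph{all} split points (so one argues it suffices to control the split at the true changepoint $s+1$) and that the pre‑change slice is not kept artificially short. The probabilistic content is entirely supplied by \Cref{lemma:conc-chg}, so no fresh concentration argument is needed — only care with measurability and indexing.
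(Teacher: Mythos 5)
Your proposal follows essentially the same route as the paper's proof: characterize non-detection as overlap of the two confidence windows, use the triangle inequality to reduce it to the two deviation events of \Cref{lemma:conc-chg} plus a ``gap smaller than the widths'' branch, kill that branch once the pull count exceeds $N^{chg}_{i,g}(1:t)$, and finish with the same stopping-time/indicator decomposition as \Cref{lemma:cric-sample-opt}. The only discrepancy is a constant: your branch (iii), $\Delta^{chg}_{i,g}\le 2(\beta_i(1:s,\delta)+\beta_i(s+1:t,\delta))$, actually yields $\min\{s,t-s\}\lesssim 32\log(4\log_2(t+1)/\delta)/(\Delta^{chg}_{i,g})^2$ rather than the stated $8\log(\cdot)/(\Delta^{chg}_{i,g})^2$ — but the paper's own passage from \eqref{eq:cric-sample-chg1} to the threshold is equally loose on this point, so this does not distinguish your argument from theirs in any substantive way.
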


\begin{proof}
Note that the changepoint $t^i_{c_g}$ lie between the round $1$ and $t$. Also note that the mean of the arm $i$ for the rounds $1:s$ is given by $\mu(1:s) =: \mu_{i,g-1}$ and for the rounds $s+1:t$ is given by $\mu(s+1:t) =: \mu_{i,g}$. Then for a sub-optimal arm $i$ it is possible to detect the changepoint at some round $s\in [1:t]$ under the following two events
\begin{align}
    \{\wmu_i(1:s) - \beta_i(1:s,\delta) > \wmu_{i}(s+1:t) + \beta_i(s+1:t,\delta)\}, \quad\{\wmu_i(1:s) + \beta_i(1:s,\delta) < \wmu_{i}(s+1:t) - \beta_i(s+1:t,\delta)\}.\label{eq:cric-sample-chg}
\end{align}
Again note that the events in \cref{eq:cric-sample-chg} is not possible if the following events holds true:
\begin{align}
    &\left\{\bigg|\widehat{\mu}_i(1:s) - \widehat{\mu}_i(s+1:t)\bigg| \leq  \sqrt{\dfrac{2\log(4\log_2(t + 1)/\delta)}{s}} + \sqrt{\dfrac{2\log(4\log_2(t + 1)/\delta)}{t - s}}\right\},\nonumber\\
    &\text{and,}\qquad |\mu_{i, g-1} - \mu_{i, g}| < \max\{\beta_i(1:s,\delta), \beta_i(s+1:t,\delta)\}\label{eq:cric-sample-chg1}.
\end{align}
Now recall from \Cref{lemma:conc-opt} that at round $t$, the event $\left(\xi^{chg}_i(1:t)\right)$ holds with $1-\delta$ probability. We define the changepoint stopping time $\tau^{}_{c_g}$ as follows:
\begin{align*}
    \tau^{}_{c_g}=\min \left\{t: \exists s \in[1, t],|\widehat{\mu}_i(1:s) - \widehat{\mu}_i(s+1:t)| \leq  \sqrt{\dfrac{2\log(4\log_2(t + 1)/\delta)}{s}} + \sqrt{\dfrac{2\log(4\log_2(t + 1)/\delta)}{t - s}}\right\}
\end{align*}
It follows that $\tau^{}_{c_g}$ is $\F_{t-1}$ measurable. Also note that that the second event in \eqref{eq:cric-sample-chg1} is not possible for
\begin{align*}
    |\mu_{i, g-1} - \mu_{i, g}| < \max\{\beta_i(1:s,\delta), \beta_i(s+1:t,\delta)\} &\implies \Delta^{chg}_{i,g} < 2\sqrt{\dfrac{2\log(4\log_2(t + 1)/\delta)}{N^{chg}_{i,g}(1:t)}}\\
    &\implies N^{chg}_{i,g}(1:t) > \dfrac{8\log(4\log_2(t + 1)/\delta)}{(\Delta^{chg}_{i})^2}. 
\end{align*}

Let $N^{chg}_{i,g}(1:t)$ be the total number of samples of arm $i$ before the changepoint $t^i_{c_g}$ is detected. Then we can bound the expected number of pulls of $N^{chg}_{i,g}(1:t)$ as follows:
\begin{align*}
    &\E[N^{chg}_{i,g}(1:t)] = \E[\sum_{s=1}^{N^{chg}_{i,g}(1:t)}\I{\tau^{}_{c_g} \leq N^{chg}_{i,g}(1:t)}] + \E[\sum_{s = N^{chg}_{i,g}(1:t) + 1}^t \I{\tau^{}_{c_g} > N^{chg}_{i,g}(1:t)}] \\
    &\leq  N^{chg}_{i,g}(1:t) + \sum_{s = N^{chg}_{i,g}(1:t) + 1}^t\Pb\left(\left\{\tau^{}_{c_g} > N^{chg}_{i,g}(1:t)\right\}\right)\\
    &\leq  N^{chg}_{i,g}(1:t) + \sum_{s = N^{chg}_{i,g}(1:t) + 1}^t\Pb\left(\left\{\left|\wmu_i(1:s) - \wmu_i(s+1:t)\right| >  \sqrt{\dfrac{2\log(4\log_2(t + 1)/\delta)}{s}} + \sqrt{\dfrac{2\log(4\log_2(t + 1)/\delta)}{t - s}}\right\}\right)\\
    &\leq N^{chg}_{i,g}(1:t) + \sum_{s=N^{chg}_{i,g} + 1}^t\Pb\left(\overline{\xi^{chg}_{i,g}}(s:t)\right).
\end{align*}
The claim of the lemma follows.
\end{proof}

\subsubsection{Total Number of Samples of Baseline Arm}

\begin{lemma}
\label{lemma:cric-sample-bse}
Let us consider a single time segment $\rho_g$ starting from round $1$ till round $t$. When the optimality event $\xi^{opt}(1:t)$ holds with high probability $1-\delta$ in $\rho_g$ then the total pulls of the baseline arm is bounded by
\begin{align*}
    N^{bse}_0(1:t) &\overset{}{\leq} \dfrac{1}{\alpha\mu_{0,g}}\sum_{i\in[K]}\dfrac{16\log(4\log_2(\tau + 1)/\delta)}{\max\{\Delta^{opt}_{i,g}, \Delta^{opt}_{0,g} - \Delta^{opt}_{i,g}\}}.
\end{align*}
\end{lemma}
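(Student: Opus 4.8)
The plan is to adapt the conservative-bandit argument of \citet{wu2016conservative} to a single segment $\rho_g$ (reindexed so that it starts at round $1$), on which every mean is constant, equal to $\mu_{i,g}$. I would condition throughout on the optimality event $\xi^{opt}(1:t)$ of \Cref{lemma:conc-opt}, so that $|\wmu_i(1:s)-\mu_{i,g}|\le\beta_i(1:s,\delta)$ for every arm $i$ and every $s\le t$. Let $\tau\le t$ be the last round in $1:t$ at which \glb plays the baseline arm; since \glb plays the baseline only at rounds where $\wZ(1:\cdot)<0$ (\Cref{alg:glb}), all baseline plays occur at such rounds, so $N^{bse}_0(1:t)=N_0(1:\tau)$ and $\wZ(1:\tau)<0$, and it suffices to upper bound $N_0(1:\tau)$.

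\textbf{Step 1 (lower bound the budget).} Expanding $\wZ(1:\tau)$ from \eqref{eq:safe-budget}, absorbing the current UCB-arm term into a \emph{confidence slack} $S$, and using the good event to write $L_{I_s}(1:s)\ge\mu_{I_s,g}-2\beta_{I_s}(1:s,\delta)$ and $U_0(1:\tau)\le\mu_{0,g}+2\beta_0(1:\tau,\delta)$, then grouping the sum by arm, using $\tau=\sum_{i\in[K]^+}N_i(1:\tau)$, and the identity $\mu_{i,g}-(1-\alpha)\mu_{0,g}=\Delta^{opt}_{0,g}-\Delta^{opt}_{i,g}+\alpha\mu_{0,g}$, I expect
\begin{align*}
\wZ(1:\tau)\;\ge\;\alpha\mu_{0,g}\,N_0(1:\tau)+\sum_{i\in[K]}N_i(1:\tau)\bigl(\Delta^{opt}_{0,g}-\Delta^{opt}_{i,g}+\alpha\mu_{0,g}\bigr)-S,
\end{align*}
with $S\le 2\sum_{i\in[K]^+}\sum_{k\le N_i(1:\tau)}\beta_i^{(k)}+2\tau\beta_0(1:\tau,\delta)$, where $\beta_i^{(k)}$ is the width $\beta_i$ at the arm's $k$-th pull. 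Combining with $\wZ(1:\tau)<0$ isolates $\alpha\mu_{0,g}N_0(1:\tau)<\sum_{i\in[K]}N_i(1:\tau)\bigl(\Delta^{opt}_{i,g}-\Delta^{opt}_{0,g}-\alpha\mu_{0,g}\bigr)^{+}+S$.

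\textbf{Step 2 (exploration term).} Within $\rho_g$ a non-baseline arm is pulled only as the UCB arm and only while $\wZ(1:\cdot)\ge0$, so on the good event \Cref{lemma:cric-sample-opt} gives $N_i(1:\tau)\le N^{opt}_{i,g}=8\log(4\log_2(\tau+1)/\delta)/(\Delta^{opt}_{i,g})^2$ for each $i\in[K]$. A pull of $i$ contributes a positive summand only when $\Delta^{opt}_{i,g}>\Delta^{opt}_{0,g}+\alpha\mu_{0,g}$, and then $\Delta^{opt}_{0,g}-\Delta^{opt}_{i,g}<0$, so $\max\{\Delta^{opt}_{i,g},\Delta^{opt}_{0,g}-\Delta^{opt}_{i,g}\}=\Delta^{opt}_{i,g}$; hence $N_i(1:\tau)(\Delta^{opt}_{i,g}-\Delta^{opt}_{0,g}-\alpha\mu_{0,g})^{+}\le N^{opt}_{i,g}\Delta^{opt}_{i,g}=8\log(4\log_2(\tau+1)/\delta)/\max\{\Delta^{opt}_{i,g},\Delta^{opt}_{0,g}-\Delta^{opt}_{i,g}\}$, which is the per-arm term of the claim up to a factor of two.

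\textbf{Step 3 (the obstacle).} The delicate part is showing $S$ is of the same order, so that it can be folded in at the price of the constant. The sums $\sum_{k\le N_i}\beta_i^{(k)}\le 2\sqrt{2N_i\log(\cdot)}$ are controlled by $N_i\le N^{opt}_{i,g}$, but $2\tau\beta_0(1:\tau,\delta)$ couples $\tau$ with $N_0(1:\tau)$ and cannot be bounded before $N_0$ is. I would break this by noting $\tau\le N_0(1:\tau)+\sum_{i\in[K]}N^{opt}_{i,g}$, which turns the inequality into $\alpha\mu_{0,g}N_0-c\sqrt{N_0\log(\cdot)}\le(\text{bounded})$ for an absolute $c$, a quadratic in $N_0$ solvable at the cost of enlarging the constant to $16$; dividing by $\alpha\mu_{0,g}$ and summing over $i\in[K]$ then yields the claim. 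Keeping the several additive $\sqrt{\cdot}$ and $O(1)$ contributions collapsed under one clean constant, rather than a cascade of lower-order terms, is where the real bookkeeping lies, and is essentially the only place the argument departs from the template of \citet{wu2016conservative}.
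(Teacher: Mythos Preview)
Your proposal is essentially the same argument as the paper's: both adapt the conservative-bandit budget analysis of \citet{wu2016conservative} to a single stationary segment, define $\tau$ as the last round the baseline is played, exploit $\wZ(1:\tau)<0$, expand the budget, group by arm using $\tau=\sum_{i\in[K]^+}N_i$, and cap each $N_i$ for $i\in[K]$ by the optimality sample bound $N^{opt}_{i,g}$ from \Cref{lemma:cric-sample-opt}. Your Step~1 decomposition and Step~2 per-arm bound are exactly what the paper does (the paper's ``Part~A / Part~B'' split is just a reorganisation of the same inequality, and its case analysis on the sign of $c_i=(1-\alpha)(\mu_0+\Delta_0/2)-\mu_i$ is equivalent to your observation about which branch of the $\max$ is active).

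The only substantive difference is your Step~3 treatment of the baseline confidence width. You keep $2\tau\beta_0(1:\tau,\delta)$ inside the slack $S$, note the resulting coupling with $N_0$, and propose to solve a quadratic in $\sqrt{N_0}$. That works, but it picks up an additive $O\bigl(\log(\cdot)/(\alpha\mu_{0,g})^2\bigr)$ term (from the $B^2$ part of the standard $N_0\le 2A+B^2$ trick), which is not present in the lemma as stated. The paper avoids this coupling altogether: instead of carrying $\beta_0$ as slack, it bounds $U_0(1:\tau)\le\mu_{0,g}+\Delta^{opt}_{0,g}/2$ directly, which is valid on the good event once $N_0(1:\tau)\ge 8\log(4\log_2(\tau+1)/\delta)/(\Delta^{opt}_{0,g})^2$ (and this threshold can be assumed without loss of generality, since otherwise $N_0$ is already of the claimed order). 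With $U_0$ bounded by a constant independent of $N_0$, the right-hand side no longer involves $N_0$ at all, and one divides through by $\alpha\mu_{0,g}$ to finish. If you want the exact constant $16$ of the lemma rather than a bound with an extra $(\alpha\mu_{0,g})^{-2}$ term, swap your quadratic step for this direct cap on $U_0$.
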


\begin{proof}
We first define the estimated budget as follows:
\begin{align}
    \wZ(1:t) &\coloneqq \sum_{s=1}^{t-1}L_{I_s}(1:s) + L_{u_t}(1:t) - (1 - \alpha)tU_0(1:t-1)\nonumber\\
    &= \sum_{i=1}^K N_i(1:t-1) L_{i}(1:t-1) + L_{u_t}(1:t) + N_0(1:t-1)U_0(1:t-1) - (1 - \alpha)tU_0(1:t-1) \label{eq:budget-sample1}
\end{align}
Now note that in \cref{eq:budget-sample1} if $N_{0}(1:t-1)<(1-\alpha) t,$ then the last term is negative and $U_{0}(1:t) \geq \mu_{0}(1:t)$. Conversely, if $N_{0}(1:t-1) \geq (1-\alpha) t$ then the constraint is satisfied as:
\begin{align*}
    \sum_{s=1}^{t} \mu_{I_{s}}(1:t) \geq N_{0}(1:t-1) \mu_{0}(1:t-1) \geq(1-\alpha) \mu_{0}(1:t) t
\end{align*}
Let $\tau^{bse}_{0}(1:t) \coloneqq \max\{t'\in[1:t]: I_{t'} = 0, \wZ(1:t'-1) < 0\}$ be the last round the baseline arm is pulled. 
Note that at $\tau^{bse}_{0}(1:t)-1$ the safety budget  $\wZ(\tau^{bse}_{0,s}(1:t) -  1) < 0$. In the following proof we denote $\tau = \tau^{bse}_{0}(1:t)$ for notational convenience. It then follows that:
\begin{align*}
    &\sum_{i = 1}^K N_i(1:\tau - 1) L_{i}(1:\tau - 1) + L_{u_{\tau}}(1:\tau) + N_0(1:\tau-1)U_0(1:\tau-1)
     - (1 - \alpha)\tau U_0(1:\tau) < 0\\
    &\overset{(a)}{\implies} \sum_{i= 1}^K N_i(1:\tau-1) L_{i}(1:\tau - 1) +  N_0(1:\tau-1)U_0(1:\tau-1)
     - (1 - \alpha)\tau U_0(1:\tau) < 0\\
    &\overset{(b)}{\implies} \sum_{i= 1}^K N_i(1:\tau-1) L_{i}(1:\tau) +  N_0(1:\tau-1)U_0(1:\tau)\\
    &\quad\quad - (1 - \alpha)\left(\sum_{i=0}^K N_{i}(1:\tau - 1) + 1\right)U_0(1:\tau) < 0\\
    &\overset{}{\implies} \sum_{i= 1}^K N_i(1:\tau - 1)\left[L_{i}(1:\tau) - (1 - \alpha)U_0(1:\tau)\right]  - (1-\alpha)U_0(1:\tau)\\
    &\quad\quad + \alpha N_0(1:\tau - 1)U_0(1:N_0(1:\tau)  < 0\\
    &\overset{}{\implies}  \underbrace{\alpha N_0(1:\tau - 1)\left[U_0(1:\tau)\right]}_{\textbf{Part A}}    
    < \underbrace{\sum_{i= 1}^K N_i(1:\tau - 1)\bigg[ (1 - \alpha)U_0(1:\tau) - L_{i}(1:\tau)\bigg] + (1-\alpha)U_0(1:\tau) }_{\textbf{Part B}}
\end{align*}
where, $(a)$ follows by dropping $L_{u_t}(1:\tau) > 0$, and $(b)$ follows by introducing 
\begin{align*}
    \tau = \left(\sum_{i=0}^K N_{i}(1:\tau - 1) + 1\right).
\end{align*}
Now note that at the event $\bigcap_{i=1}^K\xi_i^{opt}(1:t)$, $\mu_0(1:\tau) < U_0(1:\tau)$. Hence, we can lower bound Part A as follows:
\begin{align*}
    \text{Part A} \leq \alpha N_0(1:\tau - 1)\left[\mu_0(1:\tau) \right].
\end{align*}
Similarly we can upper bound Part B as follows:
\begin{align*}
    \text{Part B} \! &\overset{}{=} \sum_{i=1}^K N_i(1:\tau - 1)\bigg[ (1 - \alpha)U_0(1:\tau) - L_{i}(1:\tau)\bigg] + (1-\alpha)U_0(1:\tau)  \\
\end{align*}
Now note that at $\bigcap_{i=1}^K\xi_i^{opt}(1:\tau)$ we have
\begin{align*}
    U_0(1:\tau) \leq \mu_0(1:\tau) + \sqrt{\dfrac{2\log(4\log_2(\tau + 1)/\delta)}{N_i(1:\tau)}} \overset{(a)}{\leq} \mu_0(1:\tau) + \dfrac{\Delta^{opt}_{0}(1:\tau)}{2}
\end{align*}
where, $(a)$ follows for $N_i(1:\tau) \geq \dfrac{8\log(4\log_2(\tau + 1)/\delta)}{(\Delta^{opt}_{i}(1:\tau))^2}$ with probability greater than $1-2\delta$. Then for the Part B we can show that 
\begin{align*}
    \text{Part B} &\leq \sum_{i\in[K]} N_i(1:\tau - 1)\bigg[ (1 - \alpha)\left(\mu_0(1:\tau) + \dfrac{\Delta^{opt}_{0}(1:\tau)}{2}\right) - L_{i}(1:\tau)\bigg]
    + (1-\alpha)\left(\mu_0(1:\tau) + \dfrac{\Delta^{opt}_{0}(1:\tau)}{2}\right) \\
    &\leq \sum_{i\in[K]} N_i(1:\tau - 1)\bigg[ (1 - \alpha)\left(\mu_0(1:\tau) + \dfrac{\Delta^{opt}_{0}(1:\tau)}{2}\right)  - \mu_{i}(1:\tau) \\
    &+ \sqrt{\dfrac{2\log(4\log_2(\tau + 1)/\delta)}{N_i(1:\tau-1)}}\bigg] + (1-\alpha)\left(\mu_0(1:\tau) + \dfrac{\Delta^{opt}_{0}(1:\tau)}{2}\right) \\
    &\overset{(a)}{=} \sum_{i\in[K]}N_i(1:\tau-1)c_i + \sqrt{N_i(1:\tau-1) \left(2\log(4\log_2(\tau + 1)/\delta)\right)}
    + (1 - \alpha)\left(\mu_0(1:\tau) + \dfrac{\Delta^{opt}_0(1:\tau)}{2}\right) \\
    &\overset{(b)}{\leq} \sum_{i\in[K]}\dfrac{8c_i\log(4\log_2(\tau + 1)/\delta)}{(\Delta^{opt}_{i}(1:\tau))^2} + \dfrac{8\log(4\log_2(\tau + 1)/\delta)}{(\Delta^{opt}_{i}(1:\tau))} + (1 - \alpha)\left(\mu_0(1:\tau) + \dfrac{\Delta^{opt}_0(1:\tau)}{2}\right) 
\end{align*}
where, in $(a)$ we define $c_i = (1 - \alpha)\left(\mu_0(1:\tau) + \dfrac{\Delta_0(1:\tau)}{2}\right)  - \mu_i(1:\tau)$, and $(b)$ follows by setting $N_i(1:\tau-1) \geq \dfrac{8\log(4\log_2(\tau + 1)/\delta)}{(\Delta^{opt}_{i}(1:\tau))^2}$. Now for $c_i \geq 0$ we can show that
\begin{align*}
    &(1 - \alpha)(\mu_0(1:\tau) + \dfrac{\Delta^{opt}_0(1:\tau)}{2}) - \mu_i(1:\tau) \geq 0\\
    \implies& \mu_{i^*}(1:\tau) - (1-\alpha)\left(\mu_0(1:\tau) + \dfrac{\Delta^{opt}_0(1:\tau)}{2}\right) \leq \mu_{i^*}(1:\tau) - \mu_i(1:\tau)\\
    \implies& \Delta_i(1:\tau) \geq (1+\alpha)\dfrac{\Delta^{opt}_0(1:\tau)}{2} + \alpha(\mu_0(1:\tau))
\end{align*}
Again for $c_i < 0$ we can show that
\begin{align*}
    &(1 - \alpha)(\mu_0(1:\tau) + \dfrac{\Delta^{opt}_0(1:\tau)}{2})  - \mu_i(1:\tau) < 0\\
    \implies& \mu_{i^*}(1:\tau) - (1-\alpha)\left(\mu_0 + \dfrac{\Delta^{opt}_0(1:\tau)}{2} \right) > \mu_{i^*}(1:\tau) - \mu_i(1:\tau)\\
    \implies& \Delta_i(1:\tau) < (1+\alpha)\dfrac{\Delta^{opt}_0(1:\tau)}{2} + \alpha(\mu_0(1:\tau) )
\end{align*}
Combining everything we can show that
\begin{align*}
    \alpha N_0(1:\tau - 1)\left[\mu_0(1:\tau)\right] &\leq \sum_{i\in[K]}\dfrac{8c_i\log(4\log_2(\tau + 1)/\delta)}{(\Delta^{opt}_{i}(1:\tau))^2} + \dfrac{8\log(4\log_2(\tau + 1)/\delta)}{(\Delta^{opt}_{i}(1:\tau))}
    + (1 - \alpha)\left(\mu_0(1:\tau) + \dfrac{\Delta^{opt}_0(1:\tau)}{2}\right)\\
    N_0(1:\tau) &\overset{(a)}{\leq} \dfrac{1}{\alpha\mu_{0}(1:\tau)}\sum_{i\in[K]}\dfrac{16\log(4\log_2(\tau + 1)/\delta)}{\max\{\Delta^{opt}_{i}(1:\tau), 0.5(1-\alpha)\Delta^{opt}_{0}(1:\tau) + \alpha\mu_0(1:\tau) - \Delta^{opt}_i(1:\tau)\}}\\
    N_0(1:\tau) &\overset{}{\leq} \dfrac{1}{\alpha\mu_{0}(1:\tau)}\sum_{i\in[K]}\dfrac{16\log(4\log_2(\tau + 1)/\delta)}{\max\{\Delta^{opt}_{i}(1:\tau), \Delta^{opt}_0(1:\tau) - \Delta^{opt}_i(1:\tau)\}}
\end{align*}
where, $(a)$ follows as $\!\!\alpha(\mu_0(1:t) - B) \!\!\geq  \!\!\dfrac{\alpha(\mu_0(1:t)\! - \! B)}{2}$.
Now considering the $g$-th changepoint we can show that
\begin{align*}
    N^{bse}_0(1:\tau) = N^{opt}_{0,g} =  \dfrac{1}{\alpha\mu_{0,g}}\sum_{i\in[K]}\dfrac{16\log(4\log_2(\tau + 1)/\delta)}{\max\{\Delta^{opt}_{i,g}, \Delta^{opt}_{0,g} - \Delta^{opt}_{i,g}\}}.
\end{align*}
The claim of the lemma follows.
\end{proof}

\subsection{Lower Bound in Safe Global Changepoint Setting}
\label{app:lower-bound}
\begin{customtheorem}{3}\textbf{(Restatement)}
Let $\mathcal{E}$, $\overline{\mathcal{E}}$ be two bandit environment and there exits a global changepoint at $t_{c_1} = T/2$. Let $\alpha>0$ be the safety parameter and $\mu_{0,\min}$ be the mean of the minimum safety mean over the changepoint segments. Then the lower bound is given by
\begin{align*}
    \E_{\mathcal{E}, \overline{\mathcal{E}}} [R_{T}] \geq \left\{\frac{K}{(16 e+8) \alpha \mu_{0, \min}} + \dfrac{\log T}{\alpha \mu_{0,\min}}, \frac{\sqrt{K T}}{\sqrt{32 e + 16}} + \dfrac{\log T}{\alpha \mu_{0,\min}}\right\}.
\end{align*}
\end{customtheorem}

\textbf{Step 1 (Safe Risk):}
Consider a setting with a single changepoint $g$. For the segment $\rho_0$ from $t=1$ to $t_{c_1}-1$ define the environment $\mathcal{E}\in[0,1]^K$ such that $\mu_{i}(1:t_{c_1}-1)=\mu_{0}(1:t_{c_1}-1)-\Delta$ for all $i \in[K]$. We assume that $\mu_{0}(1:t_{c_1}-1)$ and $\Delta$ are such that $\mu_{i}(1:t_{c_1}-1) \geq 0$. Define environment $\mathcal{E}^{(i)}$ for each $i=1, \ldots, K$ by
\begin{align*}
\mathcal{E}_{j}^{(i)}= \begin{cases}\mu_{0}(1:t_{c_1}-1)+\Delta, & \text { for } j=i \\ \mu_{0}(1:t_{c_1}-1)-\Delta, & \text { otherwise. }\end{cases}
\end{align*}
Similarly for the segment $\rho_1$ from $t_{c_1}$ to $T$ define the environment $\overline{\mathcal{E}}\in[0,1]^K$ such that $\mu_{i}(t_{c_1}:T)=\mu_{0}(t_{c_1}:T)-\Delta$ for all $i \in[K]$. Again, we assume that $\mu_{0}(t_{c_1}:T)$ and $\Delta$ are such that $\mu_{i}(t_{c_1}:T) \geq 0$. Define environment $\overline{\mathcal{E}^{(i)}}$ for each $i=1, \ldots, K$ by
\begin{align*}
\overline{\mathcal{E}}_{j}^{(i)}= \begin{cases}\mu_{0}(t_{c_1}:T)+\Delta', & \text { for } j=i \\ \mu_{0}(t_{c_1}:T)-\Delta', & \text { otherwise. }\end{cases}
\end{align*}
where, $\Delta' > 0$. 
Note that $\mathcal{E}$ and $\mathcal{E}^{(i)}$ differ only in the $i$-th component: $\mu_{i}(1:t_{c_1}-1)=\mu_{0}(1:t_{c_1}-1)-\Delta$ whereas $\mu_{i}^{(i)}(1:t_{c_1}-1)=\mu_{0}(1:t_{c_1}-1)+\Delta$. Then the KL divergence between the reward distributions of the $i$ th arms is given by
\begin{align*}
    \operatorname{KL}\left(\mathcal{E}_{i}, \mathcal{E}_{i}^{(i)}\right)=(2 \Delta)^{2} / 2=2 \Delta^{2}
\end{align*}
Using Theorem $9$ of \citep{wu2016conservative} (see \Cref{prop:conservative-lower}) we can show that in $\mathcal{E}$ the lower bound to the safe regret is given by
\begin{align}
\E_{\mathcal{E}} [R_{t_{c_1}}] \geq\left\{\begin{array}{l}\frac{\sqrt{K t_{c_1}}}{\sqrt{16 e+8}}, \quad \text { when } \alpha \geq \frac{\sqrt{K}}{\mu_{0}(1:t_{c_1}-1) \sqrt{(16 e+8) t_{c_1}}} \\ \frac{K}{(16 e+8) \alpha \mu_{0}(1:t_{c_1}-1)}, \quad \text { otherwise }\end{array}\right. \label{eq:regret:1}
\end{align}
and for $\overline{\mathcal{E}}$ the lower bound is given by
\begin{align}
\E_{\overline{\mathcal{E}}} [R_{T-t_{c_1}}] \geq\left\{\begin{array}{l}\frac{\sqrt{K (T-t_{c_1})}}{\sqrt{16 e+8}}, \quad \text { when } \alpha \geq \frac{\sqrt{K}}{\mu_{0}(t_{c_1}:T) \sqrt{(16 e+8) (T -t_{c_1})}} \\ \frac{K}{(16 e+8) \alpha \mu_{0}(t_{c_1}:T)}, \quad \text { otherwise }\end{array}\right. \label{eq:regret:2}
\end{align}

\textbf{Step 2 (Changepoint risk):} Now we introduce the changepoint detection framework. 
In this framework define a false alarm rate constraint specified by a tuple $(m, \alpha)$, where $m \in \mathbb{Z}^{+}$denotes an a priori fixed time such that the probability of an admissible change detector stopping before time $m$ is at most $\delta \in(0,1)$, namely, $\mathbb{P}^{}[\tau<m] \leq \delta .$ This framework is similar to what defined in \citep{gopalan2021bandit}. Then we know from Theorem $2$ of \citep{gopalan2021bandit} (see \Cref{prop:lower-bound-changepoint}) that
for any bandit changepoint algorithm satisfying $\mathbb{P}^{}[\tau<m] \leq \delta$, we have
\begin{align}
    \mathbb{E}_{\mathcal{E}, \overline{\mathcal{E}}}[\tau] \geq \min \left\{\frac{\frac{1}{20} \log \frac{1}{\delta}}{\max _{i \in [K]} \operatorname{KL}\left(\mathcal{E}_{i}, \overline{\mathcal{E}}_{i}^{}\right)}, \frac{m}{2}\right\} = \min \left\{\frac{\frac{1}{10} \log \frac{1}{\delta}}{\underline{\Delta}^2}, \frac{m}{2}\right\}.
    \label{eq:regret:3}
\end{align}
where $\underline{\Delta} > 0$ is the minimum changepoint gap between any arms in $\mathcal{E}, \overline{\mathcal{E}}$. 

\textbf{Step 3:} Combining the two steps above and using \eqref{eq:regret:1}, \eqref{eq:regret:2}, and \eqref{eq:regret:3} we can show that the total regret lower bound is 
\begin{align*}
&\E_{\mathcal{E}, \overline{\mathcal{E}}} [R_{T}] \geq\left\{\begin{array}{l} \E_{\overline{\mathcal{E}}} [R_{t_{c_1}}] + \E_{\overline{\mathcal{E}}} [R_{T-t_{c_1}}] + \underline{\Delta}\E_{\mathcal{E}, \overline{\mathcal{E}}}[\tau] , \quad \text { when } \alpha \geq \frac{\sqrt{K}}{\mu_{0}(1:t_{c_1}-1) \sqrt{(16 e+8) t_{c_1}}} \\ \E_{\overline{\mathcal{E}}} [R_{t_{c_1}}] + \E_{\overline{\mathcal{E}}} [R_{T-t_{c_1}}] + \underline{\Delta}\E_{\mathcal{E}, \overline{\mathcal{E}}}[\tau], \quad \text { otherwise }\end{array}\right. \\
&\implies \E_{\mathcal{E}, \overline{\mathcal{E}}} [R_{T}] \geq\left\{\begin{array}{l} \frac{\sqrt{K (T-t_{c_1})}}{\sqrt{16 e+8}} + \frac{\sqrt{K (T-t_{c_1})}}{\sqrt{16 e+8}} + \min \left\{\frac{\frac{1}{10} \log \frac{1}{\delta}}{\underline{\Delta}}, \frac{m\underline{\Delta}}{2}\right\} , \quad \text { when } \alpha \geq \frac{\sqrt{K}}{\mu_{0}(1:t_{c_1}-1) \sqrt{(16 e+8) t_{c_1}}} \\ \frac{K}{(16 e+8) \alpha \mu_{0}(1:t_{c_1}-1)} + \frac{K}{(16 e+8) \alpha \mu_{0}(t_{c_1}:T)} + \min \left\{\frac{\frac{1}{10} \log \frac{1}{\delta}}{\underline{\Delta}}, \frac{m\underline{\Delta}}{2}\right\}, \quad \text { otherwise }\end{array}\right.\\
&\implies \E_{\mathcal{E}, \overline{\mathcal{E}}} [R_{T}] \geq\left\{\begin{array}{l} \frac{\sqrt{K (T-t_{c_1})}}{\sqrt{16 e+8}} + \frac{\sqrt{K (T-t_{c_1})}}{\sqrt{16 e+8}} + \dfrac{\log T}{\alpha \mu_{0,\min}} , \quad \text { when } \alpha \geq \frac{\sqrt{K}}{\mu_{0}(1:t_{c_1}-1) \sqrt{(16 e+8) t_{c_1}}} \\ \frac{K}{(16 e+8) \alpha \mu_{0}(1:t_{c_1}-1)} + \frac{K}{(16 e+8) \alpha \mu_{0}(t_{c_1}:T)} + \dfrac{\log T}{\alpha \mu_{0,\min}}, \quad \text { otherwise }\end{array}\right.
\end{align*}
where, $(a)$ follow by setting $\underline{\Delta} = \alpha\mu_{0,\min}$, $m=T$ as the last time the changepoint needs to be detected, and $\delta = \frac{1}{T}$.
Combining everything and setting $t_{c_1} = T/2$ we can show that
\begin{align*}
    E_{\mathcal{E}, \overline{\mathcal{E}}} [R_{T}] \geq \left\{\frac{K}{(16 e+8) \alpha \mu_{0, \min}} + \dfrac{\log T}{\alpha \mu_{0,\min}}, \frac{\sqrt{K T}}{\sqrt{32 e + 16}} + \dfrac{\log T}{\alpha \mu_{0,\min}}\right\}.
\end{align*}

\subsection{Additional Experiment Details}
\label{app:addl-expt}

\textbf{Setting for choosing  different $\alpha$:} In this experiment we illustrate the tension between the risk parameter $\alpha$ and the cumulative regret for various values of $\alpha$. The \lcs environment consist of $4$ arms (including baseline) and the evolution of means are shown in \Cref{fig:expt3} (Left). We plot the regret versus various values of $\alpha\in[0,1]$ in \Cref{fig:expt3} (Right). We see that \loc has high regret in high risk ($\alpha \rightarrow 0$) setting. However its performance lies between \ducb and \glrucb which are safety oblivious algorithms. \loc outperforms both \umoss and safety aware \cucb algorithm. We see that \cucb only performs well when risk is low ($\alpha \rightarrow 1$). This is the changepoints occur too fast and when ($\alpha \rightarrow 0$) the \cucb is forced to choose the baseline arm always. 



\begin{figure*}[!ht]
\centering
\begin{tabular}{cc}
\label{fig:alpha_env}\includegraphics[scale = 0.39]{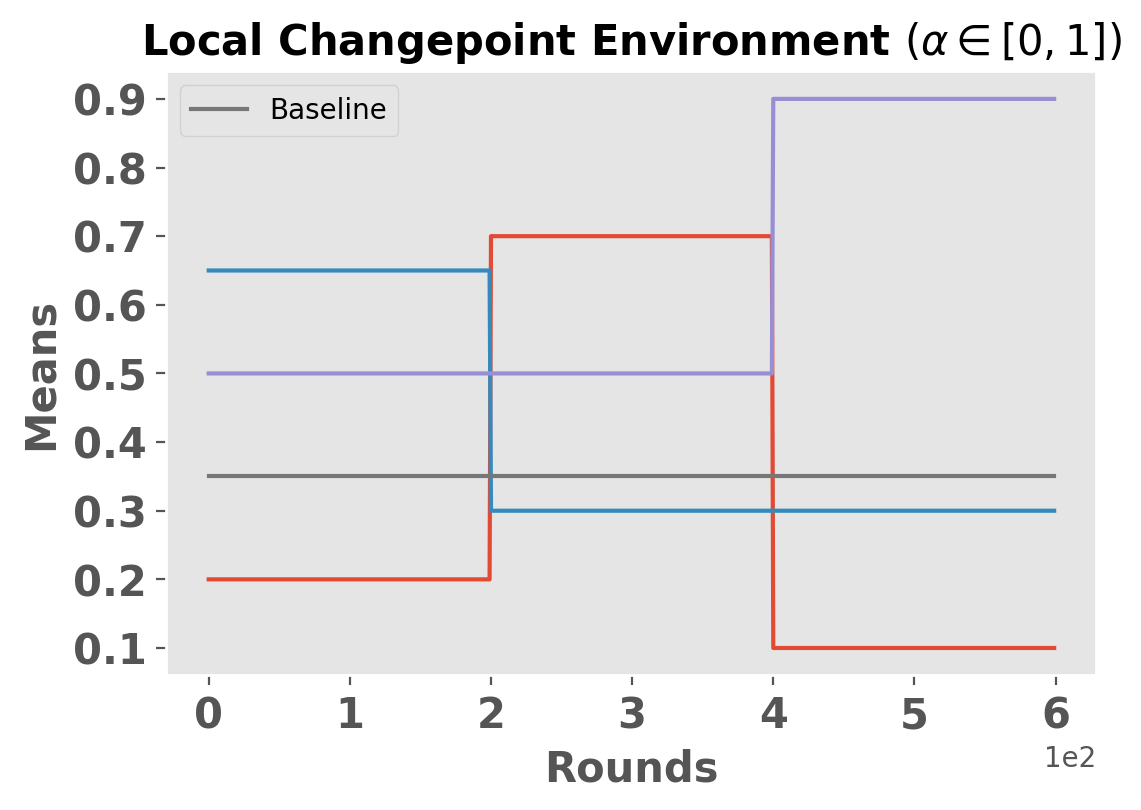} & 
\label{fig:alpha}\includegraphics[scale = 0.39]{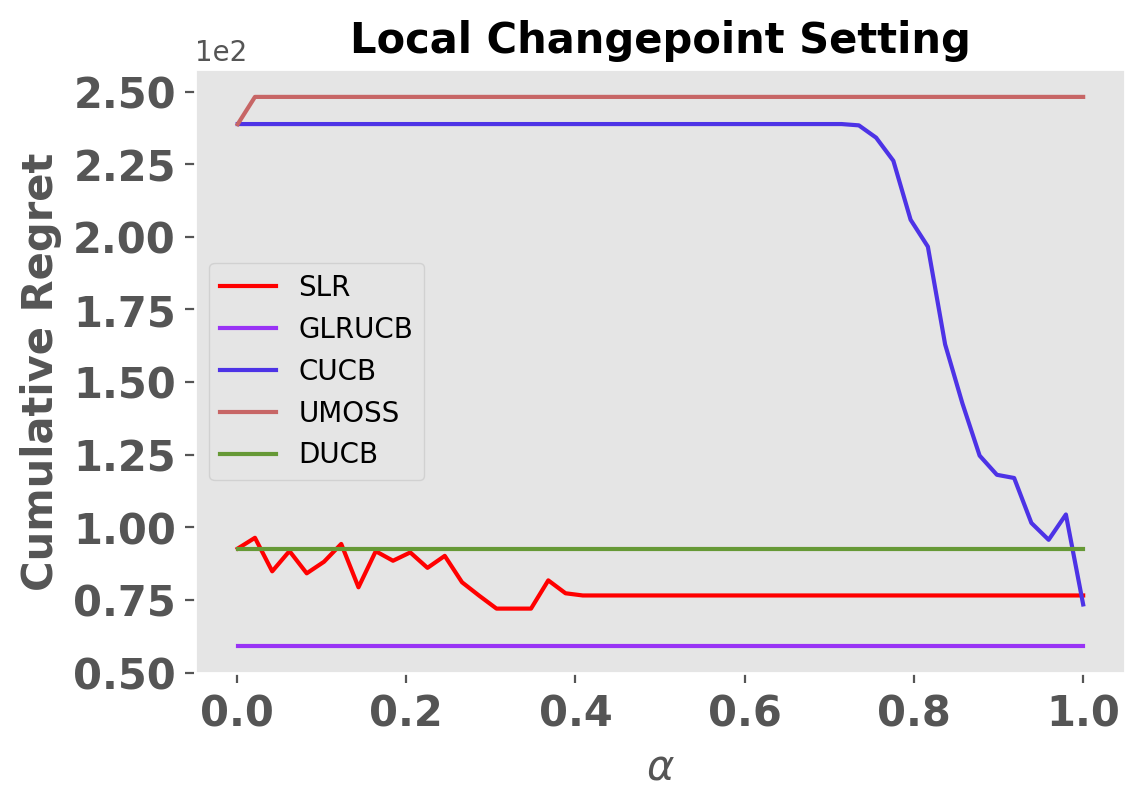}
\end{tabular}
\caption{(Left) Local changepoint environment with $T=1500$, $K^+ = 4$ and changepoints at $t = 500, 1000$ and $1500$. Again note that some arms do not change at these changepoints. (Right) Shows the regret vs. $\alpha\in[0,1]$ plot.}
\label{fig:expt3}
\end{figure*}

\begin{remark}
In all the environments we tested against adversarial \cucb as recommended in \citet{wu2016conservative} for adversarial setting. The adversarial \cucb again always samples the baseline arm as it is not suited for the safety constraint \eqref{eq:constraint1} and achieves linear regret similar to \cucb. So we omit it's plot from the figures.

\end{remark}

\textbf{Time Complexity} Note that \adswitch suffers an expensive time complexity of $\Theta(KT^4)$ for performing the changepoint detection test whereas \glrucb, \ucbcpd, \mucb, \glb, \loc suffers a time complexity of $\Theta(KT^2)$.

\textbf{Algorithmic Implementations:} We implement the following algorithms:
    

1) \textbf{\glrucb \citep{besson2019generalized}:} This algorithm is an actively adaptive changepoint detection algorithm. It uses the GLRT statistic to detect changepoints and restart. It is implemented with the Gaussian KL function with known variance to calculate the GLRT statistic. It works for both \gcs and \lcs environment and requires the horizon $T$ as input.

2) \textbf{\ucbcpd \citep{mukherjee2019distribution}:} This algorithm is also an actively adaptive changepoint detection algorithm. It uses confidence based scan statistic and we use the $\beta_i(t,\delta)$ in \eqref{eq:beta-delta} to implement its changepoint detector as opposed to the time-uniform bound used by \citet{mukherjee2019distribution}. The performance difference is negligible for such a short horizon. It requires the horizon $T$ as input.

3) \textbf{\ucbcpde:} This algorithm is also an actively adaptive changepoint detection algorithm. \ucbcpde is similar to \ucbcpd but suited for \lcs as it conducts forced exploration of arms. We again use the same $\beta_i(t,\delta)$ in \eqref{eq:beta-delta} to implement its changepoint detector. It also requires the horizon $T$ as input.

4) \textbf{\ducb \citep{garivier2011kl}:} This is a passive algorithm that does not detect the changepoints and restart. We set the exploration parameter $\gamma=1-\frac{1}{4} \sqrt{\frac{1}{T}}$ as recommended in \citet{garivier2011kl}.

5) \textbf{\cucb \citep{wu2016conservative}:} This is the safety aware algorithm for the stochastic bandit setting. It uses the new safety constraint in \eqref{eq:constraint1} instead of the constraint in \citet{wu2016conservative} (see \Cref{prop:conservative-upper}). So it calculates the new empirical safety budget $\wZ(1:t)$ \eqref{eq:safe-budget} and when the safety budget is positive it uses the UCB-index \citep{auer2002finite} to sample an arm, otherwise it samples the baseline.

6) \textbf{Adversarial \cucb \citep{wu2016conservative}:} This is a safety aware algorithm for the adversarial bandit setting. It uses the new safety constraint in \eqref{eq:constraint1} instead of the constraint in \citet{wu2016conservative}. So it calculates the new empirical safety budget $\wZ(1:t)$ \eqref{eq:safe-budget} and when the safety budget is positive it uses the EXP3 \citep{auer2002nonstochastic} to sample an arm, otherwise it samples the baseline.

7) \textbf{\umoss \citep{lattimore2015pareto}}: The Unbalanced Moss (\umoss) is a conservative exploration bandit. The \umoss is tuned with the parameter
\begin{align*}
    \tilde{B}_{0} = \frac{T K}{\sqrt{T K}+\frac{K}{\alpha \mu_{0}}}, \quad \tilde{B}_{i} = \tilde{B}_{K}=\sqrt{T K}+\frac{K}{\alpha \mu_{0}}.
\end{align*}
The quantity $\tilde{B}_{i}$ determines the regret of \umoss with respect to arm $i$ up to constant factors, and must be chosen to lie inside the Pareto frontier given by \citep{lattimore2015pareto}. \umoss has no guarantees for the safety constraint \eqref{eq:constraint1}. It was found to perform comparably to the highly constrained \cucb in \citet{wu2016conservative}. \umoss requires $\tilde{B}_{0}, \ldots,\tilde{B}_{K}$ as inputs and so it requires the knowledge of the mean of the baseline.


\newpage
\subsection{Table of Notations}
\label{app:notation}
\begin{table}[!th]
    \centering
    \begin{tabular}{c|c}
    \textbf{Notation} & \textbf{Definition}\\\hline\\
        $K$ & Total number of arms\\
        $T$ & Horizon \\
        $\delta$ & Probability of error\\
        $\beta_i(1:t,\delta)$ & $\sqrt{2\log(4\log_2(t+1)/\delta)/N_i(1:t)}$\\
        $B(T,\delta)$ & $16\log(4\log_2(T/\delta))$\\
        $\alpha$ & Risk Parameter \\
        $G_T$ & Total Changepoints till horizon $T$\\
        $I_s$ & arm sampled at round $s$\\
        $1:t$ & Rounds from $1$ to $t$ \\
        $\rho_g$ & $t_{c_g}:t_{c_{g+1}} - 1$ \\
        $\Delta^{opt}_{i,g}$ & $\mu_{i^*,g} - \mu_{i,g}$ for segment $\rho_g$ \\
        $\Delta^{chg}_{i,g}$ & $|\mu_{i,g} - \mu_{i,g+1}|$ for segment $\rho_g$\\
        $H^{(1)}_{i,g-1}$ & $\max\left\{\frac{1}{\Delta^{opt}_{i,g-1}}, \frac{\Delta^{opt}_{i,g-1}}{\left(\Delta^{chg}_{i,g-1}\right)^2}\right\}$\\
        $H^{(2)}_{i,g}$ & $\max_{j\in[K]^+}\frac{\Delta^{opt}_{i, g}}{\left(\Delta^{chg}_{j,g}\right)^2}$\\
        $\overline{H^{(2)}_{i,g}}$ & $\frac{\Delta^{opt}_{i, g}}{\left(\Delta^{chg}_{i,g}\right)^2}$\\
        $H^{(3)}_{i,g}$ & $\frac{\Delta^{opt}_{\max, g}}{\max\{\Delta^{opt}_{i,g}, \Delta^{opt}_{0,g} - \Delta^{opt}_{i,g}\}}$\\
        $N^{opt}_{i,g}$ & Critical number of samples for Optimality Detection at $\rho_g$\\
        $N^{chg}_{i,g}$ & Critical number of samples for Changepoint Detection at $\rho_g$\\
        $N^{bse}_{0,g}$ & Total number of samples for baseline at $\rho_g$\\
        \\\hline
    \end{tabular}
    \caption{Table of Notations}
    \label{tab:notations}
\end{table}

\end{document}